%% file: paper-arxiv.tex
\documentclass[a4]{article}

\input{packages}
\usepackage{graphicx}
\usepackage{amssymb}
\input{macros}

\usepackage{geometry}

\def\arxiv{} 
\def\ackn{} 

\ifdefined\draft
\newcommand{\note}[1]{}
\newcommand{\remove}[1]{}
\else
\newcommand{\note}[1]{\red{(Memo: #1)}} 
\newcommand{\remove}[1]{
\begingroup
\color{blue}
#1
\endgroup
}
\fi

\title{
\input{title}
}

\author[1]{Satoshi~Katayama}
\author[1]{Shoyo~Hunt}
\author[1]{Shintaro~Masuda}
\author[1]{Masayuki~Karasuyama\thanks{karasuyama@nitech.ac.jp}}
\affil[1]{Nagoya Institute of Technology}

\date{}

\newcommand{\biblstyle}{\bibliographystyle{apalike}}

\begin{document}

\maketitle

\input{abstract}
\input{main}

\end{document}

%% file: packages.tex
\usepackage{amsthm}
\usepackage{bm}
\usepackage{amsmath}
\usepackage{mathtools}
\usepackage{subfig}
\usepackage{xcolor}
\usepackage{here}
\usepackage{natbib}
\usepackage{cite}
\usepackage{url}
\usepackage{authblk}
\usepackage{listings}
\usepackage{wrapfig}

\usepackage[globalcitecopy]{bibunits}

\defaultbibliographystyle{abbrvnat}
\defaultbibliography{ref}

\usepackage[ruled,linesnumbered]{algorithm2e}
\SetKwBlock{Function}{function}{end function}



%% file: macros.tex
\newcommand{\memo}[1]{} 

\def\eqref#1{equation~\ref{#1}}

\ifdefined\submit
\newcommand{\red}[1]{#1}
\else
\newcommand{\red}[1]{\textcolor{red}{#1}}
\fi

\def\*#1{\boldsymbol{#1}}

\newtheorem{theo}{Theorem}[section]

\newtheorem{lem}{Lemma}[section]
\newtheorem{coro}{Corollary}[section]

\newtheorem{prop}{Proposition}[section]

\newcommand{\prm}[0]{\prime}
\newcommand{\tw}[0]{\textwidth}
\newcommand{\ig}[2]{\includegraphics[clip,width=#1\tw]{#2}}

\newcommand{\veps}[0]{\varepsilon}

\newcommand{\mr}[1]{\mathrm{#1}}

\newcommand{\argmax}{\mathop{\text{argmax}}}
\newcommand{\argmin}{\mathop{\text{argmin}}}
\newcommand{\eq}[1]{(\ref{#1})}

\newcommand{\lw}[1]{\smash{\lower2.ex\hbox{#1}}}

\newcommand{\RR}{\mathbb{R}}

\newcommand{\EE}{\mathbb{E}}

\newcommand{\cD}{{\mathcal D}}

\newcommand{\cG}{{\mathcal G}}

\newcommand{\cN}{{\mathcal N}}

\newcommand{\cP}{{\mathcal P}}

\newcommand{\cU}{{\mathcal U}}

\newcommand{\cX}{{\mathcal X}}

%% file: title.tex
Pre-training of Bayesian Optimization Algorithm through \\
Bayesian Optimization 


%% file: abstract.tex
\begin{abstract}
Bayesian optimization (BO) is widely used as a standard approach for expensive black-box optimization. 
However, BO algorithms often involve parameters that must be specified in advance, and their performance can strongly depend on these choices. 
We propose a framework for optimizing such parameters using sample paths drawn from a Gaussian process (GP) inferred from the information available at the start of BO.
We use cumulative regret as the performance metric for a BO algorithm. 
By running the BO algorithm on the generated sample paths, we obtain an empirical estimate of its expected cumulative regret for a given parameter configuration. 
Optimizing this estimate allows us to identify parameter configurations that, given the currently available information, are expected to achieve low cumulative regret. 
Since this parameter optimization is itself a black-box optimization problem, we employ another BO procedure to solve it, which we refer to as outer BO. 
Through experiments, we demonstrate that the proposed framework can effectively select parameter configurations that achieve strong performance among a range of candidate configurations.
\end{abstract}

%% file: main.tex
\section{Introduction}


Bayesian optimization (BO) is a widely used approach for optimizing expensive black-box functions with a limited number of function evaluations.
It has been successfully applied to a broad range of problems, including experimental design, hyperparameter optimization, and scientific and engineering design.
%
%
However, while BO automates the optimization of the objective function, BO algorithms themselves often involve various algorithmic configuration parameters that must be specified before optimization begins.
Examples include parameters controlling the exploration--exploitation trade-off in acquisition functions and design parameters of search strategies.
%
Such choices can substantially affect optimization performance.
Unlike the surrogate model, i.e., usually a Gaussian process (GP), hyperparameters, these algorithmic parameters cannot generally be estimated by the (marginal) likelihood of the surrogate and are therefore often set to default or heuristic values.


The basic question in this study is whether the same criterion used to evaluate the performance of a BO algorithm can also be used to select its configuration parameters.
We consider cumulative regret over a finite horizon as the performance criterion.
The cumulative regret with respect to the unknown true objective function is, of course, unavailable before running BO.
We assume that, at the beginning of BO, we have a probability distribution over the objective function that reflects the available initial observations and/or prior knowledge.
We use the expected cumulative regret under this distribution, i.e., the Bayes cumulative regret, as the criterion. 
By sampling objective functions from this distribution and actually running a candidate BO algorithm on each sample path, we can obtain a Monte Carlo (MC) approximation of the expected cumulative regret. 
This allows us to formulate the selection of BO algorithm parameters as an optimization problem that directly targets finite-horizon performance under the currently available knowledge about the objective function.


An important feature of this approach is that prior knowledge about the objective function is used not only for prediction by the surrogate model but also for configuring the BO algorithm itself.
In the simplest setting, the GP posterior conditioned on an initial dataset can be used as the distribution over objective functions.
When data from related past tasks are available, a GP prior pre-trained on such data can instead be incorporated.
Our framework therefore does not rely on a particular method for constructing the prior distribution; rather, it can be viewed as a general mechanism for translating an available distribution over objective functions into a configuration of a BO algorithm.
Recent work on learning to optimize and meta-learning has proposed methods that learn optimizers or acquisition policies themselves from multiple optimization tasks \citep[e.g.,][]{chen2022learning,volpp2020meta}.
In contrast, our approach retains the structure of an existing BO algorithm and tunes only a small number of its algorithmic parameters.
%
This design aims to adapt the algorithm to the target problem distribution while \emph{retaining established acquisition rules and principles developed in the BO literature}.
We discuss these connections in more detail in Section~\ref{sec:related-work}.


The MC approximation of the cumulative regret is generally a complicated black-box function of the parameter configuration, for which an analytical expression or gradient is difficult to obtain.
%
Moreover, evaluating a single configuration requires repeatedly running finite-horizon BO on multiple sample paths, making exhaustive evaluation impractical.
We therefore solve this algorithm-configuration problem using another BO procedure.
We refer to the conventional BO procedure for optimizing the objective function as \emph{inner BO}, and to the BO procedure for optimizing its algorithmic parameters as \emph{outer BO}.
This construction makes the framework applicable to arbitrary parameterized BO algorithms without requiring an analytical characterization of either their internal mechanisms or the dependence of regret on their parameters.
We further extend the framework to a multi-fidelity setting, where evaluations obtained with shorter inner-BO horizons are used as lower-fidelity observations, and to a multi-stage configuration setting, where different parameter values can be used at different stages of the optimization.


We also analyze the statistical properties of the proposed MC-based evaluation of BO algorithms.
We show that the MC estimator of cumulative regret has approximation error $O_p(T/\sqrt{N})$, where $T$ is the number of inner-BO iterations and $N$ is the number of MC samples.
When the candidate parameter configurations form a finite set, we further establish a guarantee of the same order for the difference in expected cumulative regret between the configuration selected by minimizing the MC approximation and the configuration minimizing the true Bayes cumulative regret.
These results do not depend on the specific form of the acquisition function and provide theoretical justification for our simulation-based evaluation framework.


Our main contributions are summarized as follows:
\begin{itemize}
\setlength{\itemsep}{0pt}
    \item We propose a general framework for selecting pre-specified parameters of a BO algorithm using its expected cumulative regret under a probability distribution over objective functions.
	  %
	  We approximate this criterion by running BO on sample paths and use outer BO to optimize the resulting black-box objective.
	  We further extend the framework to multi-fidelity optimization using the inner-BO horizon as a fidelity variable and to multi-stage configurations in which algorithmic parameters can change during optimization.

    \item 
	  We establish the $O_p(T/\sqrt{N})$ convergence of the MC cumulative-regret estimator.
	  For a finite candidate configurations, we further provide a guarantee on the expected cumulative regret of the configuration selected using the MC estimator.

    \item Through experiments, we demonstrate that the proposed framework can identify parameter configurations with strong finite-horizon BO performance among a set of candidate configurations.
\end{itemize}

\section{Bayesian Optimization (BO)}

\subsection{Preliminary}

We consider the maximization problem of an unknown function $f: \cX \rightarrow \RR$, whose input domain is $d$-dimensional compact space 
$\cX \subset \RR^d$, i.e., $\max_{\*x \in \cX} f(\*x)$.
%
%
An observation for $\*x_i$ is denoted as $y_i = f(\*x_i) + \varepsilon$, where $\veps$ is an observation noise that follows
$\veps \sim \cN\left(0, \sigma_{\text {noise}}^2\right)$.

In Bayesian optimization (BO), the Gaussian process (GP) is typically used to model the unknown objective function $f$, denoted as
\begin{align*}
 f \sim \cG\cP(m, k),
\end{align*}
where $m$ is a prior mean function and $k$ is a prior covariance function (kernel function), both of which are assumed to be continuous.
%
Let 
$\cD_t$
be the dataset at the $t$-th iteration of BO.
%
The posterior of $f$ is 
\begin{align*}
 f \mid \cD_t \sim \cG\cP(\mu_t, \Sigma_t),
\end{align*}
%
where 
$\mu_t(\*x) = \EE[ f(\*x) \mid \cD_t ]$
and
$\Sigma_t(\*x,\*x^\prm) = \mr{Cov}(f(\*x), f(\*x^\prm) \mid \cD_t)$
are the posterior mean and covariance functions, respectively.
%
By maximizing an acquisition function $a$, the next observation point is determined as
\begin{align*}
 \max_{\*x \in \cX} a(\*x). 
\end{align*}
%
Well known acquisition functions are for example EI (Expected Improvement) and GP-UCB (Gaussian process Upper Confidence Bound) \citep{Srinivas2010-Gaussian}.
%
Throughout the paper, we assume that the number of the maximum iterations of BO is $T$.


\subsection{Parameter Configuration of BO}

For a variety of problem settings, a lot of BO algorithms have been proposed \citep{wang2023recent}. 
However, many of BO algorithms need configurations of parameters that are required to determine before starting the optimization.  
%
Let 
$\*\theta \in \Theta \subset \RR^p$
be those parameters of a BO algorithm.
%
Examples of $\*\theta$ are as follows.

{\bf EI.} EI is often defined as the expected improvement compared with the maximum value among the observed $y_i$ in the current dataset. 
       %
       However, this threshold actually can be set arbitrary \citep{brochu2010tutorial}, which obviously has substantial effect on the results. 
       %

{\bf TuRBO.} 
TuRBO (Trust Region BO) proposed by \citep{eriksson2019scalable} is a well-known scalable BO method that performs efficient global optimization by adaptively managing multiple local trust regions (TRs), each modeled by an independent GPs.
       %
       However, for example, the sizes of TRs and their change scheduling can be tuning parameters that affect the final performance. 

{\bf LSBO.}
LSBO (Latent Space BO) \citep{gomez2018automatic} performs BO in a low-dimensional latent space learned by a generative model, enabling efficient optimization over complex structured objects (such as molecules) or a higher dimensional original space.
       %
       For the latent space embedding, variational auto-encoder (VAE) \citep{kingma2014autoencoding} is typically used.
       %
       However, the estimation in a VAE involves several design choices, such as balancing parameters of the reconstruction loss and the regularizations (such as a Gaussianity and independence in the latent space), and the dimensionality of the latent space.

%
%
Note that these parameters are configurations of a BO algorithm itself, and are not parameters of GPs, by which the standard hyper-parameter tuning by the marginal likelihood is not applicable. 
%
Because of this difficulty, these configurations are often set heuristically.
%
However, an appropriate way of a configuration selection is obviously important for maximally accelerating the optimization.
%
Algorithm~\ref{alg:inner-BO} shows the process of BO algorithm.
%
In the scenarios described above, typically, line 3-4 of the algorithm (surrogate fitting and acquisition function maximization) depends on $\*\theta$.

\begin{algorithm}[t] 
 \caption{BO algorithm} \label{alg:inner-BO}
 \Function(BO{(}$f, \*\theta, T${)}){
  \For{$i = 0, \ldots, T$}{ %
   Fit surrogate model (GP) to $\cD_i$ \\
   $\*x_t \leftarrow \argmax_{\*x \in \cX} a(\*x)$ \\
   Query observation $y_t \leftarrow f(\*x_t) + \veps$ \\
   $\cD_{t+1} \leftarrow \cD_t \cup \{(\*x_t, y_t)\}$
   } 
  } 
\end{algorithm}

\section{Proposed Method: Pre-trained BO}

\subsection{Evaluation of BO Algorithm}

%
The basic idea of our approach is that the evaluation criterion of an entire BO algorithm should be optimized to select an appropriate $\*\theta$.
%
For this purpose, we employ the cumulative regret, which is often used in empirical evaluation of BO algorithms, 
\begin{align*}
 R_T \coloneqq \sum_{t = 1}^T (f(\*x^\star) - f(\hat{\*x}_t)),
\end{align*}
where 
$\*x^\star = \argmax_{\*x \in \cX} f(\*x)$
and 
$\hat{\*x}_t$
is the recommended point at the $t$-th iteration. 
%
For evaluation of BO algorithms, typically, $\hat{\*x}_t$ is defined as the maximizer of the acquisition function, or the maximizer of the posterior mean is also often used (called inference regret). 
%
%

Our proposed method, called pre-trained BO, considers minimizing $R_T$ with respect to $\*\theta$.
%
However, for this purpose, we need to evaluate $R_T$ before starting the actual BO procedure.
%
Therefore, we use the expected cumulative regret (also called Bayes regret) measured by the initial posterior 
$f \mid \cD_0 \sim \cG\cP(\mu_0, \Sigma_0)$, 
written as
\begin{align*}
 \mr{BR}_T(\*\theta) 
 \coloneqq 
 \EE[R_T \mid \cD_0]. 
\end{align*}
%
The expectation here is about $f$, $\veps$, and other randomness contained in the BO algorithm (if exists).
%
Since $\mu_0$ and $\Sigma_0$ are estimated by the initial dataset $\cD_0$, they are available before starting the BO loop.
%
The following optimization problem characterizes the best $\*\theta$ in a sense of the expected cumulative regret defined by the current knowledge of the optimization task. 
\begin{align}
 \min_{\*\theta \in \Theta} \ \mr{BR}_T(\*\theta) 
 \label{eq:maximize-expected-regret}
\end{align}
%
The posterior $\mu_0$ and $\Sigma_0$ can be purely estimated from $\cD_0$, but richer prior knowledge can also be incorporated through the prior mean $m$ and covariance $k$. 
%
We consider the following two typical scenarios.
\begin{itemize}
 \setlength{\itemsep}{0pt}
 \item 
The first scenario is that $m$ are manually created from some expert knowledge. 
%
In some application problems, human experts sometimes have background knowledge about $f$.
%
For example, when $f$ is an expensive black-box function determined by quantum mechanics calculation, simpler explicit function from classical mechanics can be used as $m$ \citep[e.g.,][]{deringer2021gaussian}. 
 \item 
The second scenario is when prior (past) datasets are available. 
Pre-trained GP (Wang et al., 2024) is proposed for this setting, estimating $m$ and $k$ under the assumption that the past datasets share a common prior.
For example, Wang et al. (2024) demonstrate this setting in AutoML, where historical hyperparameter configurations and their performance on related tasks are used to learn a prior for a new task. 
Another plausible setting is scientific experimentation, where conditions and outcomes from related past experiments could similarly provide a prior for optimizing a new experiment.
\end{itemize}

\subsection{Monte-Calro Approximation and Optimization}

Since the expectation in 
$\mr{BR}_T$
is computationally intractable, we employ the Monte-Carlo (MC) approximation. 
%
Let 
$f^{(1)}, \ldots, f^{(N)}$
be $N$ sample paths generated from 
$\cG\cP(\mu_0, \Sigma_0)$.
%
By performing BO $T$ iterations for each of $f^{(i)}$, we can obtain a realization of $R_T$, denoted as 
$R_T^{(i)} = \sum_{t = 1}^T (f^{(i)}(\*x_{(i)}^\star) - f^{(i)}(\hat{\*x}_t))$,
where
$\*x_{(i)}^* = \argmax_{\*x \in \cX} f^{(i)}(\*x)$.
%
%
Then, our MC estimator for
$\mr{BR}_T(\*\theta)$
is
\begin{align}
 \widehat{\mr{BR}}_T(\*\theta) 
 \coloneqq \frac{1}{N} \sum_{i = 1}^N  R_T^{(i)}. 
 \label{eq:empirical-regret}
\end{align}


As a result, we solve the following optimization problem instead of \eq{eq:maximize-expected-regret}:
\begin{align}
 \min_{\*\theta \in \Theta} \ 
 \widehat{\mr{BR}}_T(\*\theta). 
 \label{eq:maximize-empirical-regret}
\end{align}
%
This objective function 
$\widehat{\mr{BR}}_T(\*\theta)$
is a complicated black-box function for which analytical representation and derivative are difficult to obtain.
%
Further, to evaluate 
$\widehat{\mr{BR}}_T(\*\theta)$
for a given $\*\theta$, we need to run a $T$-iteration BO for $N$ times, performing exhaustive or massive evaluations is not practical.
%
Therefore, we also apply BO to the optimization of \eq{eq:maximize-empirical-regret}. 
%
We call the optimization of $\*\theta$ outer BO and the optimization of $\*x$ inner BO.

Algorithm~\ref{alg:outer-BO} shows the procedure of outer BO.
%
A dataset for outer BO is a set of 
$(\*\theta, \widehat{\mr{BR}}_T(\*\theta))$, 
denoted as $\cD^{\rm outer}$.
%
By fitting a GP to $\widehat{\mr{BR}}_T(\*\theta)$, an acquisition function for outer BO
$a^{\rm outer}(\*\theta)$
can be defined.
%
By performing inner BO $N$ times, the cumulative regret $R_T^{(j)}$ for the $j$-th sample path is evaluated at line 7.
%
Although 
$\widehat{\mr{BR}}_T(\*\theta) = \frac{1}{N} \sum_{i = 1}^N R_T^{(i)}$
is optimized here, we can replace $R_T^{(i)}$ with $-\sum_{t=1}^T f^{(i)}(\hat{\*x}_t)$. 
%
This replacement only shifts  
$\widehat{\mr{BR}}_T$
with a constant 
$\frac{1}{N}\sum_{i=1}^N T f^{(i)}(\*x_{(i)}^*)$, 
and it does not affect the optimal $\*\theta$, while we can omit the calculation of $\*x_{(i)}^*$ by the replacement.
%
The same essential conclusion holds for the following theoretical analysis, regardless of which version is considered.

\begin{algorithm}[t] 
 \caption{Pre-trained BO} \label{alg:outer-BO}
  Generate $N$ sample paths $f^{(1)}, \ldots, f^{(N)}$ from $\cG\cP(\mu_0, \Sigma_0)$ \\
  Create initial $\cD^{\rm outer}$ \\
  \For{$i = 1, \ldots, T^{\rm outer}$}{ 
   Fit a GP to $\cD^{\rm outer}$ \\
   $\*\theta \leftarrow \argmax_{\*\theta \in \Theta} \ a^{\rm outer}(\*\theta)$ \\
   \For{$j = 1, \ldots, N$}{ 
     Evaluate $R_T^{(j)}$ by running $\mr{BO}(f^{(j)}, \*\theta, T)$ \\
   } 
   $\widehat{\mr{BR}}_T = \frac{1}{N} \sum_{j = 1}^N R_T^{(j)}$ \\
   $\cD^{\rm outer} \leftarrow \cD^{\rm outer} \cup \{ (\*\theta, \widehat{\mr{BR}}_T) \}$
   } 
\end{algorithm}











\subsection{Theoretical Properties}

We first consider the convergence of the MC estimator \eq{eq:empirical-regret}.
%
In general, the approximation error of the MC estimator is typically $O(1/\sqrt{N})$ under some regularity condition such as the finite variance \citep[e.g.,][]{glasserman2004monte}. 
%
Since a value of $f$ that follows a GP is not finitely bounded, it is not immediately evident whether the variance of $\mr{BR}_T$ is finite.
%
The following proposition guarantees the convergence of $\widehat{\mr{BR}}_T$: 
%
\begin{prop} \label{prop:one-shot-error}
Assume that the kernel function is continuous, $\cX$ is compact, and there exist constants $L < \infty$ and $\alpha > 0$ that satisfy 
\begin{align}
 d(\*x,\*x^\prm)
 \coloneqq
 \sqrt{
 \EE[
 \{f(\*x)-f(\*x^\prm)\}^2
 ]
 }
 \leq
 L \|\*x - \*x^\prm\|_2^\alpha
 \label{eq:metric}
\end{align}
for any sufficiently small $\| \*x - \*x^\prm \|_2$.
%
Then, for a given $\*\theta$, 
\begin{align*}
 \EE\left[
 \left.
 \left(
 \widehat{\mr{BR}}_{T}(\*\theta)
 -
 \mr{BR}_T(\*\theta)
 \right)^2
 \right|
 \mathcal D_0
 \right]
 \le
 \frac{4 C_{\cD_0} T^2}{N} 
\end{align*}
and 
\begin{align*}
 \widehat{\mr{BR}}_{T}(\*\theta)
 -
 \mr{BR}_T(\*\theta)
 =
 O_p\left( \frac{T}{\sqrt N} \right), 
\end{align*}
where 
$C_{\cD_0}
 \coloneqq
 \EE\left[
 \left.
 \left(
 \sup_{\*x \in \cX}| f(\*x) |
 \right)^2
 \right|
 \cD_0
 \right]
 < \infty$. 
\end{prop}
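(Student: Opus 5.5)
The argument has two parts. First we reduce the mean-squared error to a second moment of a single-path regret. Then we show that this second moment is finite using the metric-entropy condition \eq{eq:metric}.

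\textbf{Reduction to one path.} Fix $\*\theta$. Conditionally on $\cD_0$, the realizations $R_T^{(1)},\ldots,R_T^{(N)}$ are i.i.d. copies of $R_T$, since each is produced by an independent sample path together with independent noise and internal randomness of the BO algorithm. Each has mean $\mr{BR}_T(\*\theta)$. Provided $\EE[R_T^2\mid\cD_0]<\infty$, the standard variance identity for the sample mean gives
\begin{align*}
\EE\left[\left.\left(\widehat{\mr{BR}}_T(\*\theta)-\mr{BR}_T(\*\theta)\right)^2\right|\cD_0\right]
=\frac{\mr{Var}(R_T\mid\cD_0)}{N}
\le\frac{\EE[R_T^2\mid\cD_0]}{N}.
\end{align*}
For every $t$, whatever point $\hat{\*x}_t\in\cX$ the algorithm recommends, we have $0\le f(\*x^\star)-f(\hat{\*x}_t)\le 2\sup_{\*x\in\cX}|f(\*x)|$. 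Hence $0\le R_T\le 2T\sup_{\*x}|f(\*x)|$ pointwise, and $\EE[R_T^2\mid\cD_0]\le 4T^2C_{\cD_0}$. This bound does not depend on the acquisition rule, which is why the result holds for arbitrary $\*\theta$. It gives the first inequality.

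\textbf{The $O_p$ statement.} Apply Chebyshev's (Markov's) inequality: for any $M>0$,
\[
P\left(\left|\widehat{\mr{BR}}_T(\*\theta)-\mr{BR}_T(\*\theta)\right|>M T/\sqrt N \,\middle|\, \cD_0\right)\le 4C_{\cD_0}/M^2,
\]
which can be made arbitrarily small by taking $M$ large. This is exactly the definition of $O_p(T/\sqrt N)$.

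\textbf{Main obstacle: $C_{\cD_0}<\infty$.} Finiteness of $C_{\cD_0}$ is the only nontrivial point, together with measurability of the supremum.

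First, the condition \eq{eq:metric} is stated for the prior process, and we transfer it to the posterior. Conditioning reduces variance, so the posterior canonical metric satisfies $d_0(\*x,\*x')\le d(\*x,\*x')$. The posterior mean difference $\mu_0(\*x)-\mu_0(\*x')$ is bounded by a continuous function of $\*x,\*x'$ that vanishes on the diagonal, because $\mu_0$ is continuous (it is a finite combination of $m$ and kernel sections). So we work with the centred posterior process $g=f-\mu_0$, and $\sup_{\*x}|\mu_0(\*x)|<\infty$ by compactness.

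Next, since $d_0\le L\|\*x-\*x'\|^\alpha$ locally and $\cX\subset\RR^d$ is compact, covering numbers satisfy $N(\cX,d_0,\epsilon)\le C\epsilon^{-d/\alpha}$. Dudley's entropy integral $\int_0^{\mathrm{diam}}\sqrt{\log N(\epsilon)}\,d\epsilon$ is therefore finite. Dudley's theorem then yields a separable, continuous version of $g$ (so the supremum is well defined and measurable) with $\EE\sup_{\*x}|g(\*x)|<\infty$.

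Finally, the Borell--TIS inequality gives Gaussian concentration of $\sup|g|$ around its mean with variance proxy $\sigma^2=\sup_{\*x}\Sigma_0(\*x,\*x)<\infty$. Hence all moments are finite, in particular $\EE[(\sup|g|)^2]<\infty$. Combining this with $(a+b)^2\le 2a^2+2b^2$ and the bound on $\sup_{\*x}|\mu_0(\*x)|$ shows $C_{\cD_0}<\infty$.
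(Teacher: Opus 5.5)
Your proposal is correct and follows essentially the same route as the paper. Both arguments use the pathwise bound $0\le R_T\le 2T\sup_{\*x\in\cX}|f(\*x)|$, the i.i.d. variance identity for the sample mean, and Chebyshev for the $O_p(T/\sqrt N)$ claim, and both get $C_{\cD_0}<\infty$ from $d_0\le d$, the covering bound $N(\cX,d_0,\epsilon)=O(\epsilon^{-d/\alpha})$, Dudley's entropy integral, Borell--TIS, and $(a+b)^2\le 2a^2+2b^2$ applied to $f=\mu_0+g_0$. You also note that Dudley's theorem supplies a separable, continuous version, so that $\sup_{\*x}|f(\*x)|$ and $\*x^\star$ are well defined; this is a small point the paper leaves implicit.
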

\noindent
The proof is in Appendix~\ref{app:proof-one-shot-error}.
%
The metric $d(\*x,\*x^\prm)$ in the assumption \eq{eq:metric} is called canonical pseudo-metric \citep[e.g.,][]{adler2007random}, and Appendix~\ref{app:bounds-canonical-metric} shows Gaussian kernel and Mat{\'e}rn kernel satisfy this condition.
%
This proposition shows that the estimator $\widehat{\mr{BR}}_{T}$ has a finite variance, and, as a consequence, the stochastic order is derived as $O_p(T / \sqrt{N})$.
%
The dependence on $N$ is same as the standard MC estimator, and on $T$ is the first order.
%
In this proposition, since we do not assume any specific acquisition function of inner BO, the result is generally applicable to a variety of BO algorithms.

For the case of the finite candidate $\Theta$, proposition~\ref{prop:one-shot-error} derives the following corollary: 
%
\begin{coro} \label{coro:optimization-error}
Let $\Theta$ be a finite set of candidate parameters. 
%
Define
\[
 \*\theta^\star \in \argmin_{\*\theta \in \Theta} \mr{BR}_T(\*\theta),
 \qquad
 \widehat{\*\theta}_N
 \in
 \argmin_{\*\theta \in \Theta} \widehat{\mr{BR}}_{T}(\*\theta)
\]
as the optimizers of the true and estimated cumulative regret, respectively.
Then
\[
 \mr{BR}_T(\widehat{\*\theta}_N) - \mr{BR}_T(\*\theta^\star)
 =
 O_p\left(\frac{T}{\sqrt{N}}\right).
\]
\end{coro}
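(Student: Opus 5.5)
The plan is the standard argument for empirical risk minimization over a finite class, with Proposition~\ref{prop:one-shot-error} supplying the uniform control. Since $\widehat{\*\theta}_N$ minimizes $\widehat{\mr{BR}}_T$ over $\Theta$, we have $\widehat{\mr{BR}}_T(\widehat{\*\theta}_N) \le \widehat{\mr{BR}}_T(\*\theta^\star)$. Adding and subtracting the estimator gives
\begin{align*}
 \mr{BR}_T(\widehat{\*\theta}_N) - \mr{BR}_T(\*\theta^\star)
 &=
 \bigl\{\mr{BR}_T(\widehat{\*\theta}_N) - \widehat{\mr{BR}}_T(\widehat{\*\theta}_N)\bigr\}
 + \bigl\{\widehat{\mr{BR}}_T(\widehat{\*\theta}_N) - \widehat{\mr{BR}}_T(\*\theta^\star)\bigr\}
 + \bigl\{\widehat{\mr{BR}}_T(\*\theta^\star) - \mr{BR}_T(\*\theta^\star)\bigr\} \\
 &\le
 2 \max_{\*\theta \in \Theta} \bigl|\widehat{\mr{BR}}_T(\*\theta) - \mr{BR}_T(\*\theta)\bigr| .
\end{align*}
The left-hand side is also nonnegative by the definition of $\*\theta^\star$. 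It therefore suffices to show that $\max_{\*\theta\in\Theta}|\widehat{\mr{BR}}_T(\*\theta)-\mr{BR}_T(\*\theta)| = O_p(T/\sqrt N)$.

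We cannot apply Proposition~\ref{prop:one-shot-error} directly at the random point $\widehat{\*\theta}_N$, since $\widehat{\*\theta}_N$ depends on the same sample paths. This is why the bound is taken as a maximum over the fixed, finite set $\Theta$. For each fixed $\*\theta \in \Theta$, the proposition gives the second-moment bound $4C_{\cD_0}T^2/N$. Bounding the maximum by the sum and applying Chebyshev's inequality with a union bound over the $|\Theta|$ elements, we get for any $M>0$
\begin{align*}
 \Pr\Bigl( \max_{\*\theta \in \Theta} \bigl|\widehat{\mr{BR}}_T(\*\theta) - \mr{BR}_T(\*\theta)\bigr| > M \tfrac{T}{\sqrt N} \Bigm| \cD_0 \Bigr)
 \le
 \sum_{\*\theta\in\Theta} \frac{4 C_{\cD_0} T^2/N}{M^2 T^2/N}
 =
 \frac{4|\Theta| C_{\cD_0}}{M^2}.
\end{align*}
The right-hand side is independent of $N$ and $T$ and tends to zero as $M\to\infty$, which is exactly the $O_p(T/\sqrt N)$ statement. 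The same sample paths $f^{(1)},\dots,f^{(N)}$ are shared across all $\*\theta$ in Algorithm~\ref{alg:outer-BO}, so the estimators are dependent across $\*\theta$. The union bound needs no independence, so this dependence is harmless.

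The only point needing care is the constant: $C_{\cD_0}$ must not depend on $\*\theta$, and $|\Theta|$ must be fixed as $N$ grows. Both hold here. $C_{\cD_0}$ depends only on the posterior $\cG\cP(\mu_0,\Sigma_0)$, which the proposition already shows is finite, and $\Theta$ is a fixed finite set. Chaining the decomposition with the uniform bound then yields the claim. No real obstacle is expected beyond making the uniformity over $\Theta$ explicit.
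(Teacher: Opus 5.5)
Your proof is correct and follows the paper's argument. Both use the same three-term decomposition to bound $\mr{BR}_T(\widehat{\*\theta}_N)-\mr{BR}_T(\*\theta^\star)$ by $2\max_{\*\theta\in\Theta}|\widehat{\mr{BR}}_T(\*\theta)-\mr{BR}_T(\*\theta)|$, then a union bound over the finite $\Theta$ to make this maximum $O_p(T/\sqrt{N})$. The difference is cosmetic: you apply Chebyshev directly with the $\*\theta$-independent constant $C_{\cD_0}$ to get the explicit bound $4|\Theta|C_{\cD_0}/M^2$, whereas the paper invokes the pointwise $O_p$ statement with per-configuration constants $M_k, N_k$ and takes their maxima.
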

\noindent
The proof is in Appendix~\ref{app:proof-optimization-error}.
%
Proposition~\ref{prop:one-shot-error} only shows the MC estimation error on a fixed $\*\theta$.
This corollary guarantees the difference between 
the expected cumulative regret of the true optimal configuration
$\mr{BR}_T(\*\theta^\star)$
and 
those of achieved by estimated $\widehat{\*\theta}_N$ defined by the maximizer of the MC estimator $\widehat{\mr{BR}}_T$.
%
For a continuous parameter space, the pointwise convergence in Proposition~\ref{prop:one-shot-error} does not directly extend Corollary~\ref{coro:optimization-error}, since it requires uniform convergence over $\Theta$, i.e., 
$\sup_{\*\theta\in\Theta}
\left|
\widehat{\mr{BR}}_{T,N}(\*\theta)-\mr{BR}_T(\*\theta)
\right|
=
O_p \left(\frac{T}{\sqrt{N}}\right)$, 
for which additional regularity conditions may be required for the continuous case. 
We leave a complete analysis of the continuous case for future work.



\section{Extensions}

\subsection{Multi-fidelity Approach for Outer BO}

While our optimization target is the $T$ iteration cumulative regret \eq{eq:empirical-regret}, the cumulative regret with earlier $t < T$ iterations can be seen as a cheaper approximation of \eq{eq:empirical-regret}.
%
It is possible to regard the optimization problem \eq{eq:maximize-empirical-regret} as a multi-fidelity optimization problem, in which $\widehat{\mr{BR}}_T(\*\theta)$ is the highest fidelity function and $\widehat{\mr{BR}}_t(\*\theta)$ for $t < T$ is a lower fidelity function.
%
Therefore, we can exploit multi-fidelity BO (MFBO) to accelerate the optimization \eq{eq:maximize-empirical-regret}.

A variety of methods have been proposed for MFBO \citep[e.g.,][]{Huang2005-Sequential,kandasamy2016gaussian,Takeno2020-Multifidelity}, in which a pair of an input point and its fidelity level is selected at each iteration.
%
Here, we consider the case that there exists only one low fidelity level is defined by the $T_{\rm{low}} < T$ iteration inner BO (actually, it is possible to create any $H$ fidelity levels
$T_1 < T_2 < \cdots < T_H = T$).
%
In a typical setting of MFBO, a correlated multi-output GP \citep{alvarez2012kernels} is constructed to capture the relation among different fidelity levels.
%
In our case, a two-output GP represents $\widehat{\mr{BR}}_t(\*\theta)$ for $t = T_{\rm low}$ and $T$. 
%
MFBO can select a pair of 
$\*\theta$
and its high ($t = T$) or low ($t = T_{\mr{low}}$) fidelity level based on an MFBO acquisition function. 

%
%

\subsection{Optimizing Multi-Stage Configurations}
\label{ssec:multi-stage}

We have set a fixed $\*\theta$ during inner BO.
%
On the other hand, $\*\theta$ can be changed during inner BO, by which altering the search strategy at some iteration becomes possible.
%
For brevity, we here assume that we change $\*\theta$ only once at the iteration $T_{\mr{middle}} (< T)$. 
%
Let 
$\*\theta_{(1)}$
be the parameter of earlier $T_{\mr{middle}}$ iterations and 
$\*\theta_{(2)}$
be that of later iterations from $T_{\mr{middle}}+1$ to $T$.
%
Then, by re-defining 
$\*\theta = (\*\theta_{(1)}^\top, \*\theta_{(2)}^\top)^\top$,
we can optimize both parameters simultaneously by minimizing
$\widehat{\mr{BR}}_T(\*\theta)$
though outer BO. 
%
In this case, MFBO can also be applied by regarding 
$\widehat{\mr{BR}}_{T_{\mr{middle}}}(\*\theta)$
and
$\widehat{\mr{BR}}_T(\*\theta)$
as low and high fidelity, respectively (i.e., $T_{\mr{low}} = T_{\mr{middle}}$). 
%
However, unlike $\widehat{\mr{BR}}_T(\*\theta)$,
$\widehat{\mr{BR}}_{T_{\mr{middle}}}(\*\theta)$ only depends on 
$\*\theta_{(1)}$
and 
does not depend on 
$\*\theta_{(2)}$.
%
Thus, the lower fidelity function can be written as
$\widehat{\mr{BR}}_{T_{\mr{middle}}}(\*\theta_{(1)})$, i.e., low and high fidelity function have different input dimensions. 
%
To apply MFBO, this relation should be incorporated into the surrogate model.

We use a model inspired by the autoregressive co-kriging model \citep{kennedy2000predicting}. 
%
The following definition can handle low and high fidelity functions having different input dimensions, while modeling their correlated structure:
\begin{align*}
 \widehat{\mr{BR}}_t(\*\theta_{(1)}) &
 \approx f^{(\mr{low})}(\*\theta_{(1)}),
 \\
 \widehat{\mr{BR}}_T(\*\theta) &\approx f^{(\mr{high})}(\*\theta_{(1)},\*\theta_{(2)}) 
 \! \coloneqq \!
 f^{(\mr{low})}(\*\theta_{(1)}) + g(\*\theta_{(1)},\*\theta_{(2)}),
\end{align*}
where $f^{(\mr{low})}$ and $g$ follow independent GPs and $\rho$ is a .
%
%
%
This model enables us to evaluate the posterior from the dataset containing both 
$\widehat{\mr{BR}}_{T_{\mr{middle}}}(\*\theta_{(1)})$
and
$\widehat{\mr{BR}}_T(\*\theta)$.
See Appendix~\ref{app:MFBO-MS-config} for detail.

\section{Related Work}
\label{sec:related-work}

\paragraph{Learning optimization policies.}
Our work is closely related to learning to optimize (L2O) and meta-learning, where optimization algorithms are learned from a collection or distribution of optimization problems.
L2O aims to learn optimization procedures from their performance on training problems so that they generalize to new problems drawn from a similar distribution \citep{chen2022learning}.
For black-box optimization, \citet{chen2017learning} learn a derivative-free optimizer parameterized by a recurrent neural network using training functions sampled from function distributions, including GPs, while \citet{tv2019metalearning} similarly learn a recurrent black-box optimizer with a regret-based meta-objective.
More specifically in the context of BO, MetaBO \citep{volpp2020meta} meta-learns an acquisition function using reinforcement learning on related tasks, yielding a BO policy specialized to a given task distribution, while FSAF \citep{hsieh2021reinforced} combines reinforcement learning and meta-learning to learn acquisition strategies that can rapidly adapt to new black-box functions.
Related ideas have also been explored more broadly in meta-black-box optimization, where the search strategy itself is learned across optimization problems.
For example, \citet{lange2023discovering} meta-learn evolution strategies parameterized by attention-based models, and more recent work has investigated pre-training optimization models for transfer to unseen black-box optimization tasks \citep{li2024pretrained} and learning neural optimizers for black-box optimization under limited evaluation budgets \citep{li2025b2opt}.
Our approach shares with these methods the general principle of exploiting a distribution of optimization problems to improve optimization performance on future tasks.
However, rather than learning an optimizer or acquisition policy itself, we retain the structure of an established BO algorithm and optimize only its pre-specified algorithmic parameters.
%
%
By restricting the search to a structured class of established BO algorithms, our approach is designed to leverage knowledge accumulated in the BO literature, including established acquisition rules and their principles for balancing exploration and exploitation, while still allowing adaptation to the target problem distribution.

\paragraph{Learning distributions over objective functions.}
A complementary line of work uses data from related tasks to learn surrogate models or prior distributions that can transfer information to new optimization tasks.
HyperBO \citep{wang2024pretrain} pre-trains a GP prior using data from related functions and applies the learned prior to BO on new tasks.
More generally, learned stochastic-process models such as Neural Processes \citep{garnelo2018neural} learn distributions over functions from collections of related functions and condition their predictions on observations from a new task.
While these approaches focus on constructing a transferable surrogate model or distribution over objective functions from past data, our work addresses how to configure a BO algorithm given such a distribution.
The two directions are therefore complementary.
For example, a pre-trained GP prior can directly serve as the distribution over objective functions in our framework, from which sample paths are generated to evaluate and tune BO parameters.
More generally, our framework can be combined with probabilistic surrogate models from which suitable function samples can be generated.


\begin{figure*}[t]

\subfloat[][$(d, \text{initial data}) = (5, \text{Sphere})$]{
\ig{.30}{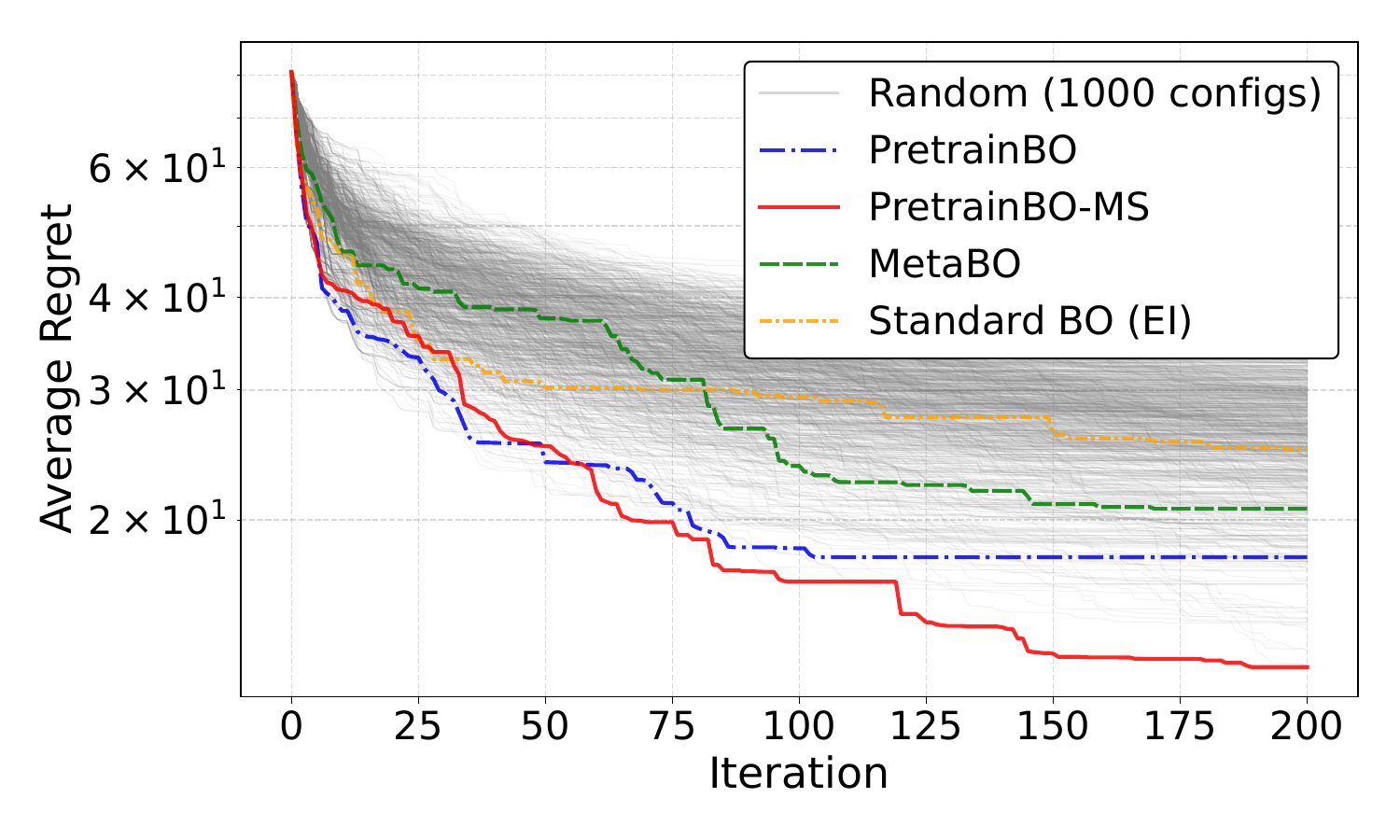}}
\subfloat[][$(d, \text{initial data}) = (10, \text{Sphere})$]{
\ig{.30}{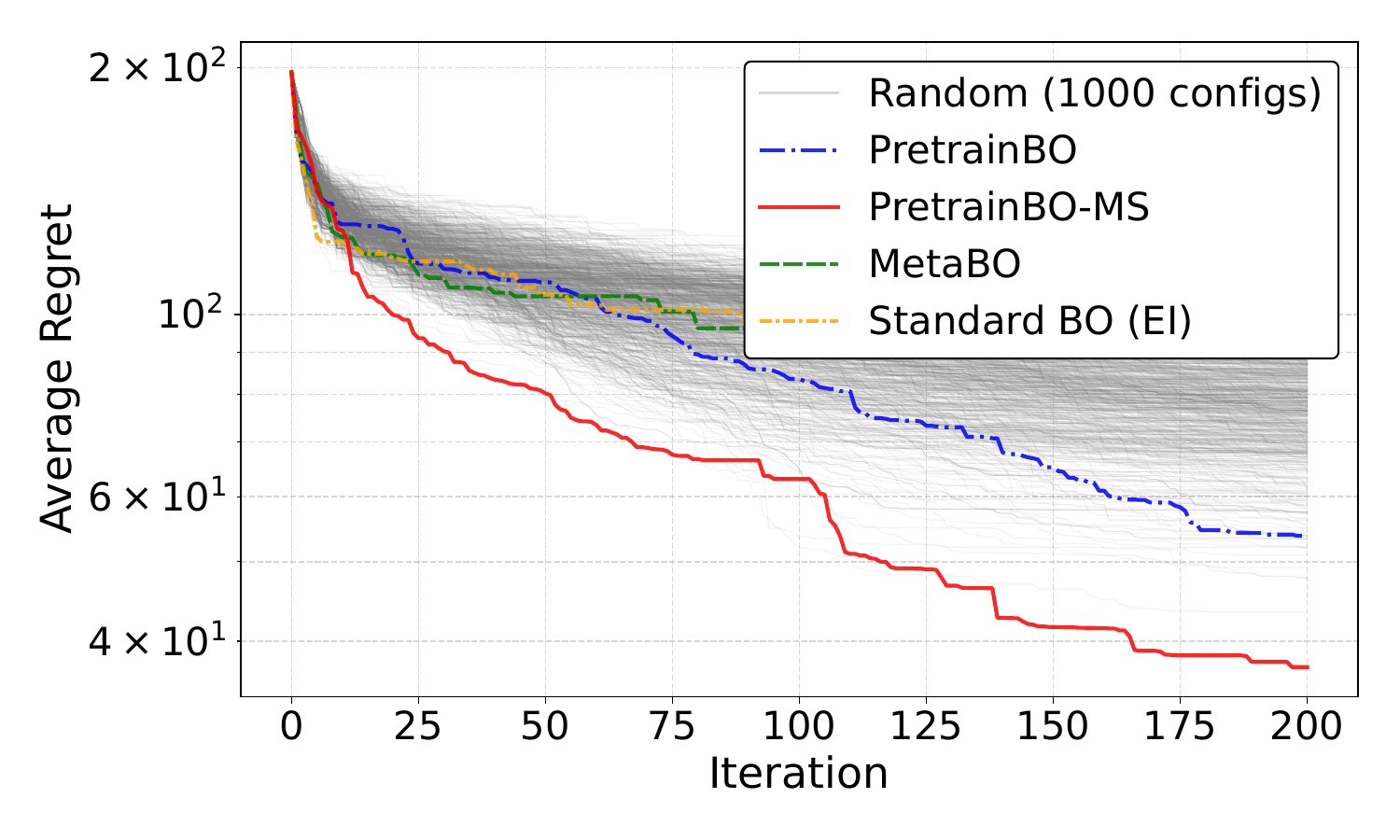}}
\subfloat[][$(d, \text{initial data}) = (20, \text{Sphere})$]{
\ig{.30}{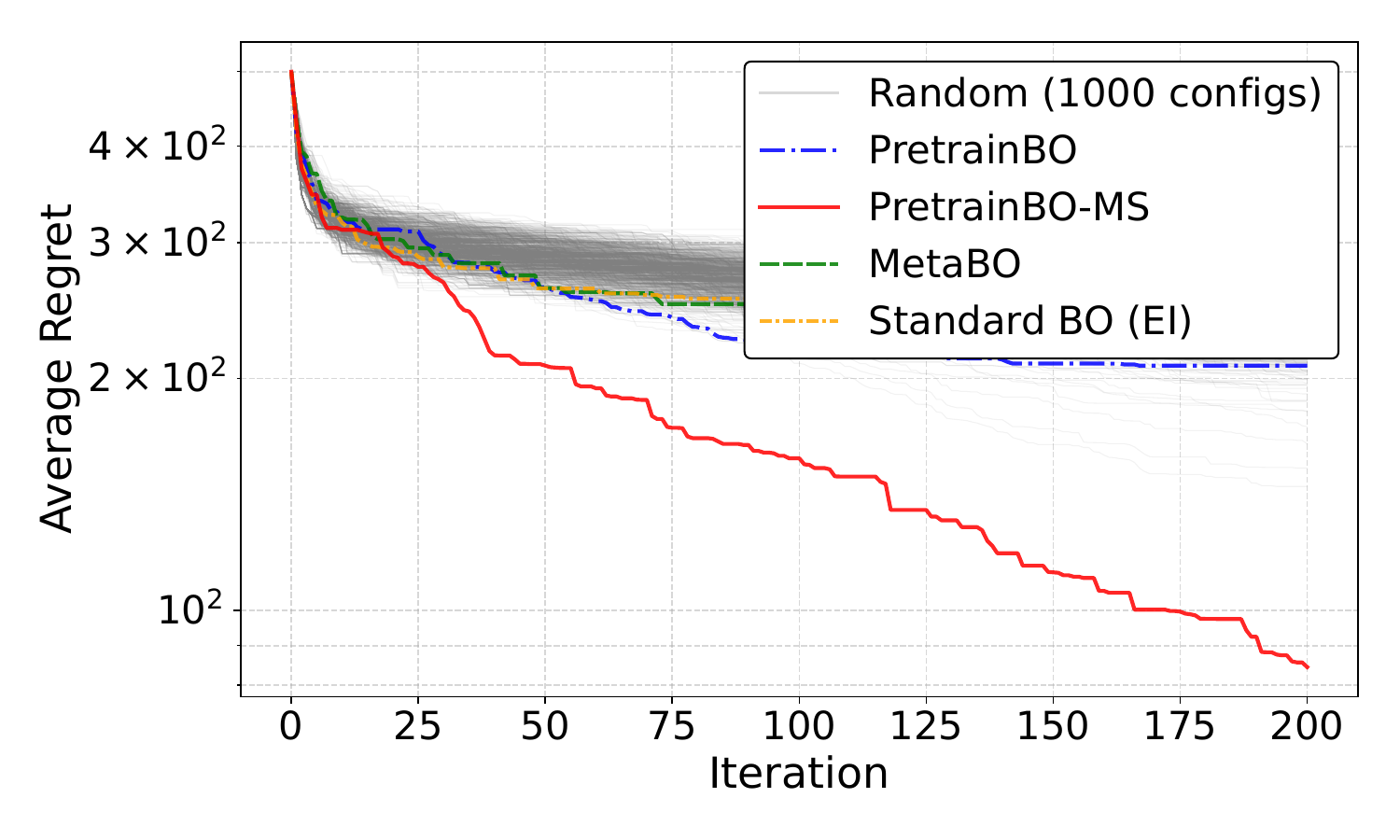}}

\subfloat[][$(d, \text{initial data})  \! \! =  \! \! (5, \text{Rosenbrock})$]{
\ig{.30}{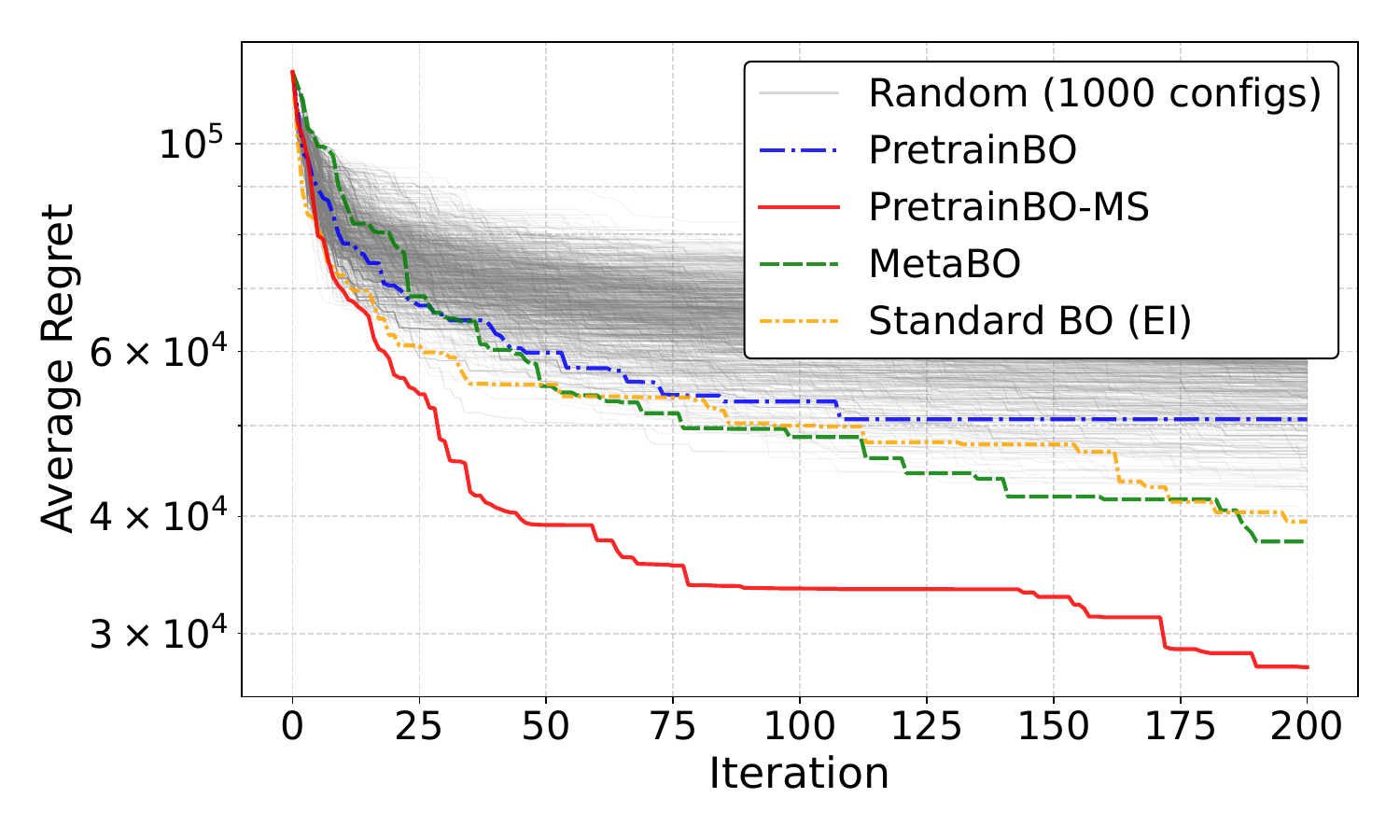}}
\subfloat[][$(d, \text{initial data}) \! \! = \! \! (10, \text{Rosenbrock})$]{
\ig{.30}{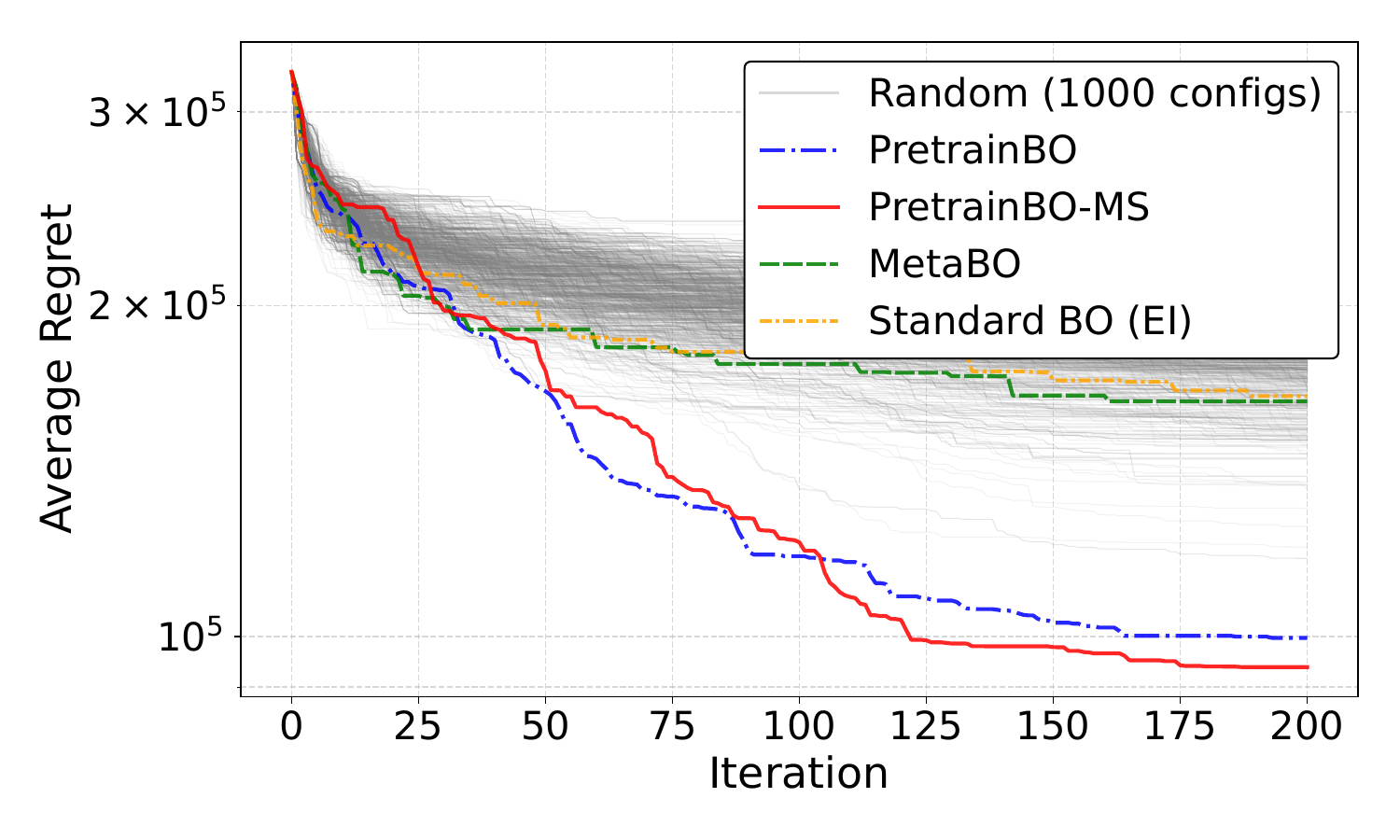}}
\subfloat[][$(d, \text{initial data})  \! \! =  \! \! (20, \text{Rosenbrock})$]{
\ig{.30}{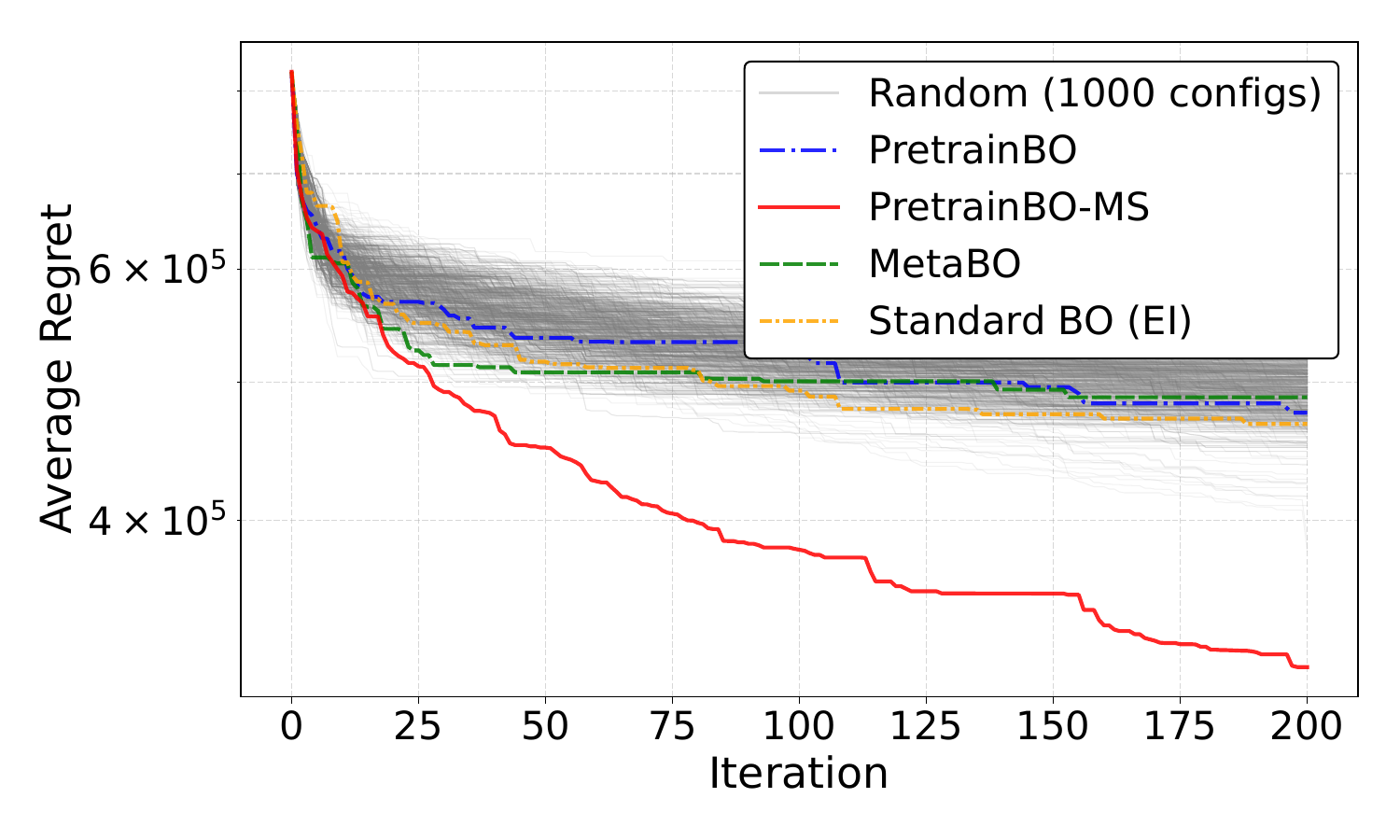}}

\subfloat[][$(d, \text{initial data}) = (5, \text{Levy})$]{
\ig{.30}{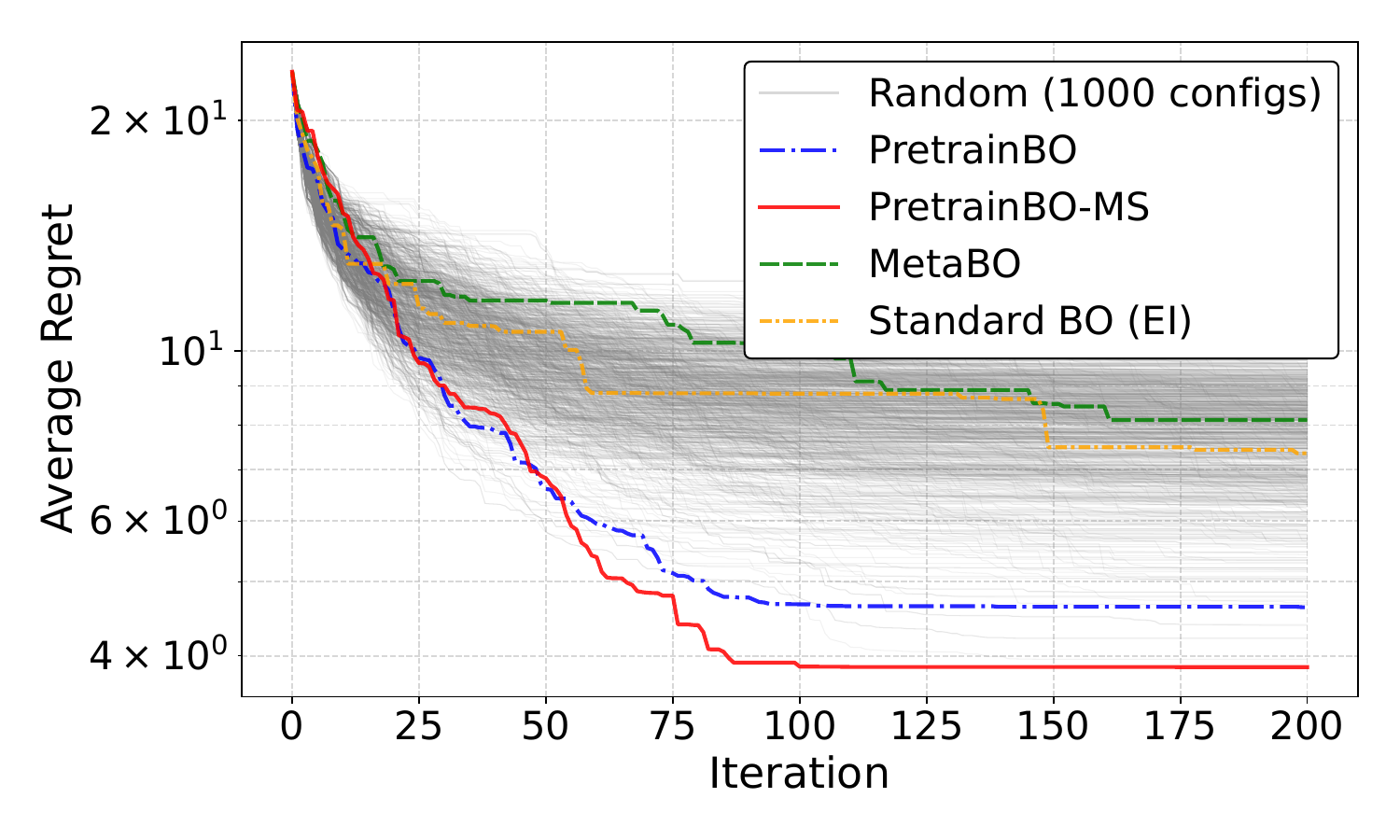}}
\subfloat[][$(d, \text{initial data}) = (10, \text{Levy})$]{
\ig{.30}{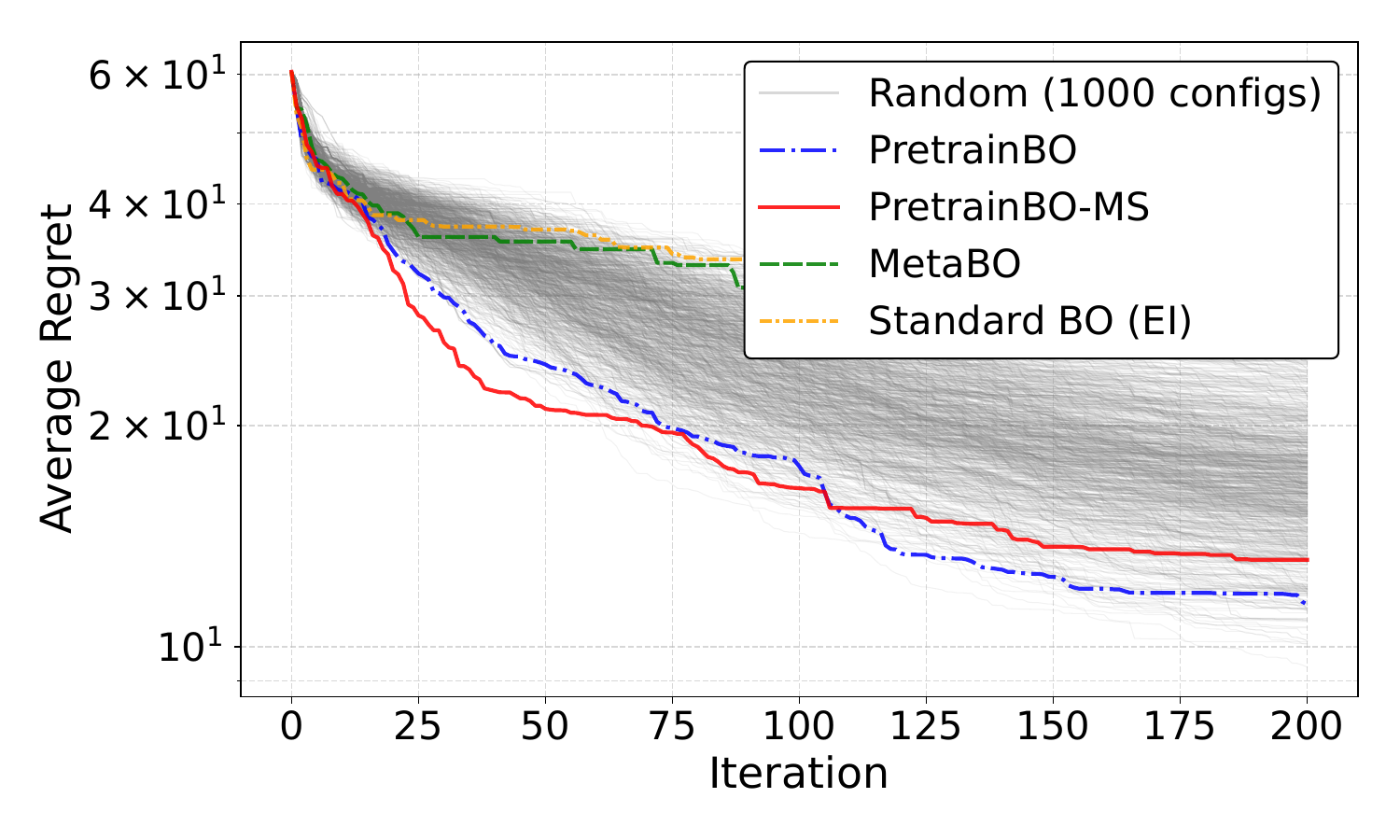}}
\subfloat[][$(d, \text{initial data}) = (20, \text{Levy})$]{
\ig{.30}{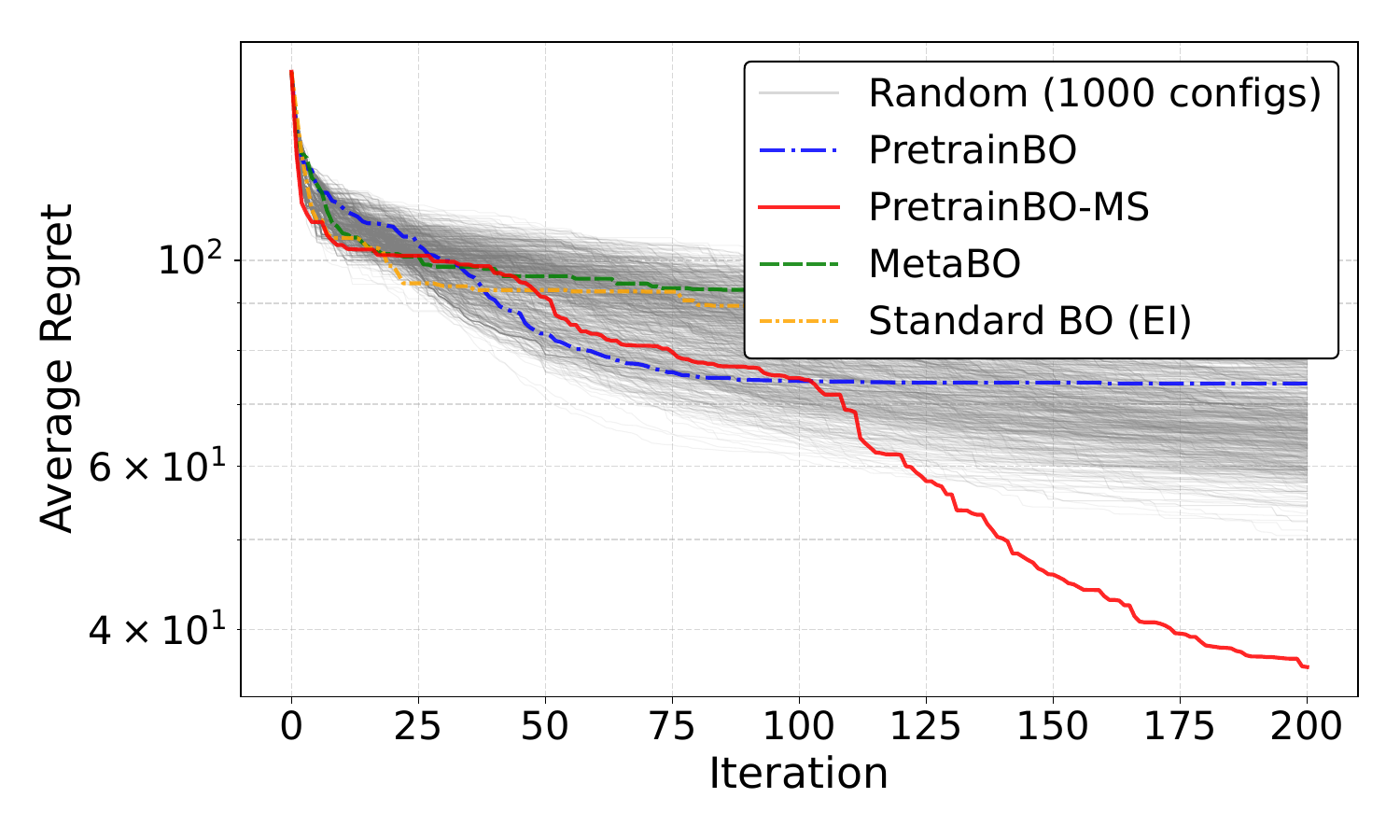}}

\caption{
Comparison of test function performance.
}
\label{fig:test-BO-trajectory}

\end{figure*}

\section{Experiments}
\label{sec:experiments}

%

We empirically evaluate the performance of the proposed framework.
We refer to the proposed method as \emph{Pretrained BO} and to its multi-stage extension described in Section~\ref{ssec:multi-stage} as \emph{Pretrained BO-MS}.
For MFBO in outer BO, we set $T_{\mr{low}} = T / 2$ and use MFSKO \citep{Huang2005-Sequential} as an acquisition function.
We use the configuration of TuRBO~\citep{eriksson2019scalable} and LSBO~\citep{gomez2018automatic} for demonstrating the performance of our framework.







\begin{figure*}[t]

\subfloat[][
Pretrained BO \\
$\qquad (d, \text{initial data}) = (10, \text{Sphere})$
]{
\ig{.30}{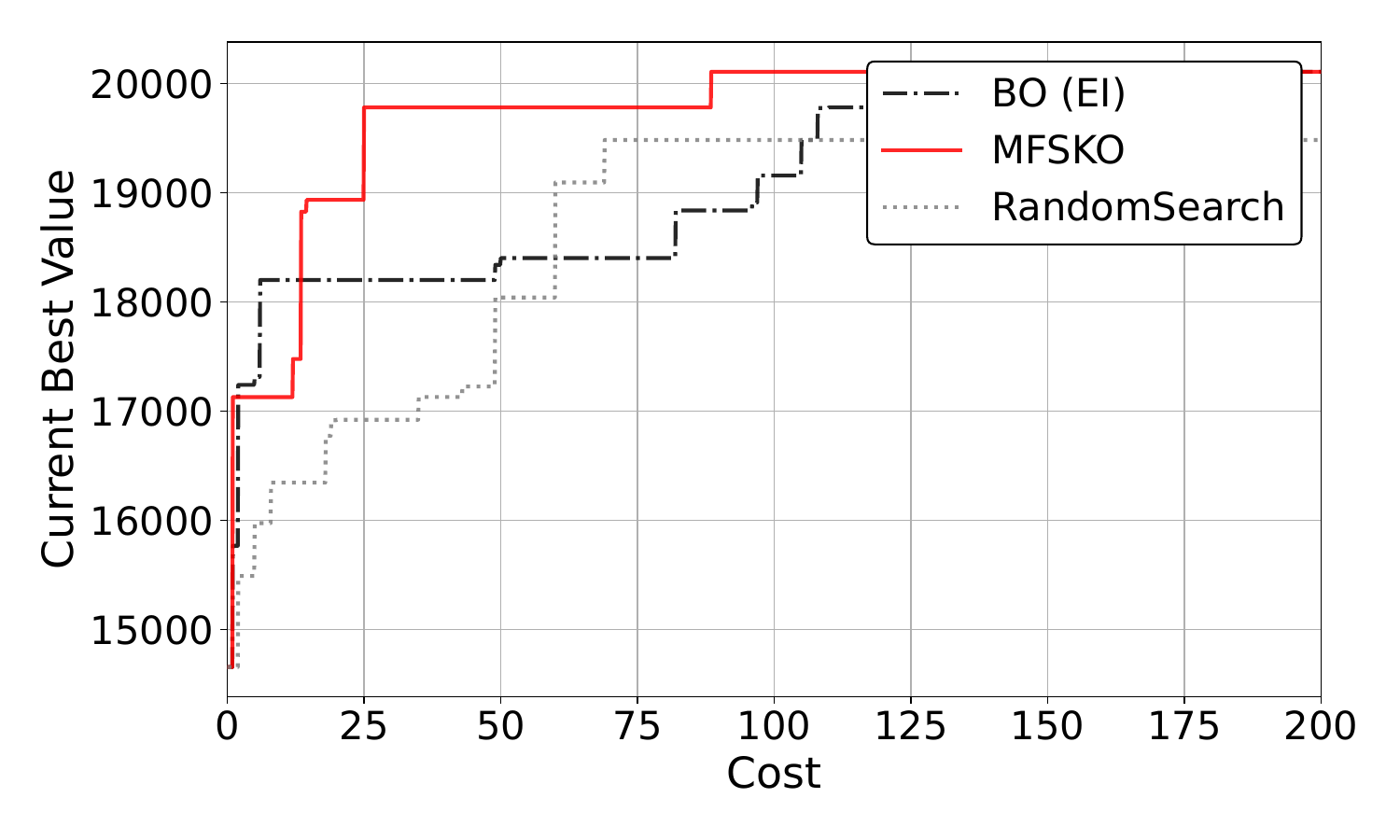}}
\subfloat[][
Pretrained BO \\
$\quad (d, \text{initial data}) = (10, \text{Rosenbrock})$
]{
\ig{.30}{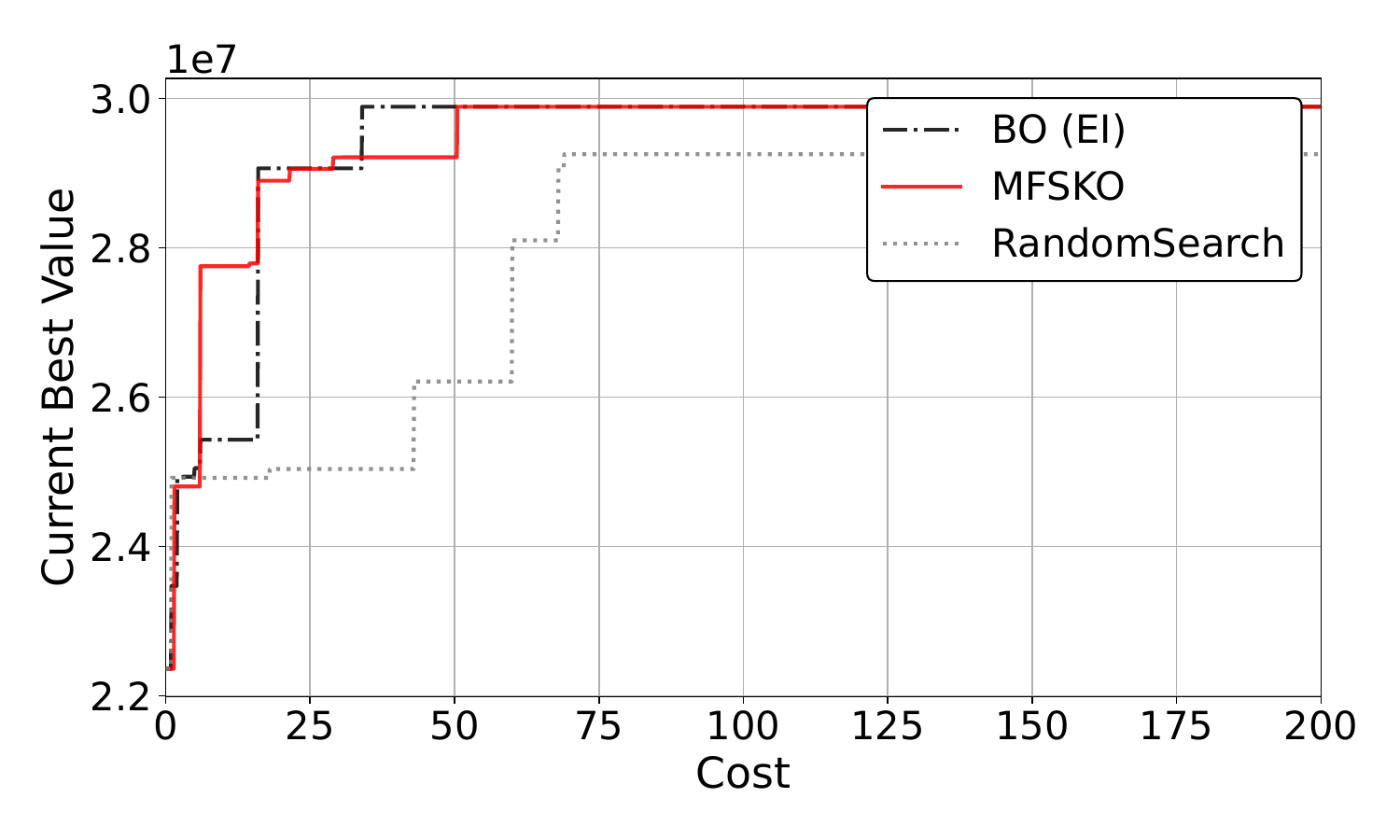}}
\subfloat[][
Pretrained BO \\
$\qquad (d, \text{initial data}) = (10, \text{Levy})$
]{
\ig{.30}{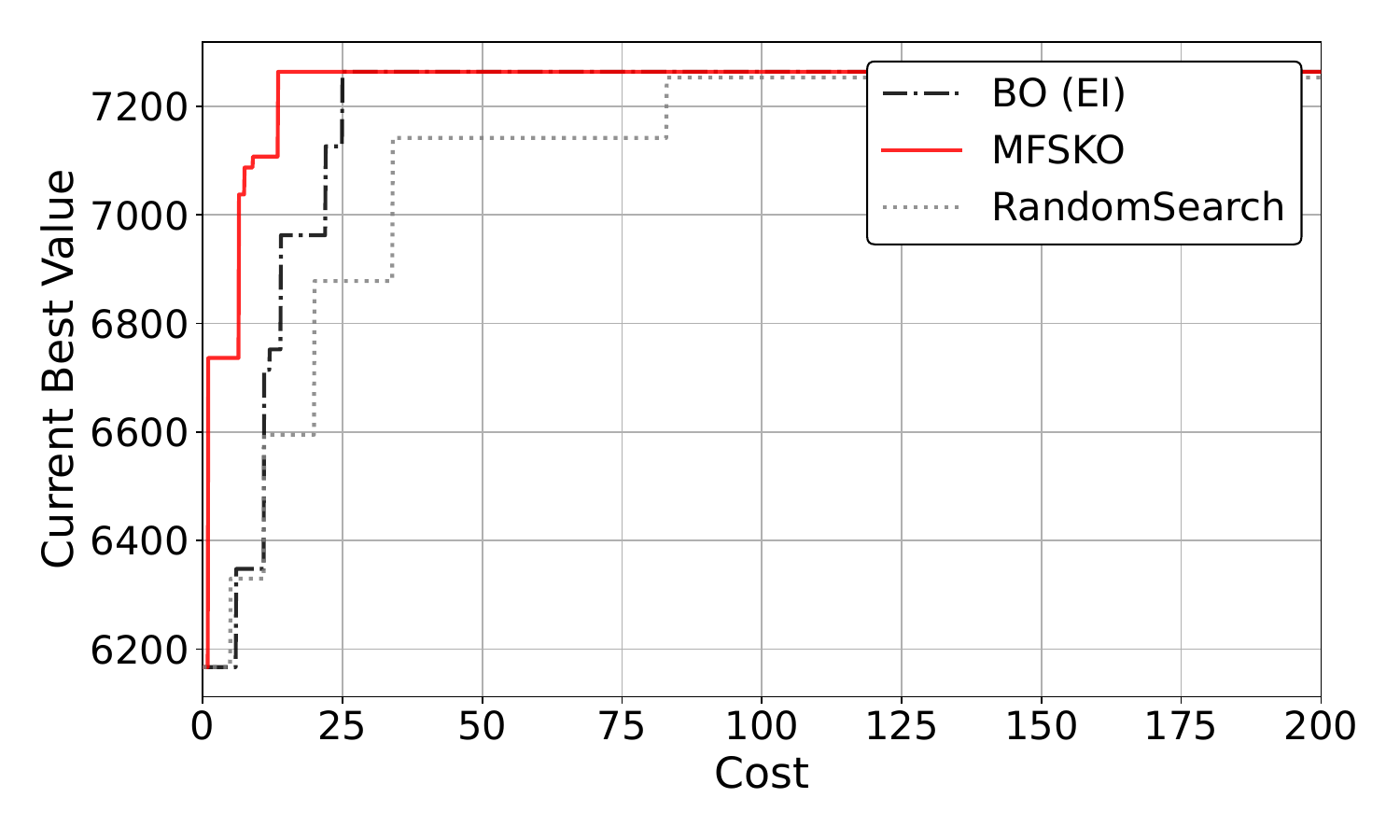}}

\subfloat[][
Pretrained BO-MS \\
$\qquad (d, \text{initial data}) = (10, \text{Sphere})$
]{
\ig{.30}{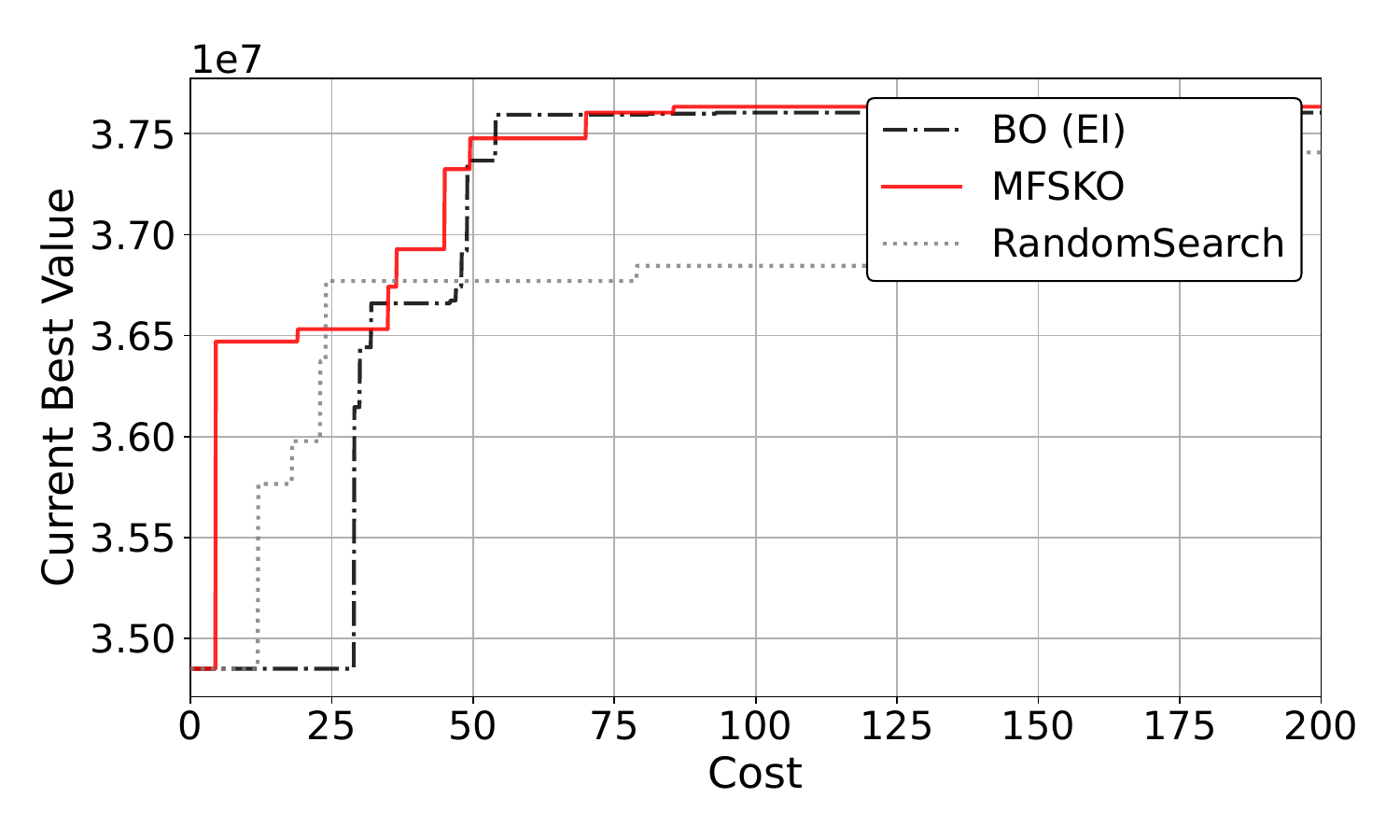}}
\subfloat[][
Pretrained BO-MS \\
$\quad (d, \text{initial data}) = (10, \text{Rosenbrock})$
]{
\ig{.30}{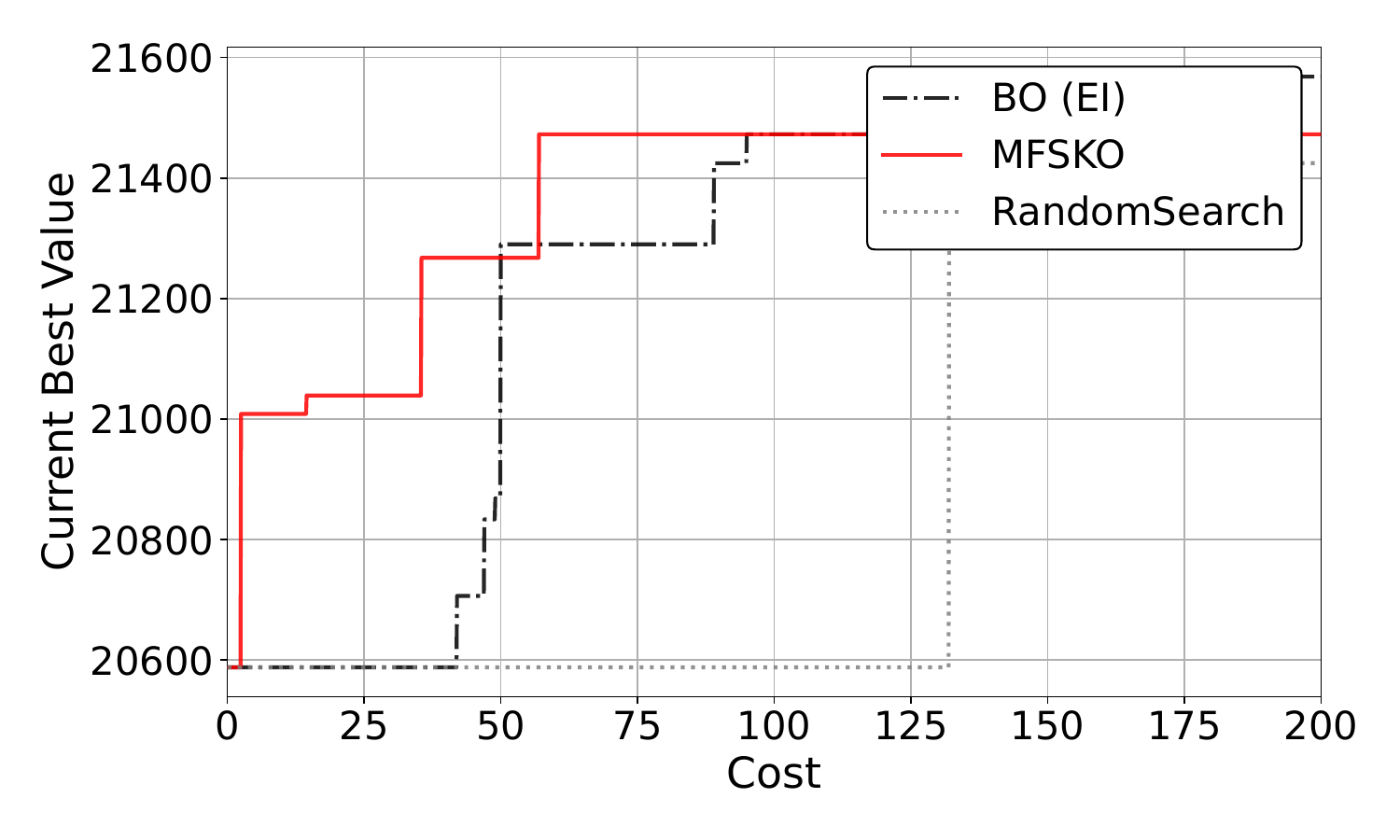}}
\subfloat[][
Pretrained BO-MS \\
$\qquad (d, \text{initial data}) = (10, \text{Levy})$
]{
\ig{.30}{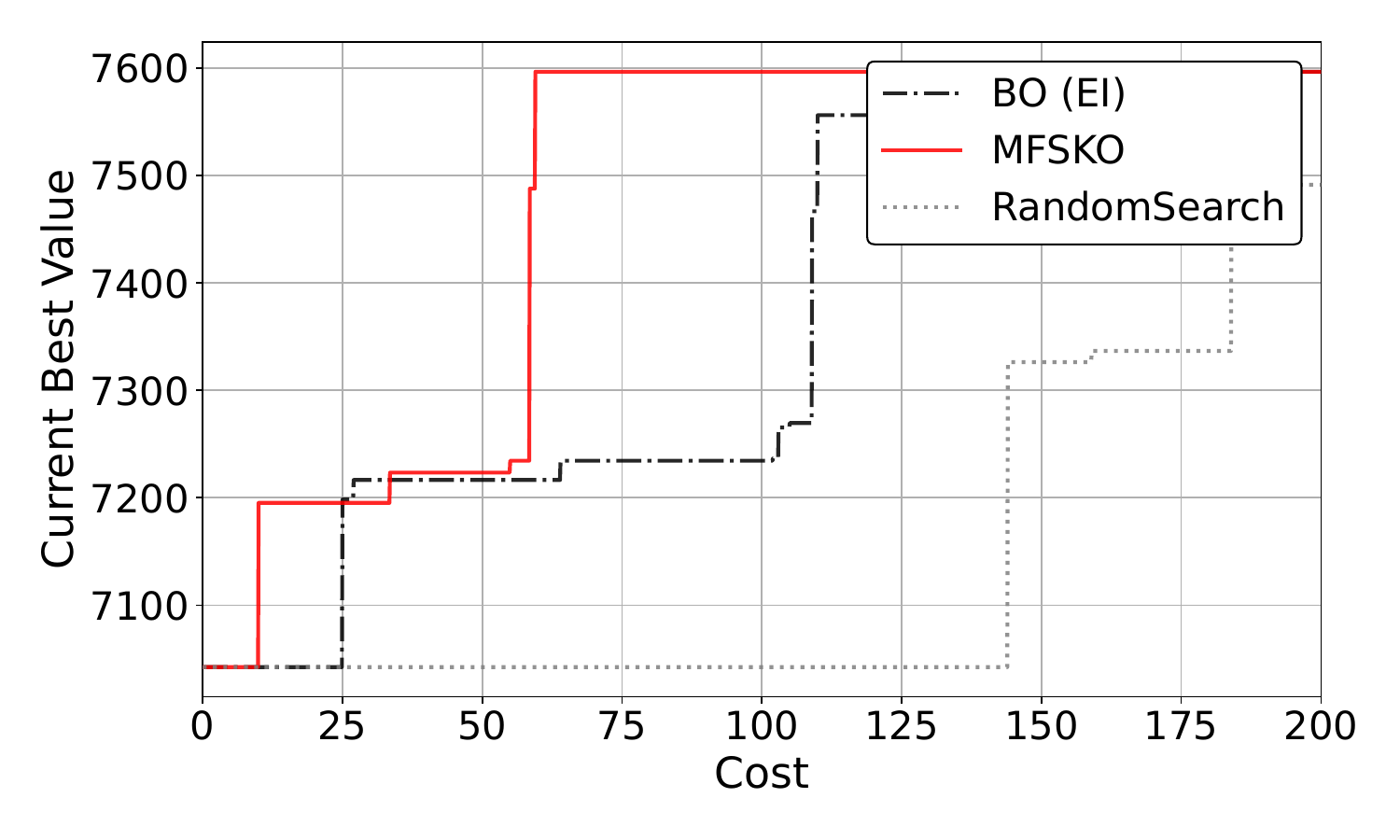}}


 \caption{%
 Transition of $\frac{1}{N} \sum_{t = 1}^T f^{(i)}(\*x_t)$.
 (a)-(c): Pretrained BO, (d)-(f): Pretrained BO-MS
 }
 \label{fig:train-best-score}

\end{figure*}

\begin{figure}[t]

 \centering

 \ig{.33}{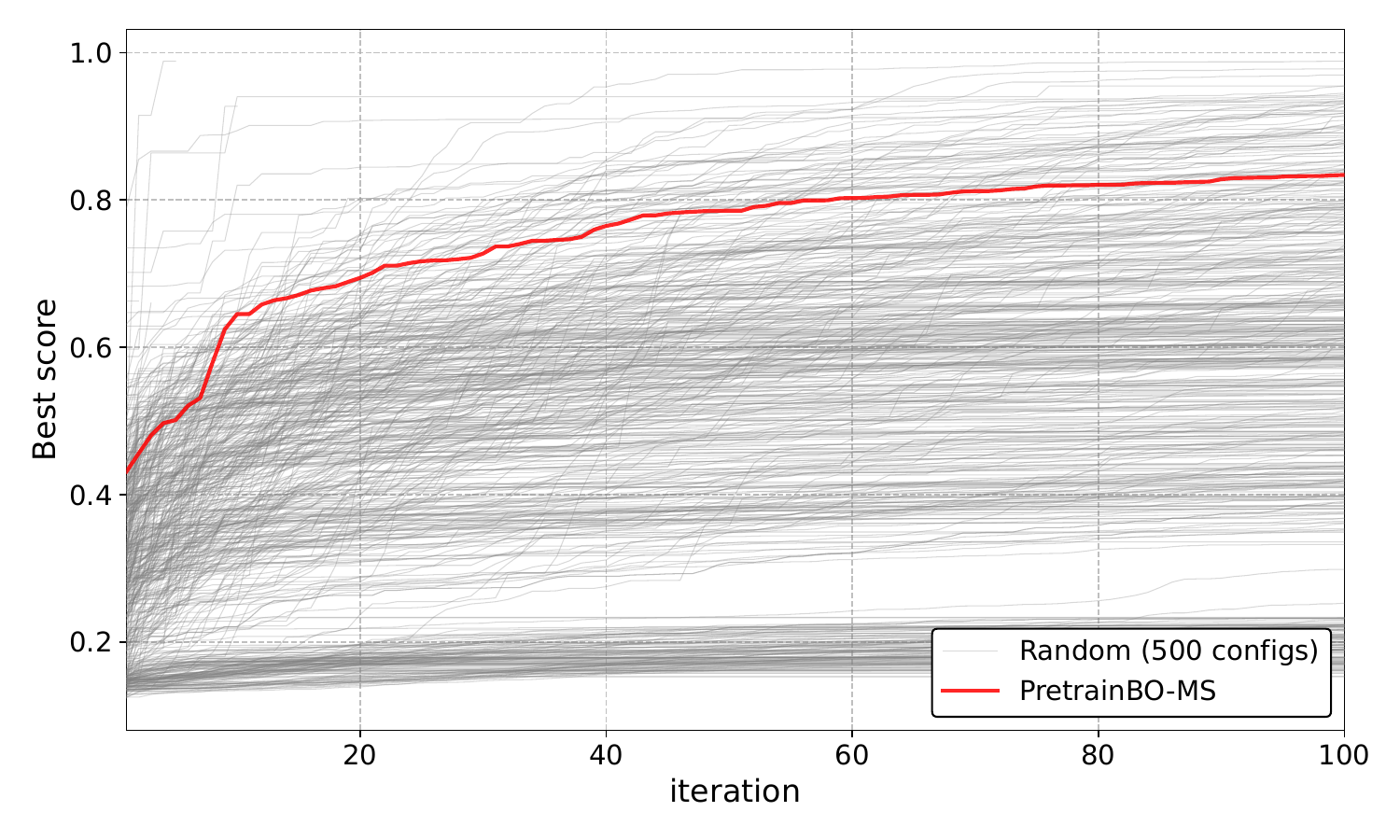}


 \caption{Score transition of LSBO.}
 \label{fig:lsbo}

\end{figure}

\subsection{TuRBO}


We first consider the optimization of GP sample-path functions using TuRBO.
For Pretrained BO, we optimize seven TuRBO parameters, including thresholds for the numbers of successes and failures and the number of trust regions.
The resulting configuration space contains 3,888 discrete candidates. 
For Pretrained BO-MS, using separate configurations for the first and second stages doubles the number of parameters.
To reduce the computational cost, we therefore restrict the configuration to three parameters for each stage.
Each stage has 105 candidate configurations, resulting in $105^2 = 11{,}025$ multi-stage candidates.
See Appendix~\ref{app:setting-turbo} for further details.

We use a zero prior mean and a Gaussian kernel for the GP, with observation-noise variance $\sigma_\epsilon^2 = 10^{-6}$.
The number of inner-BO iterations is set to $T=200$. 
We first randomly select $n_0=100$ input points from $\cX$ and construct the initial dataset by evaluating a benchmark function at these points.
We consider Sphere, Rosenbrock, and Levy as benchmark functions \citep{simulationlib}.
From the GP posterior constructed using this initial dataset, we generate both the training sample paths
$f^{(1)},\ldots,f^{(N)}$ 
used by Pretrained BO and test functions $f^{\mathrm{test}}$ used to evaluate the performance of TuRBO with the estimated configuration $\hat{\theta}_N$.
We independently generate ten test functions and report the average performance over them.
For outer BO, we also use a Gaussian kernel. 
The number of outer-BO iterations is set to
$T_{\mr{outer}}=200$.


Figure~\ref{fig:test-BO-trajectory} shows the results.
As baselines, we include MetaBO (from \url{https://github.com/boschresearch/MetaBO}, which is AGPL-3.0 license), standard BO with expected improvement (EI), and 1,000 randomly selected configurations.
MetaBO is trained using the same training functions as the proposed method and learns an acquisition function from them.
Standard BO (EI) uses the conventional EI acquisition function without pre-training.
For the random-configuration baseline, 1,000 configurations are sampled from the candidate set $\Theta$ (the multi-stage setting is not applied to this baseline).


We first compare Pretrained BO with the 1,000 randomly selected configurations.
In many settings, the black dashed curve corresponding to Pretrained BO remains below the collection of 1,000 gray curves.
Since these configurations are sampled uniformly from the candidate set $\Theta$, when the Pretrained BO curve is below all 1,000 random configurations, its selected configuration can roughly be regarded as belonging to the top $0.1\%$ of the candidate set in terms of performance.
Pretrained BO also outperforms standard BO and MetaBO in many of the settings.
Even in settings such as Figs.~\ref{fig:test-BO-trajectory} (d) and (f), where such an improvement over these existing methods is not observed, comparison with the gray curves indicates that Pretrained BO still identifies a high-performing configuration within $\Theta$.
Pretrained BO-MS yields substantial additional improvements in several settings, while in others, such as Figs.~\ref{fig:test-BO-trajectory} (e) and (h), its performance is comparable to that of Pretrained BO without the multi-stage extension.
One possible reason is that, for computational tractability, we reduced the dimensionality of the configuration space for Pretrained BO-MS.
The results with $n_0 = 50$ and $n_0 = 300$ are in Appendix~\ref{app:results-n0}, and those with std error plots are in Appendix~\ref{app:error-bars}.


Figure~\ref{fig:train-best-score} illustrates the optimization process of outer BO.
The vertical axis shows 
$\frac{1}{N} \sum_{t = 1}^T f^{(i)}(\*x_t)$
and the horizontal axis represents the total evaluation cost.
%
%
For each setting, we compare standard BO, MFBO (MFSKO), and random search for optimizing $\*\theta$ (Note that in Fig.~\ref{fig:test-BO-trajectory}, standard BO was used for Pretrained BO, whereas MFBO was used for Pretrained BO-MS. See Appendix~\ref{app:additional-detail-mfbo} for detail of MFBO setting here). 
%
Figure~\ref{fig:train-best-score} shows a general tendency for BO to search more efficiently than random search and for MFBO to further improve the search efficiency over standard BO.

\subsection{LSBO}



We next consider a synthetic latent-space optimization problem based on an image-classification CNN.
This setting is inspired by \citet{boyar2024latent}.
The CNN is pre-trained on Fashion-MNIST \citep{xiao2017fashion}, which is released under the MIT License.
We assign a score to each Fashion-MNIST class and aim to generate images with high scores.
Let $s_y$ denote the score assigned to class $y$, and let $p_1,\ldots,p_K$ denote the class probabilities predicted by the CNN, where $K$ is the number of classes.
The expected score of an image is then defined as
$\sum_{y \in [K]} p_y s_y$.

We first train a VAE to construct a 20-dimensional latent space and then perform BO in this latent space to maximize the expected score.
Specifically, BO selects a point in the latent space, the point is decoded into an image by the VAE, and the objective value is obtained by evaluating the expected score of the decoded image using the CNN.
This setup can be viewed as a simplified analogue of, for example, searching a molecular latent space for compounds with desirable properties.

We treat the VAE hyperparameters as the configuration $\*\theta$. 
Specifically, we optimize three weighting parameters controlling properties of the latent representation \citep{chen2018isolating}, such as independence, together with the dimensionality of the latent embedding, resulting in four configuration parameters.
We set the inner-BO horizon to $T=100$. 
The training functions are generated by perturbing the class scores $s_y$.
Further details are provided in Appendix~\ref{app:setting-lsbo}.

Figure~\ref{fig:lsbo} shows the results, where the vertical axis represents the expected score.
For clarity, we show only Pretrained BO-MS and the results for 500 randomly selected configurations.
The proposed method identifies a configuration with high optimization performance among the candidate configurations.

\section{Conclusion}

We proposed Pretrained BO for configuring a BO algorithm
by minimizing its expected cumulative regret estimated from GP sample
paths.
We established theoretical guarantees for the MC approximation and
extended the framework to multi-fidelity and multi-stage settings.
Experiments on TuRBO and LSBO demonstrated that the proposed method
can identify high-performing configurations.
Our results suggest that prior knowledge about objective functions can
be exploited not only for surrogate modeling but also for configuring
the BO algorithm itself.
A limitation is that 
misspecification of the estimated GP posterior may lead to suboptimal BO parameter choices. Evaluating the robustness under such model misspecification is an important direction for future work.

\clearpage

\ifdefined\ackn
\subsection*{Acknowledgement}
This work was partially supported by MEXT KAKENHI (25K03182), MEXT Supporting Pioneering Re-search through AI for 1,000 Discovery challenges Program (SPReAD) Japan Grant Number JPMXP1726275243, and Data Creation and Utilization Type Material Research and Development Project (Grant No. JP-MXP1122712807) of MEXT.
\fi

\biblstyle
\bibliography{ref}



\clearpage

\appendix

\ifdefined\arxiv
\begin{center}
 {\Large{\bf 
 Supplementary Appendix of \\
 \input{title}
 \par}}
\end{center}
\fi

\ifdefined\aistats
\thispagestyle{empty}

\onecolumn
\aistatstitle{
 Supplementary Appendix of \\
\input{title}
}
\fi

\section{Details of MFBO for Multi-Stage Configurations}
\label{app:MFBO-MS-config}

Let
\begin{align*}
 f^{\mr{low}} &\sim \cG\cP(0, k_l(\*\theta_{(1)},\*\theta_{(1)}^\prm)), \\
 g &\sim \cG\cP(0, k_g((\*\theta_{(1)}, \*\theta_{(2)}), \ (\*\theta^\prm_{(1)}, \*\theta^\prm_{(2)})) ),
\end{align*}
where $k_l$ and $k_g$ are kernel functions.
From the definition of $f^{\mr{(low)}}$ and $f^{\mr{(high)}}$, 
\begin{align*}
 \mr{Cov}(f^{\mr{(low)}}(\*\theta_{(1)}), \ f^{\mr{(low)}}(\*\theta^\prm_{(1)})) &= k_l(\*\theta_{(1)}, \*\theta^\prm_{(1)}),
 \\
 \mr{Cov}(f^{\mr{(low)}}(\*\theta_{(1)}), \ f^{\mr{(high)}}(\*\theta^\prm_{(1)}, \*\theta^\prm_{(2)})) &= k_l(\*\theta_{(1)}, \*\theta^\prm_{(1)}),
 \\
 \mr{Cov}(f^{\mr{(high)}}(\*\theta_{(1)}, \*\theta_{(2)}), \ f^{\mr{(high)}}(\*\theta^\prm_{(1)}, \*\theta^\prm_{(2)})) &=
 k_l(\*\theta_{(1)}, \*\theta^\prm_{(1)})
 +  k_g( (\*\theta_{(1)}, \*\theta_{(2)}), \ (\*\theta^\prm_{(1)}, \*\theta^\prm_{(2)}) ). 
\end{align*}
Let
$\tilde{\*\theta}$
be 
$\*\theta_{(1)}$ for the low fidelity input
and 
$\*\theta$ for the high fidelity input.
Then, the kernel for multi-stage observation is 
\begin{align*}
 k_{\mr{ms}}(\tilde{\*\theta}, \tilde{\*\theta}^\prm)
 =
 \begin{cases}
  k_l(\*\theta_{(1)}, \*\theta^\prm_{(1)})
  +
  k_g( (\*\theta_{(1)}, \*\theta_{(2)}), \ (\*\theta^\prm_{(1)}, \*\theta^\prm_{(2)}) ), 
  & \text{ if both of } \tilde{\*\theta} \text{ and } \tilde{\*\theta}^\prm \text{ are from high fidelity}\\
  k_l(\*\theta_{(1)}, \*\theta^\prm_{(1)}), & \text{ otherwise }
 \end{cases}
\end{align*}


The acquisition function of MFSKO \citep{Huang2005-Sequential}
$\alpha(\*\theta, \mr{fidelity})$
requires full 
$\*\theta$
including 
$\*\theta_{(2)}$
even when the acquisition function value for $\mr{fidelity} = \mr{low}$ is calculated.
To avoid this issue, we first optimize  
$(\bar{\*\theta}_{\mr{(1)}}, \bar{\*\theta}_{\mr{(2)}}) = \argmax_{\*\theta} \alpha(\*\theta, \mr{fidelity} = \mr{high})$.
Next, by fixing $\bar{\*\theta}_{\mr{(2)}}$, we optimize 
$\max_{\*\theta_{(1)}} \alpha([\*\theta_{\mr{(1)}}, \bar{\*\theta}_{\mr{(2)}}], \mr{fidelity} = \mr{low})$. 
The same procedure is applicable to MFMES \citep{Takeno2020-Multifidelity}.

\section{Experimental Detail}
\label{app:experimental-detail}

\subsection{Setting of TuRBO Experiment}
\label{app:setting-turbo}

In Pretrained BO, $\Theta$ is defined as follows
\begin{itemize}
 \item Success/failure tolerance $\in [1,2,3,4]$: The number of consecutive successes/failures required to expand/shrink the trust region.
 \item 
    Initial TR size $\in [0.8, 1.0, 1.2, 1.4]$: The initial side length of the trust region.
 \item 
    Maximum TR size $\in [2, 4, 8]$: The maximum allowed side length of the trust region.
 \item 
    Minimum TR size $\in [0.01, 0.1, 0.5]$: The minimum allowed side length of the trust region.
 \item 
    Number of TRs $\in [1, 3, 5]$: The number of trust regions maintained in parallel.
 \item 
    Expansion factor $\in [1.5, 2, 3]$: The factor by which the trust region is expanded after sufficient successes.
 \item 
    Shrinkage factor $\in [0.3, 0.5, 0.7]$: The factor by which the trust region is shrunk after sufficient failures.
\end{itemize}
For Pretrained BO-MS, candidates are 
success/failure tolerance $\in [1,2,3,4,5]$, 
initial TR size $\in [0.8, 1.0, 1.2, 1.4]$, and 
minimum TR size $\in [0.01, 0.1, 0.5]$.


Let $v^2$ be the variance of initial $n_0$ points.
To generate initial sample path functions $f^{(1)}, \ldots, f^{(N)}$, parameters in Gaussian kernel are that the length scale was $0.5$, the variance parameter was $0.3 \times v^2$, and the noise variance is $10^{-6} \times v^2$. 


In outer BO, parameters of (Gaussian) kernel (length scale, variance parameter, noise variance, and correlation factor $\rho$ in MFGP) are optimized by marginal log-likelihood at every iteration.



In inner BO, parameters of (Gaussian) kernel (length scale, variance parameter, and noise variance) are optimized by marginal log-likelihood at every three iterations.


To generate training functions $\hat{f}^{(1)}, \ldots, \hat{f}^{(N)}$ in the continuous space, random Fourier feature \citep{Rahimi2008-Random} was used, in which the number of basis functions was set $1,000$.

\subsection{Setting of LSBO Experiment}
\label{app:setting-lsbo}

We employ a VAE in which each component loss has its own weight parameter \citep{chen2018isolating} by using the implementation \url{https://github.com/rtqichen/beta-tcvae}.
Let $p_{\mathrm{data}}(\bm{x})$ denote the empirical data distribution,
$q(\bm{z}\mid\bm{x})$ the encoder distribution, and $\bm{z}$ the latent
representation. The aggregated posterior is defined as
\[
q(\bm{z})
=
\mathbb{E}_{p_{\mathrm{data}}(\bm{x})}
[q(\bm{z}\mid\bm{x})],
\]
with $q(z_j)$ denoting its $j$-th marginal distribution.
Following the ELBO decomposition of \citet{chen2018isolating}, we consider
the weighted objective
\begin{align}
\mathcal{L}_{\alpha,\beta,\gamma}
={}&
\mathbb{E}_{p_{\mathrm{data}}(\bm{x})q(\bm{z}\mid\bm{x})}
[\log p(\bm{x}\mid\bm{z})]
-\alpha I_q(\bm{x};\bm{z})
\nonumber\\
&-\beta
D_{\mathrm{KL}}
\left(
q(\bm{z})
\,\middle\|\,
\prod_j q(z_j)
\right)
-\gamma
\sum_j
D_{\mathrm{KL}}
\left(
q(z_j)
\,\middle\|\,
p(z_j)
\right),
\label{eq:weighted_vae}
\end{align}
where the three coefficients control distinct properties of the latent
representation:
\begin{itemize}
    \item $\alpha$ controls the mutual information penalty between the data
    and the latent representation, regulating how much information about
    the data is retained in the latent representation.
    \item $\beta$ controls the total correlation penalty, encouraging
    statistical independence among the latent dimensions and thereby
    promoting disentanglement.
    \item $\gamma$ controls the dimension-wise KL penalty, encouraging the
    marginal distribution of each latent dimension to match its corresponding
    prior.
\end{itemize}

$\Theta$ consists of $\alpha, \beta, \gamma$ and the latent dimension. 
The finite candidate set is created in such a way that
$\log_{10} \alpha, \log_{10} \beta$ and $\log_{10} \gamma$
take from $-2.0$ to $2.0$ with the uniform grid of $0.5$.
The candidate latent dimensions are $4, 8, 16, \ldots, 32$. 
Log EI is use in inner BO (LSBO).

We define the score $s_y = \exp(-2 \times (y - y_{\mr{tgt}})^2)$. 
This means that class $y_{\mr{tgt}}$ (set as $4$ in the experiments) has the highest score, and the scores of the other classes are determined by the distance measured by the class label.
%
Since the class labels in Fashion-MNIST are not ordinal, this distance has no interpretation in the image space; we define it solely to create a setting in which each class is associated with a different level of preference.
The perturbation of $s_y$ to generate training functions are defined by $y_{\mr{tgt}} + y_{\epsilon}$, where $y_{\epsilon} \sim \cU_{[-0.5,0.5]}$. 



\subsection{Additional Details of MFBO}
\label{app:additional-detail-mfbo}

In Fig.~\ref{fig:train-best-score}, we evaluate BO and MFBO in Pretrained BO and Pretrained BO-MS, respectively.
In Pretrained BO, standard multi-output GPs are applicable, because it shares the same input dimensions among different fidelities unlike Pretrained BO-MS.
On the other hand, in Pretrained BO-MS, we need to use our model derived in Section~\ref{ssec:multi-stage}, in which different input dimensions can be handled.

%
%
In Pretrained BO, this time, we employed linear model of coregionalization (LMC) \citep{alvarez2012kernels}. 
Let $\*K(\*\theta,\*\theta^\prm) \in \RR^{2 \times 2}$ be the covariance matrix of 
$\*\theta$ with low fidelity and $\*\theta^\prm$ with high fidelity.
%
In LMC, it is set as
\[
  \*K(\*\theta,\*\theta^\prm) = \sum_{q = 1}^Q \*B_q k_q(\*\theta,\*\theta^\prm),
\]
where 
$\*B_q \in \RR^{2 \times 2}$ 
is the coregionalization matrix.
This means that the latent kernel $k_q$ is shared across different fidelities.
Specifically, we use the form
$\*B_q = \*w_q \*w_q^\top + \mr{diag}(\*k_q)$, 
where $\*w_q \in \RR^2$ and $\*k_q \in \RR^2$ are parameters, which we estimate though marginal log-likelihood (this is rank-1 setting of IndexKernel of GPyTorch).
%

\section{Other Results}
\label{app:other-result}

\subsection{Results on Different $n_0$}
\label{app:results-n0}

Figures~\ref{fig:test-BO-trajectory-50} and \ref{fig:test-BO-trajectory-300} are the same setting as Fig.~\ref{fig:test-BO-trajectory}, except for that their initial points are $50$ and $300$.

\begin{figure*}

\centering
 
\subfloat[][$(d, \text{initial data}) = (5, \text{Sphere})$]{
\ig{.3}{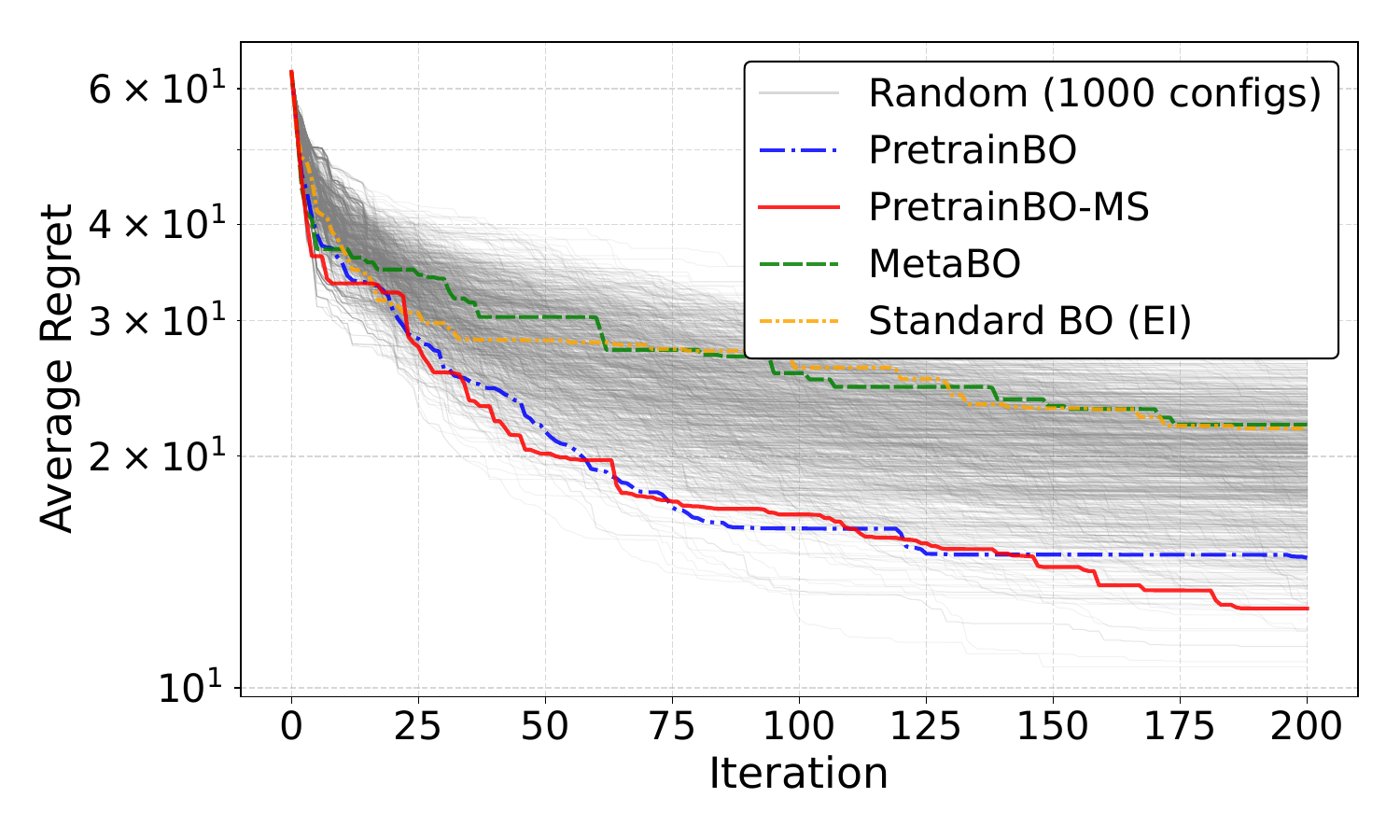}}
\subfloat[][$(d, \text{initial data}) = (10, \text{Sphere})$]{
\ig{.3}{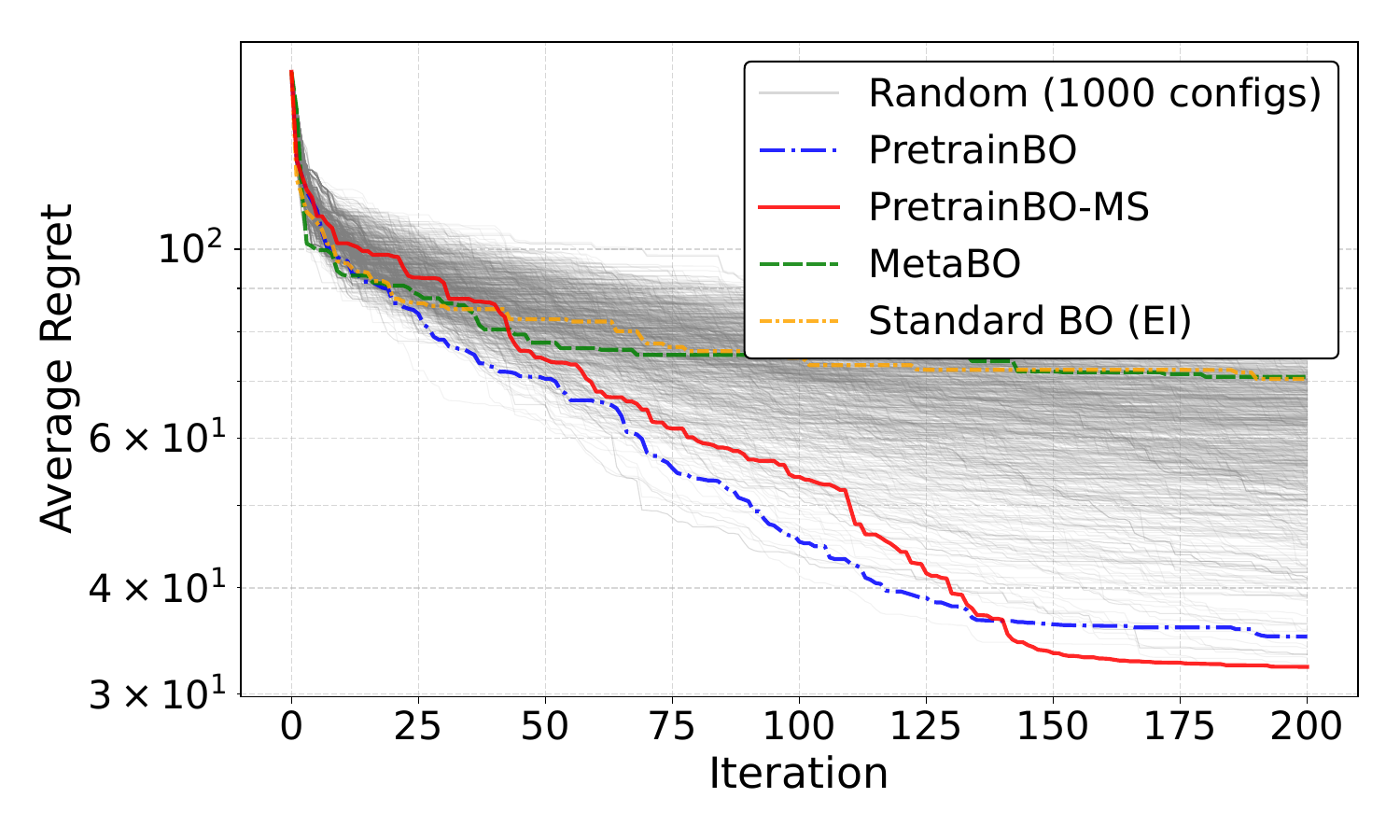}}
\subfloat[][$(d, \text{initial data}) = (20, \text{Sphere})$]{
\ig{.3}{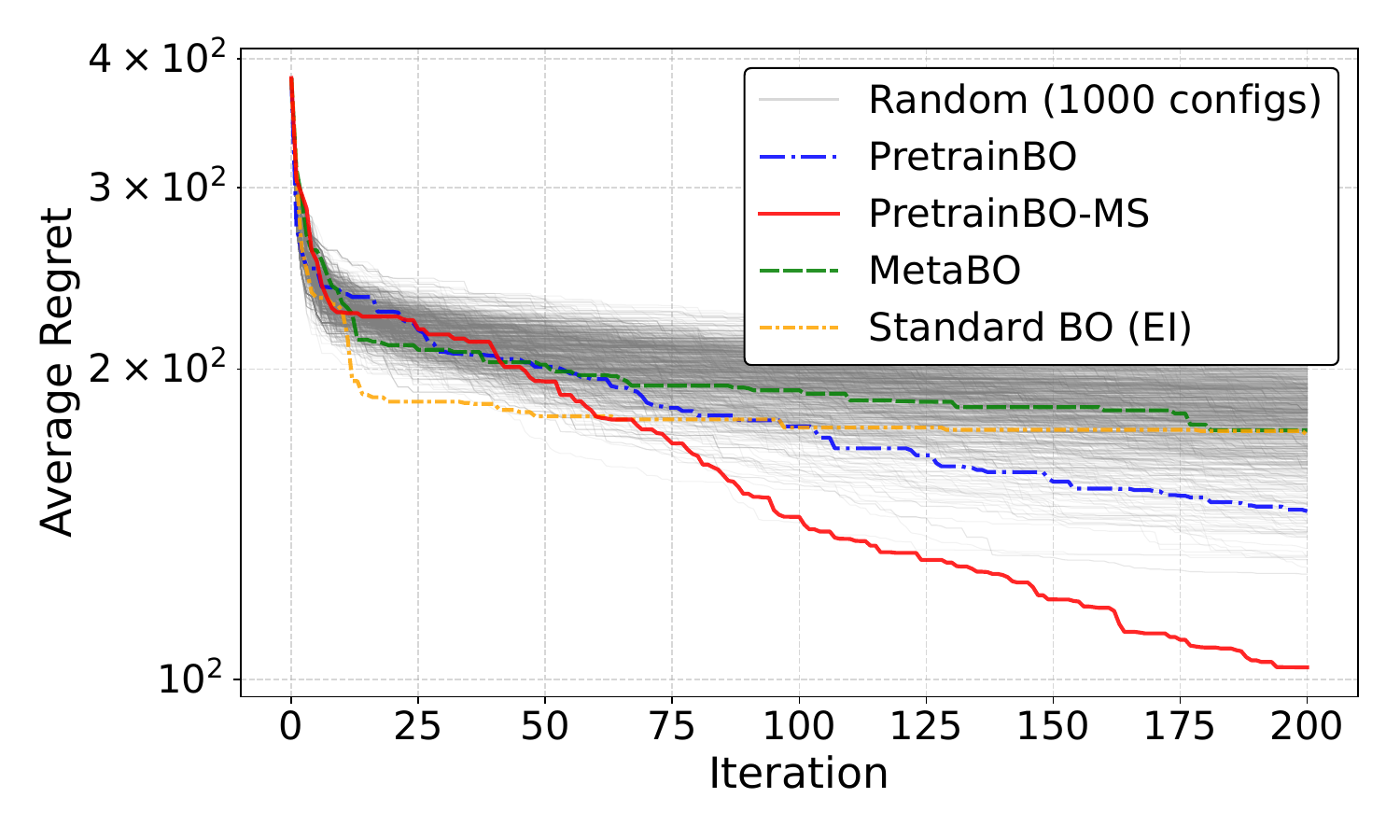}}

\subfloat[][$(d, \text{initial data})  \! \! =  \! \! (5, \text{Rosenbrock})$]{
\ig{.3}{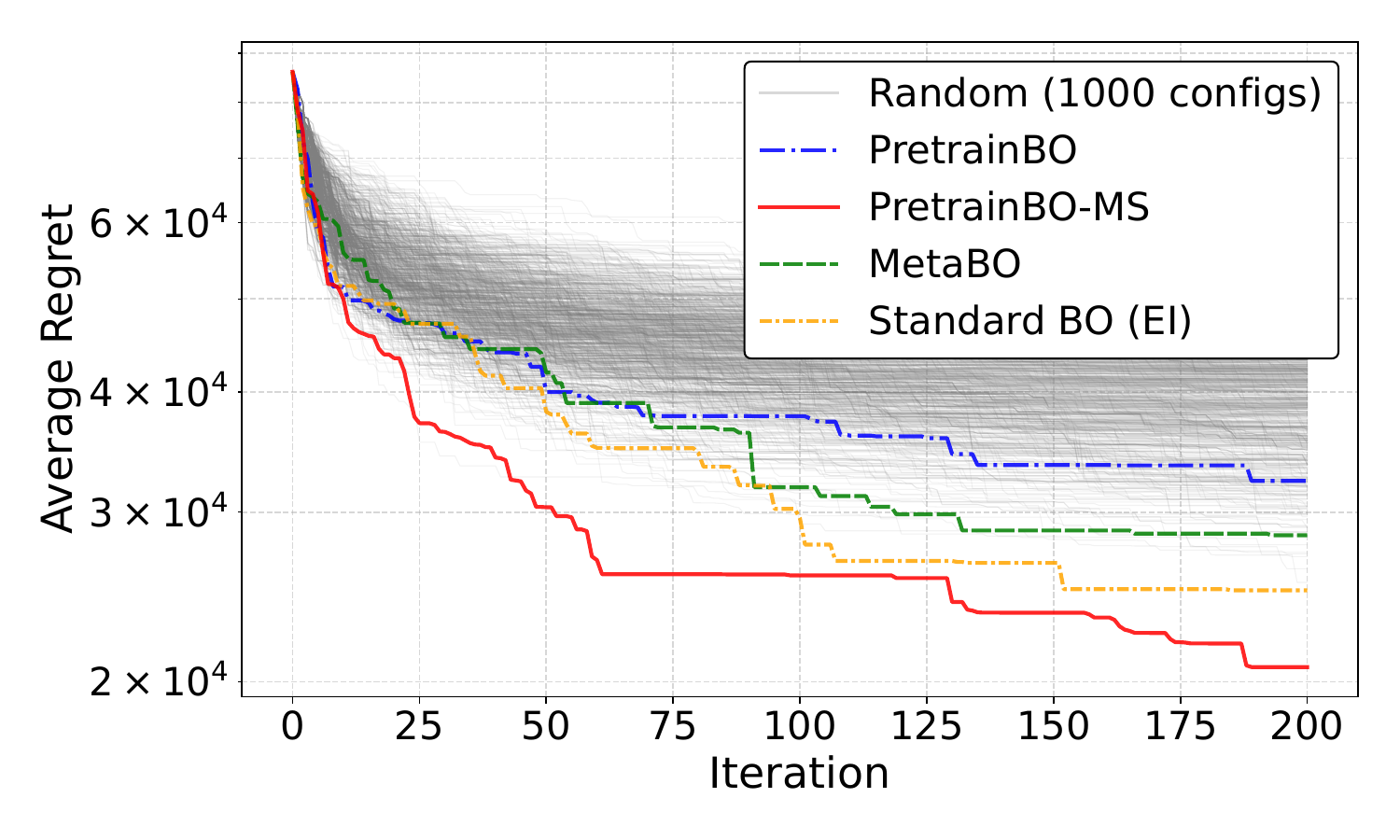}}
\subfloat[][$(d, \text{initial data}) \! \! = \! \! (10, \text{Rosenbrock})$]{
\ig{.3}{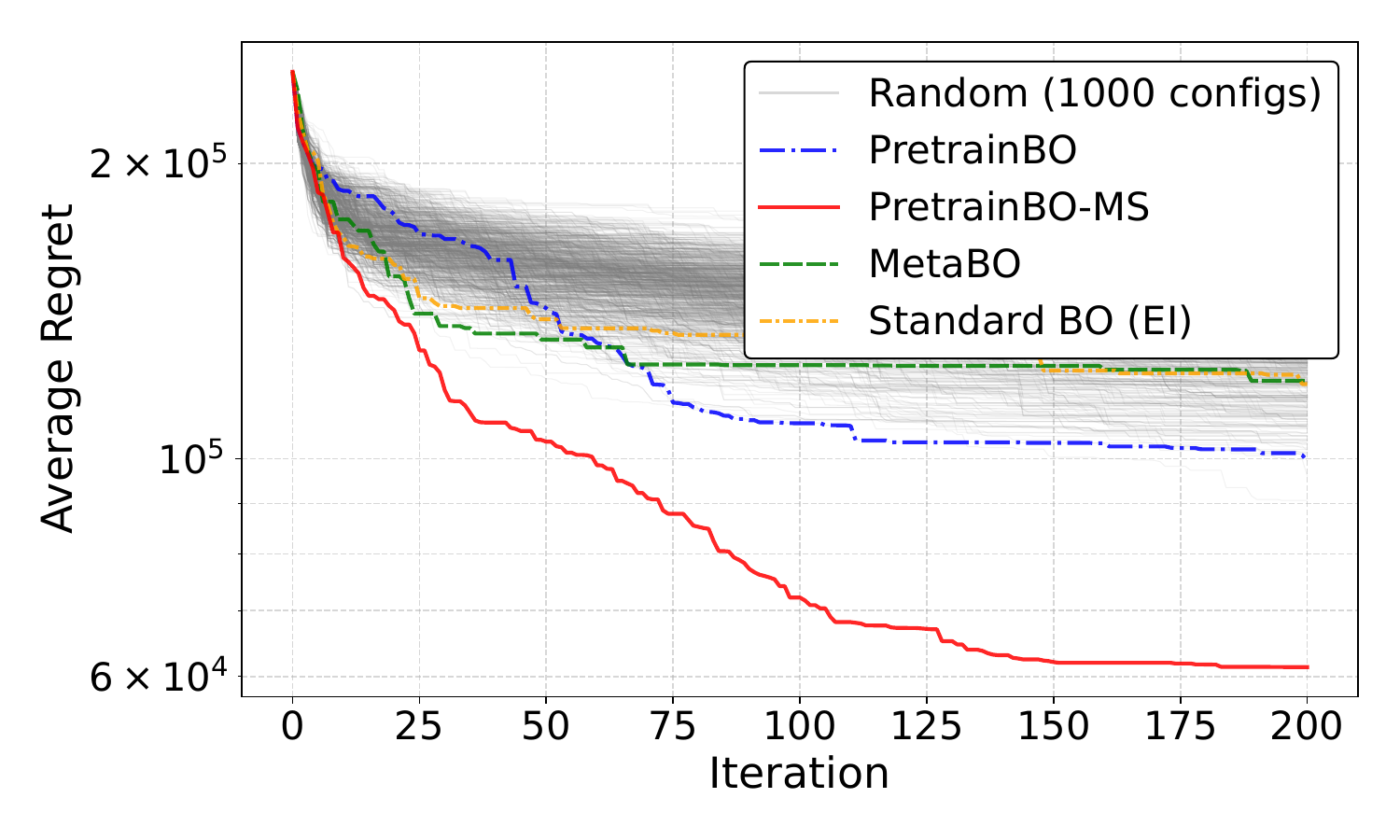}}
\subfloat[][$(d, \text{initial data})  \! \! =  \! \! (20, \text{Rosenbrock})$]{
\ig{.3}{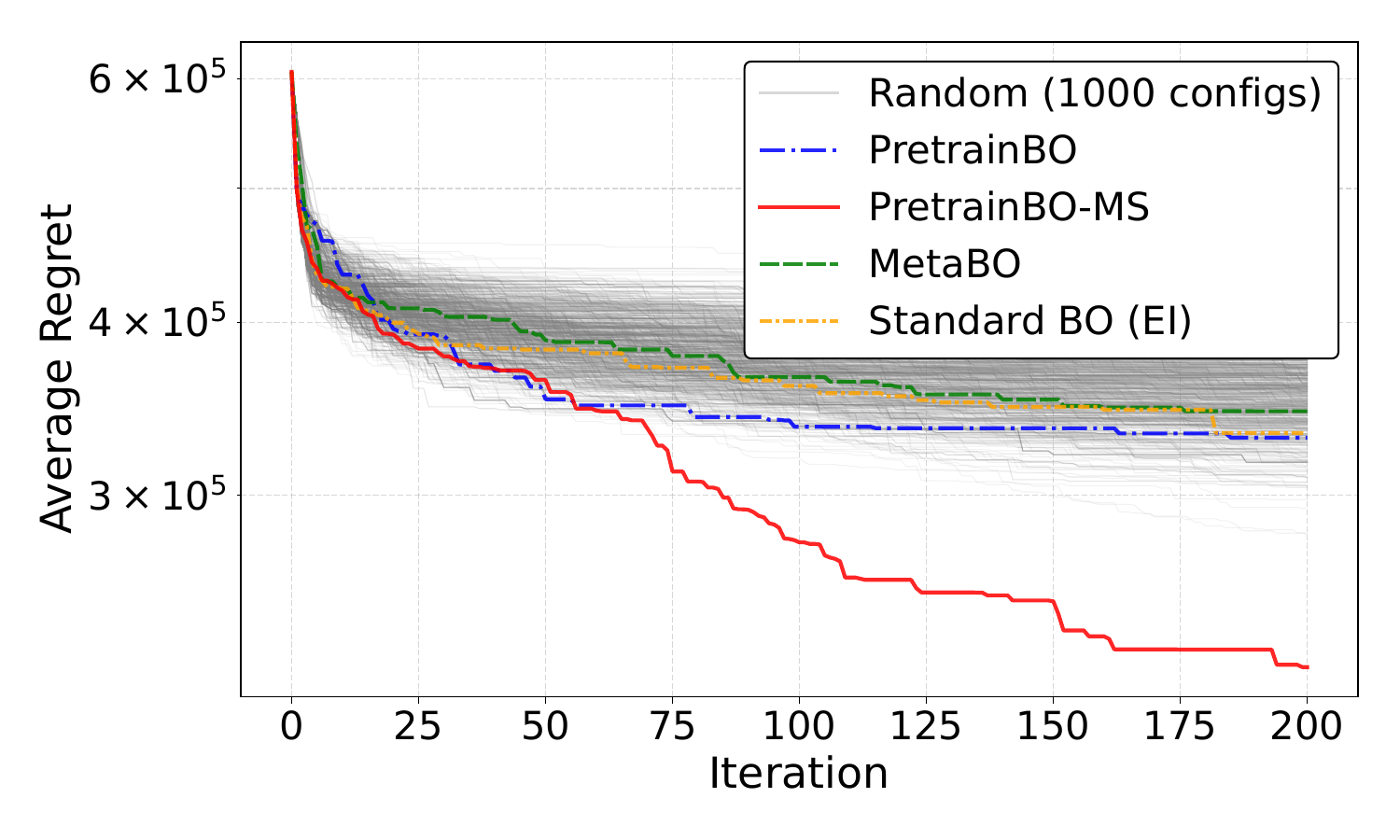}}

\subfloat[][$(d, \text{initial data}) = (5, \text{Levy})$]{
\ig{.3}{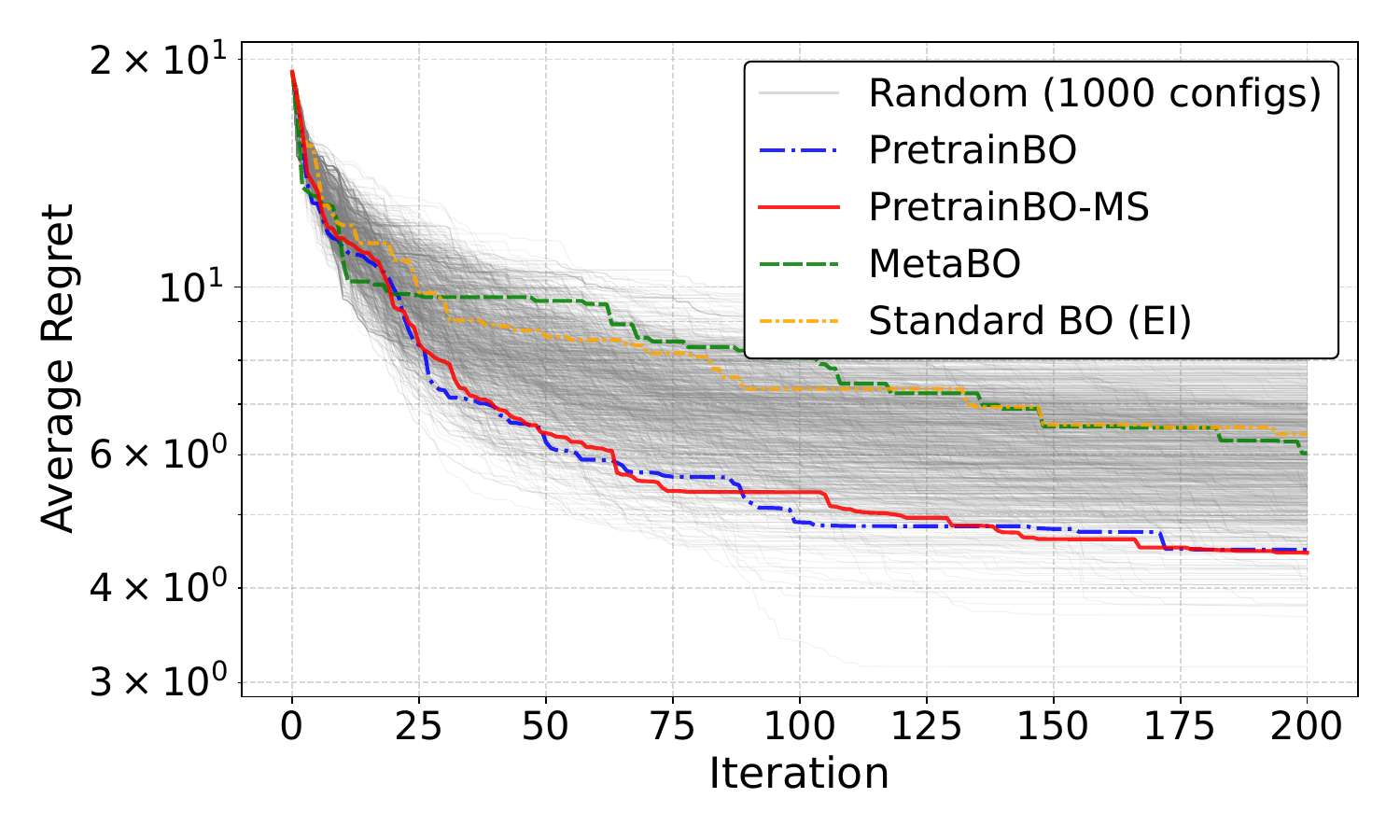}}
\subfloat[][$(d, \text{initial data}) = (10, \text{Levy})$]{
\ig{.3}{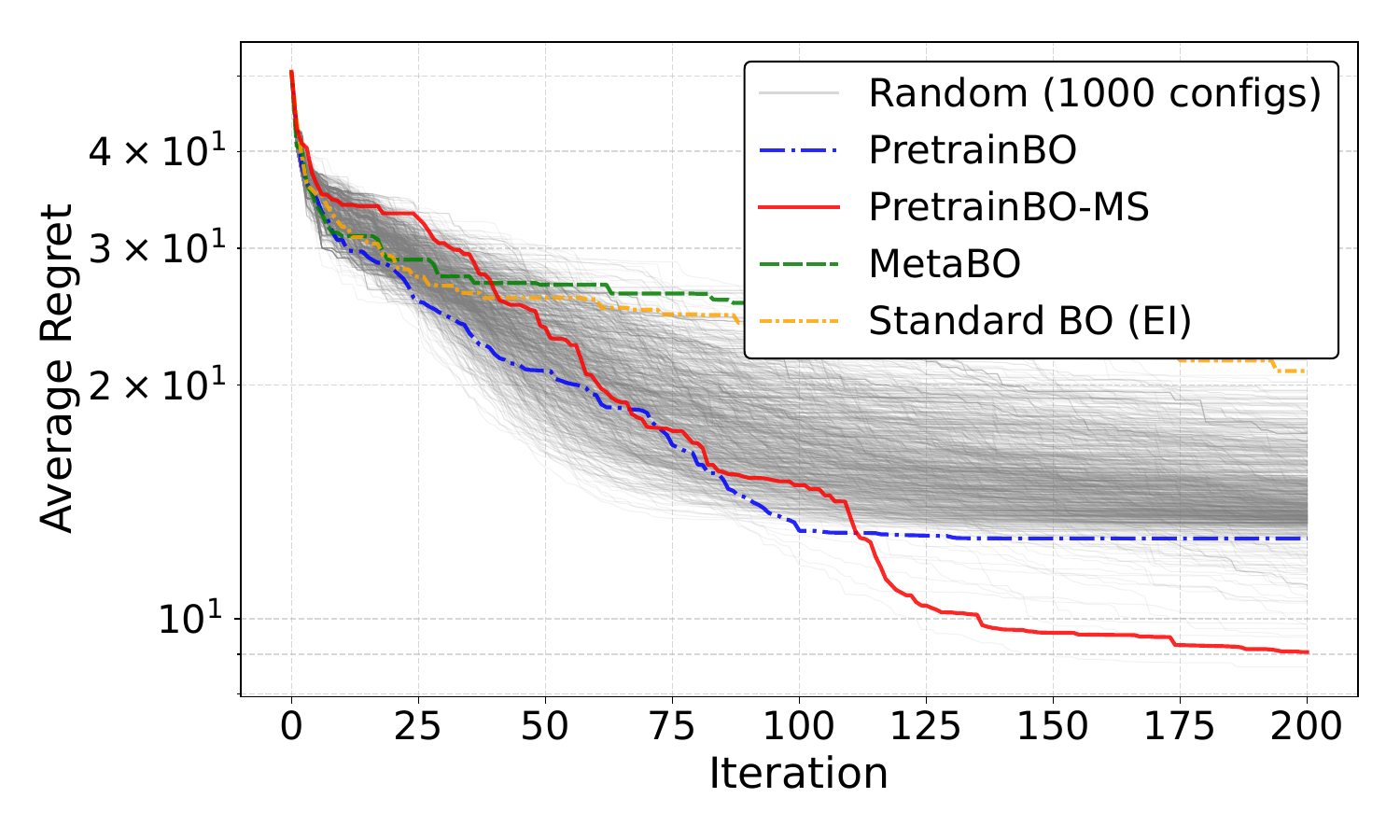}}
\subfloat[][$(d, \text{initial data}) = (20, \text{Levy})$]{
\ig{.3}{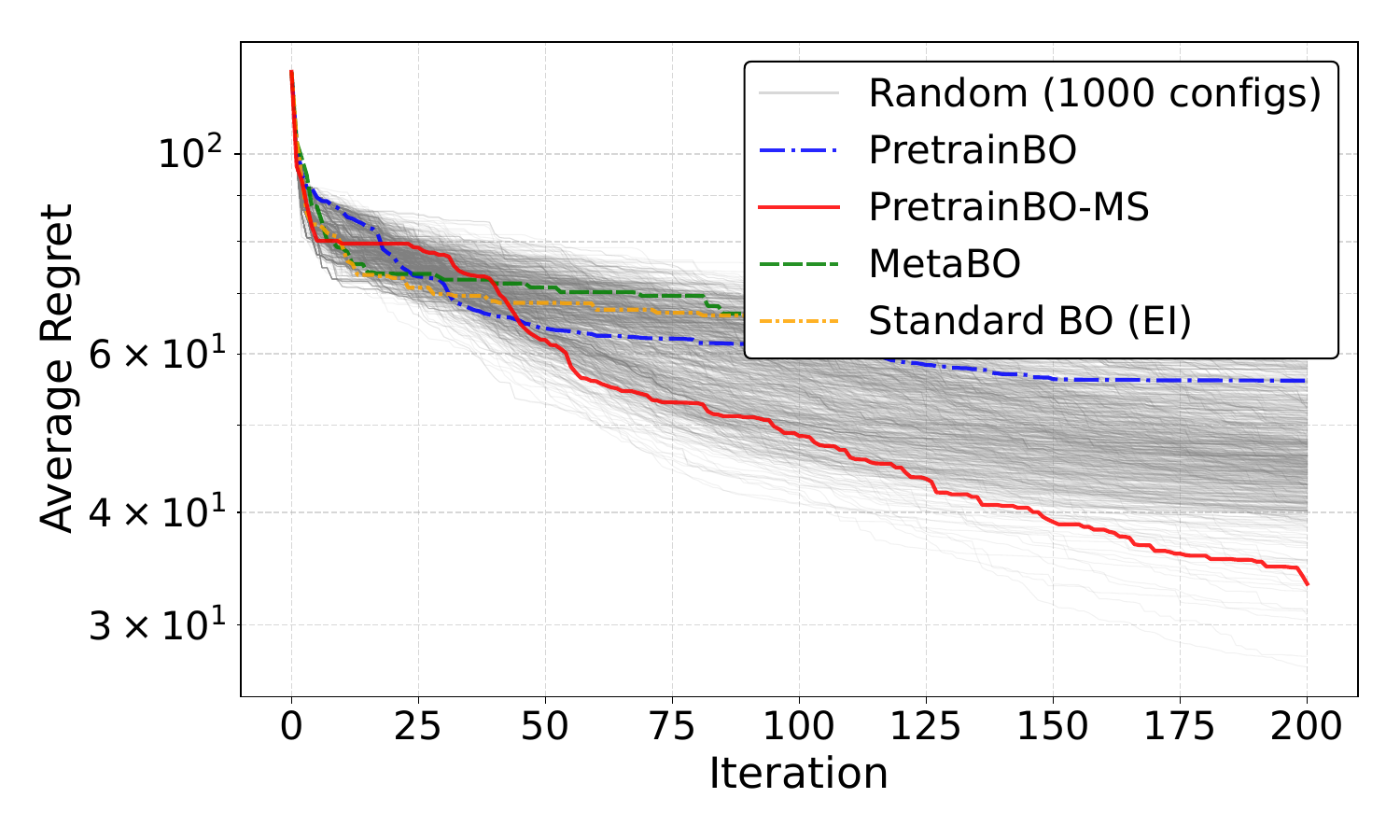}}

\caption{
Comparison of test function performance ($n_0 = 50$).
}
\label{fig:test-BO-trajectory-50}

\end{figure*}

\begin{figure*}

\centering

\subfloat[][$(d, \text{initial data}) = (5, \text{Sphere})$]{
\ig{.3}{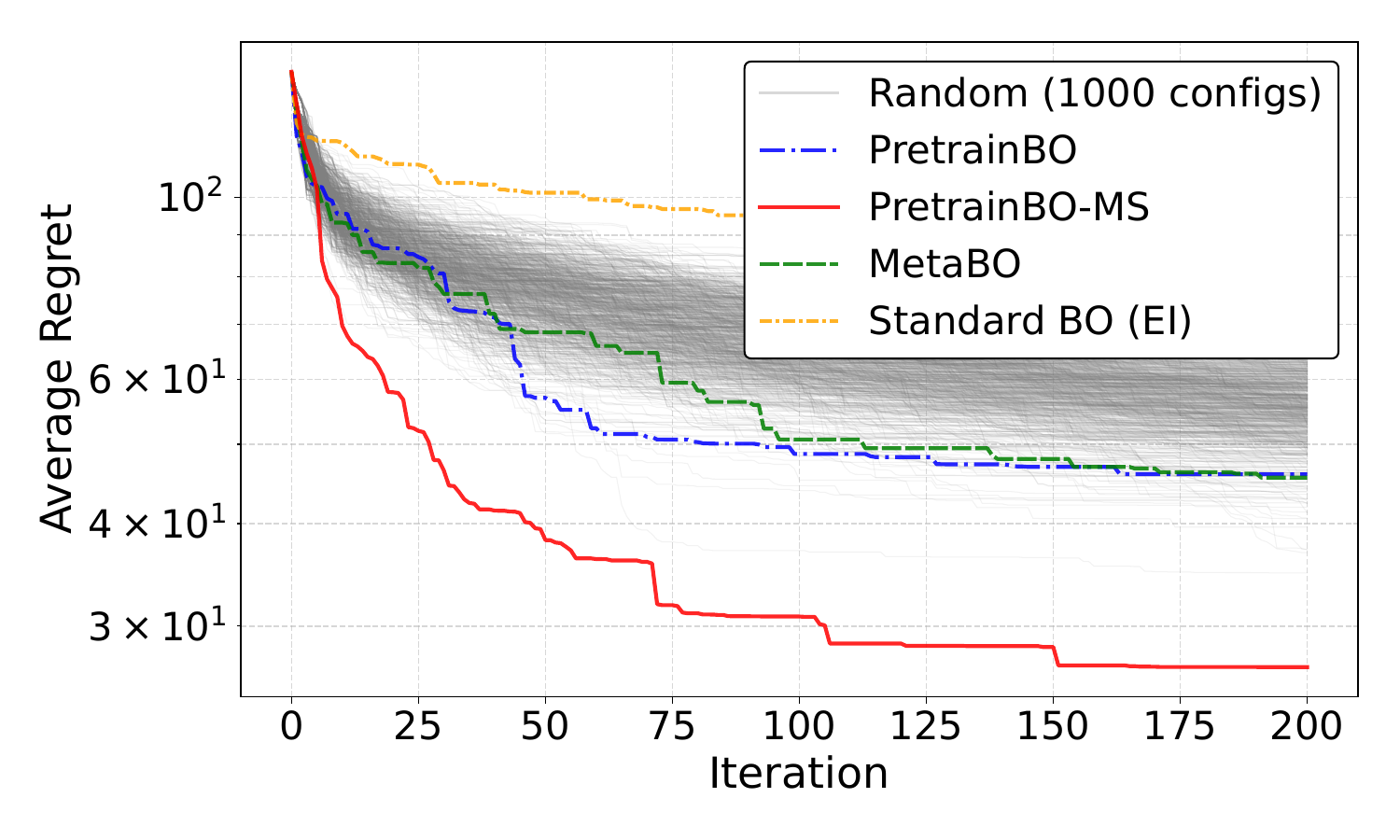}}
\subfloat[][$(d, \text{initial data}) = (10, \text{Sphere})$]{
\ig{.3}{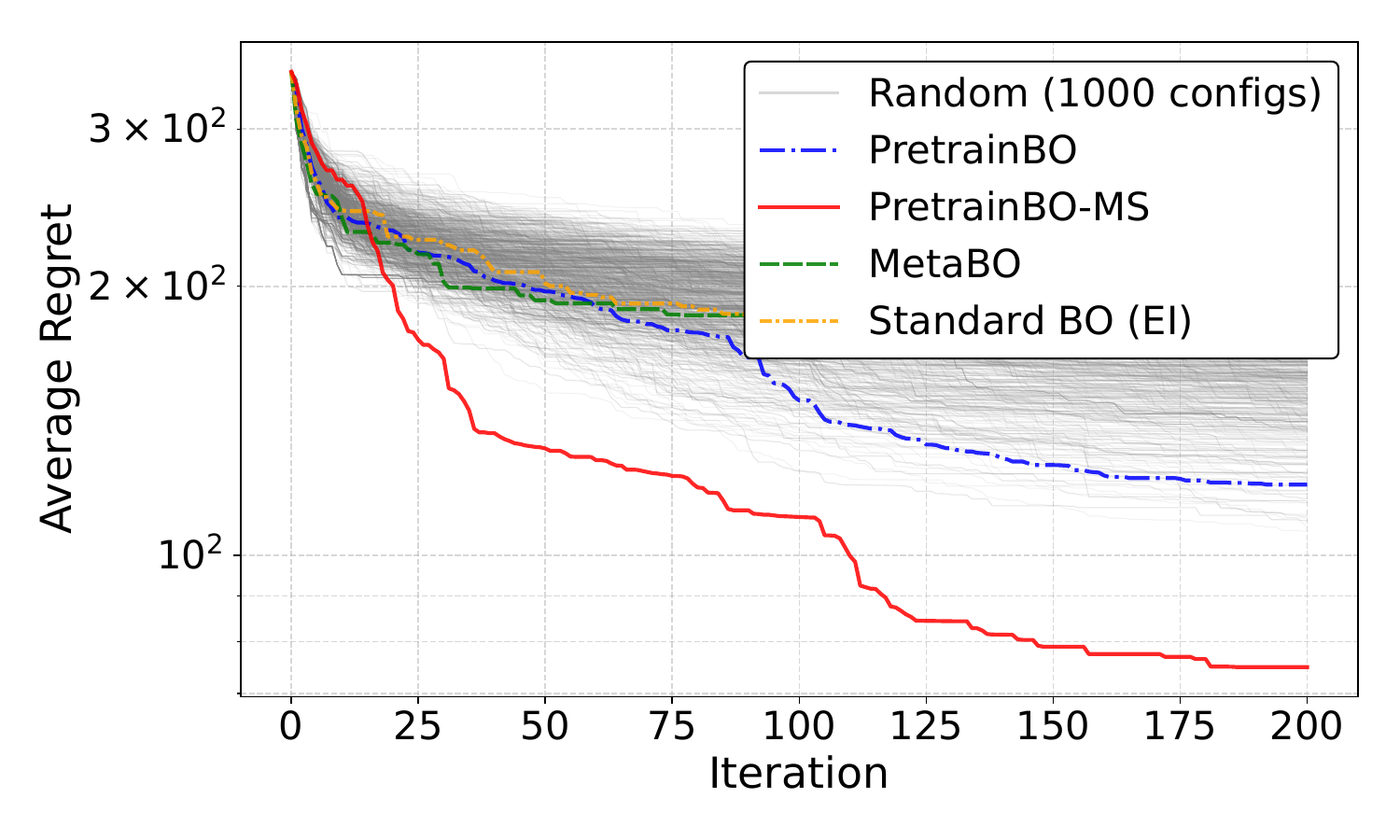}}
\subfloat[][$(d, \text{initial data}) = (20, \text{Sphere})$]{
\ig{.3}{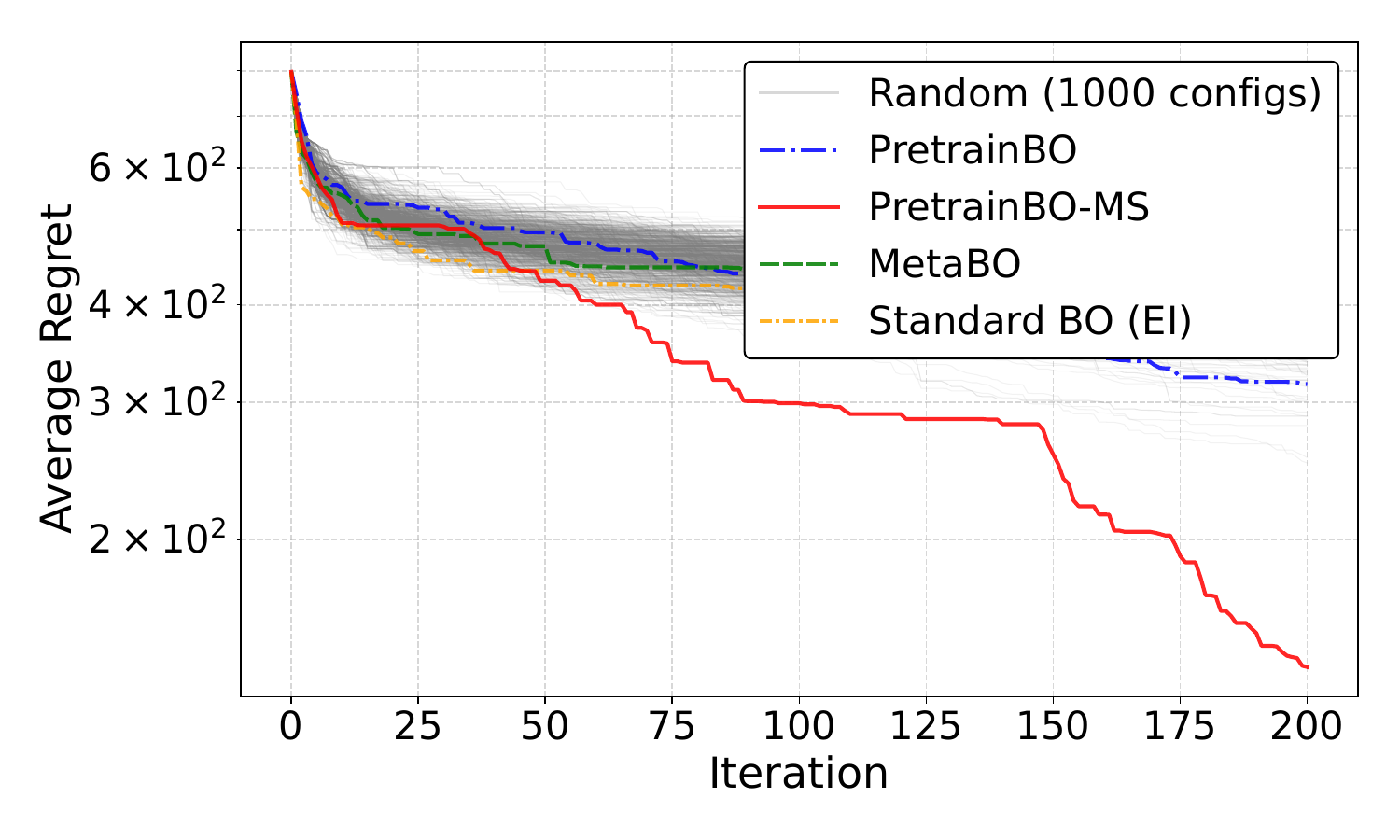}}

\subfloat[][$(d, \text{initial data})  \! \! =  \! \! (5, \text{Rosenbrock})$]{
\ig{.3}{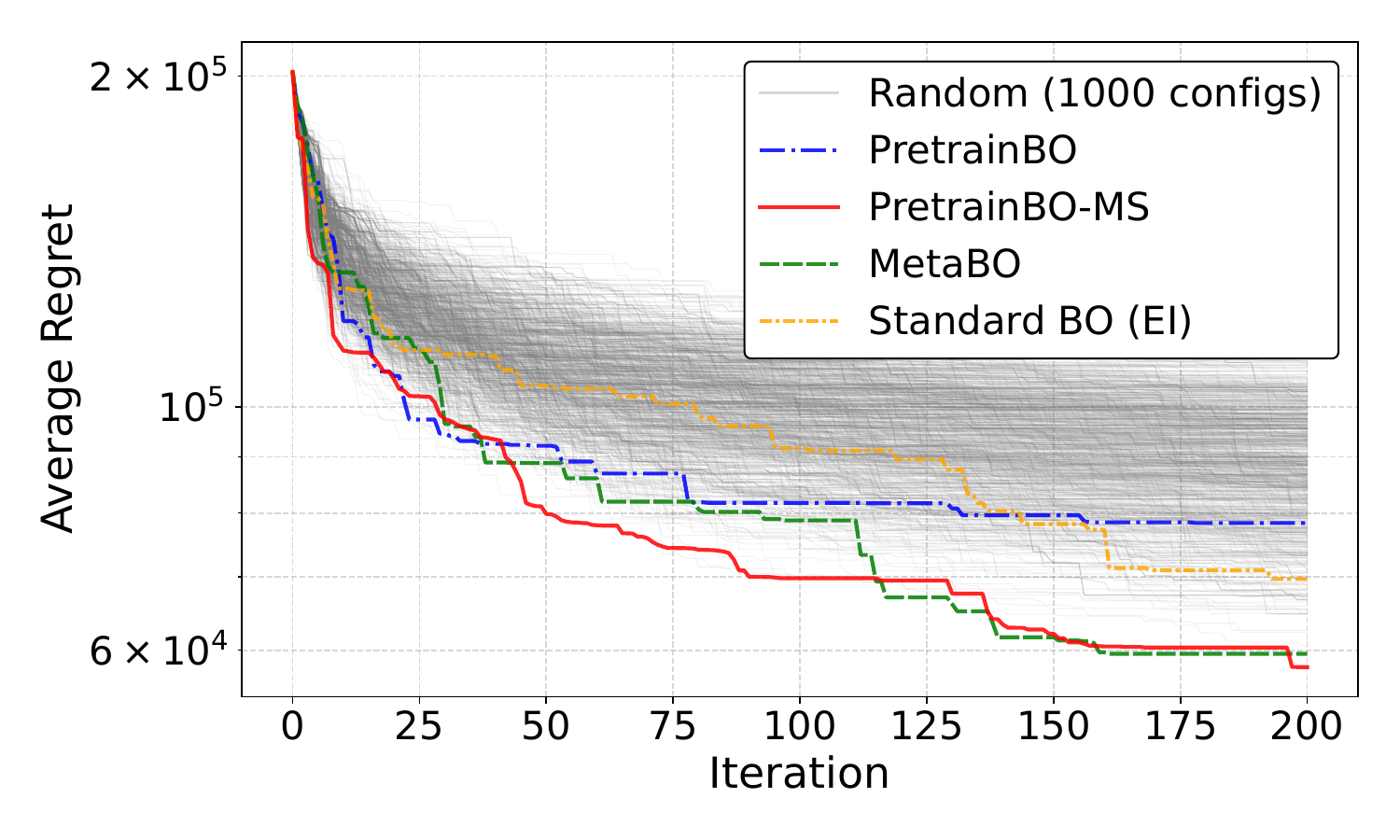}}
\subfloat[][$(d, \text{initial data}) \! \! = \! \! (10, \text{Rosenbrock})$]{
\ig{.3}{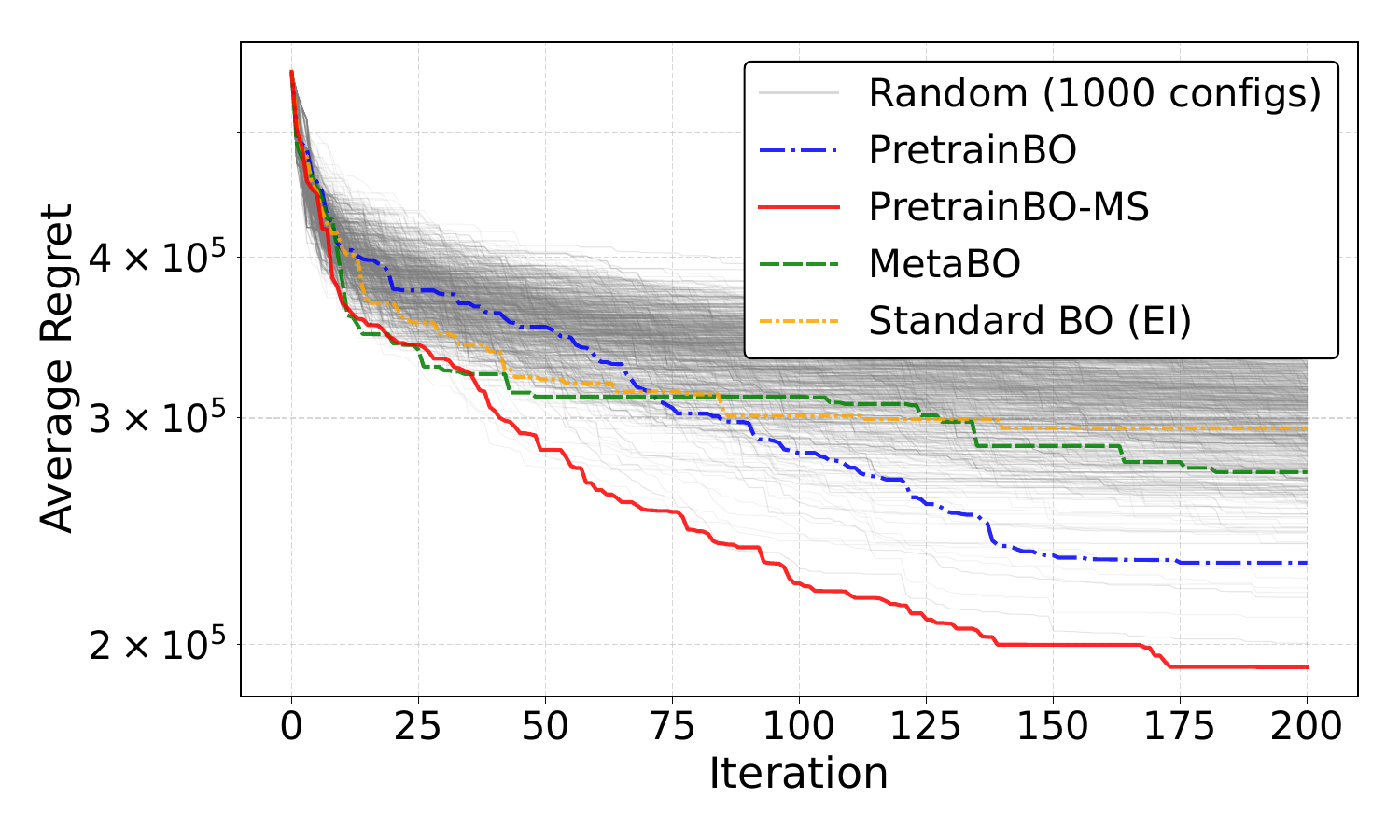}}
\subfloat[][$(d, \text{initial data})  \! \! =  \! \! (20, \text{Rosenbrock})$]{
\ig{.3}{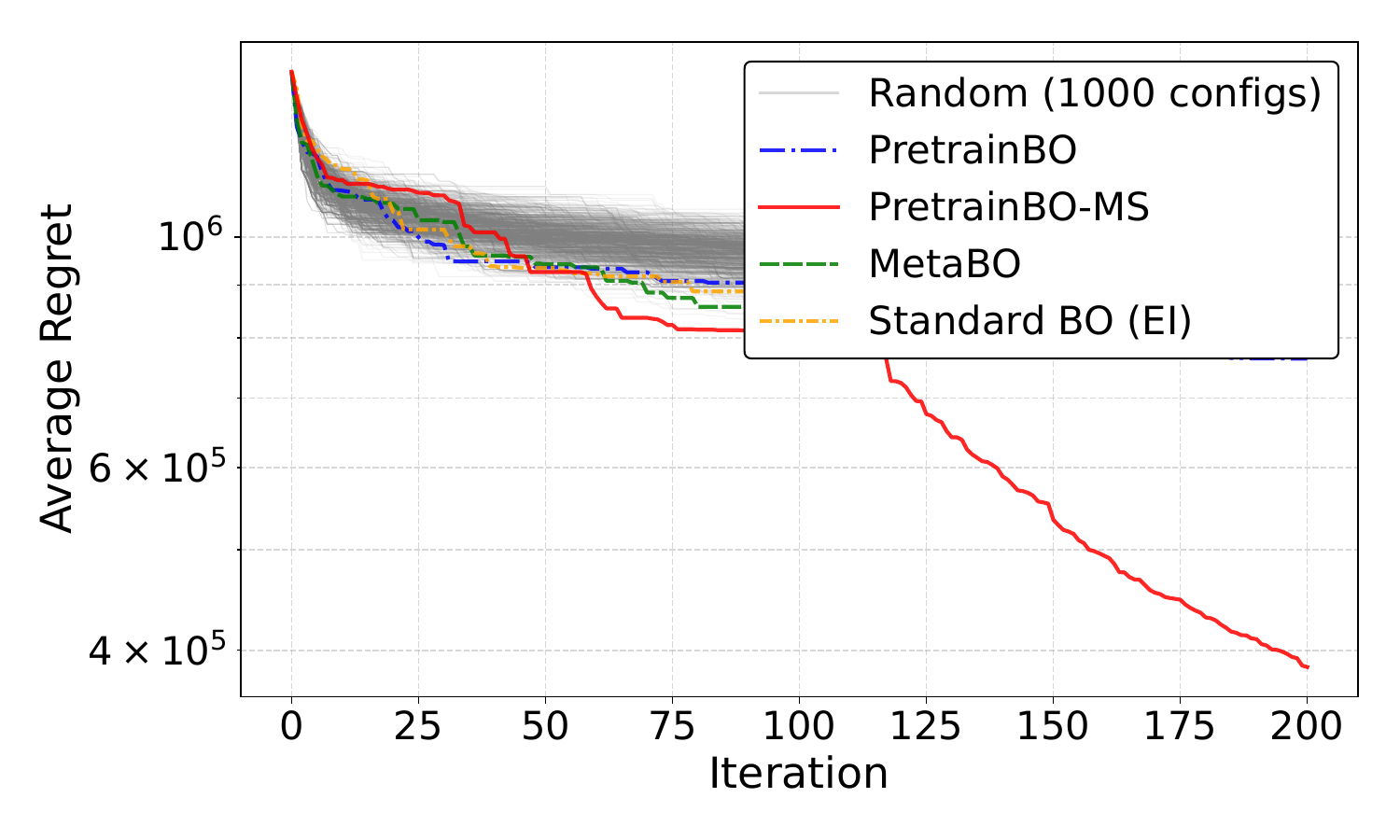}}

\subfloat[][$(d, \text{initial data}) = (5, \text{Levy})$]{
\ig{.3}{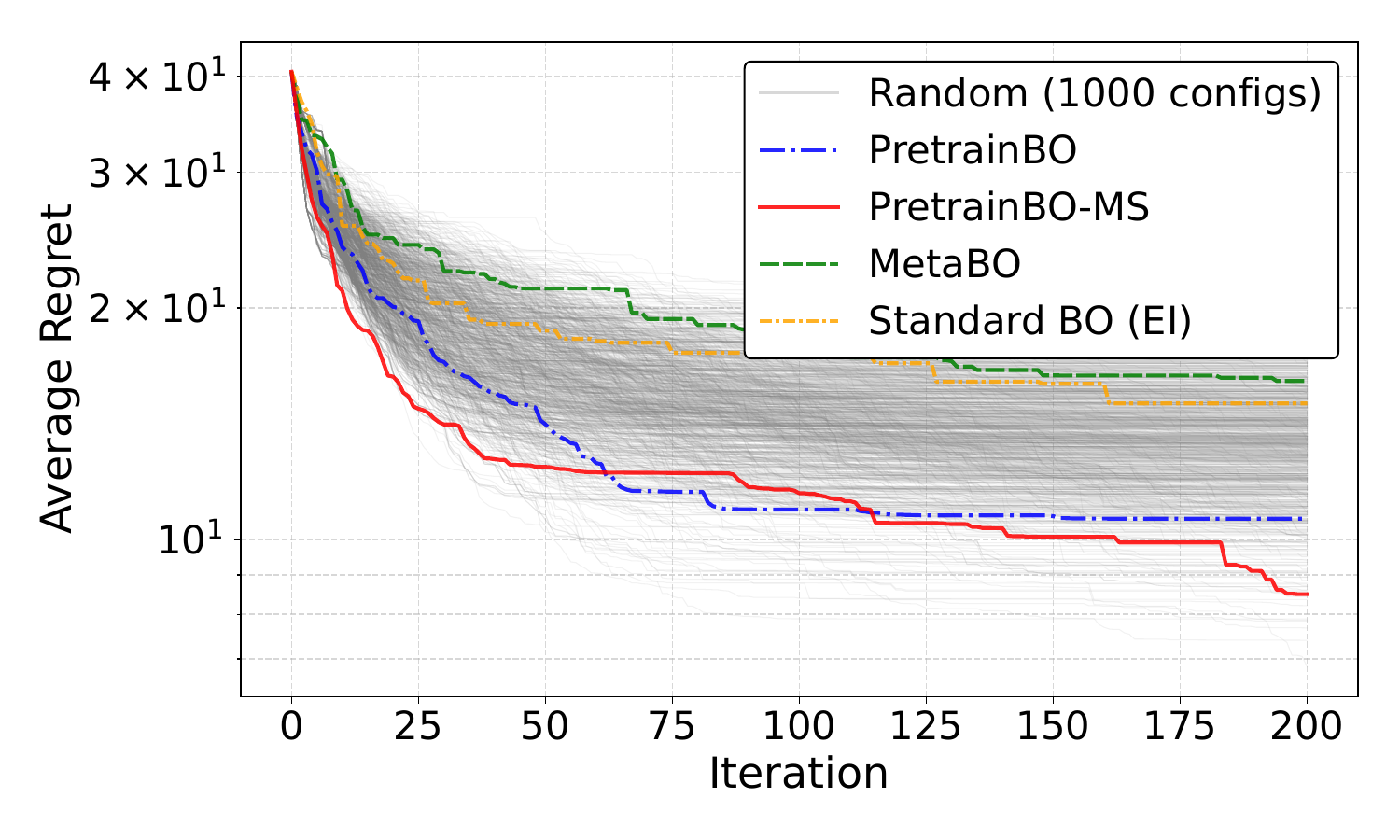}}
\subfloat[][$(d, \text{initial data}) = (10, \text{Levy})$]{
\ig{.3}{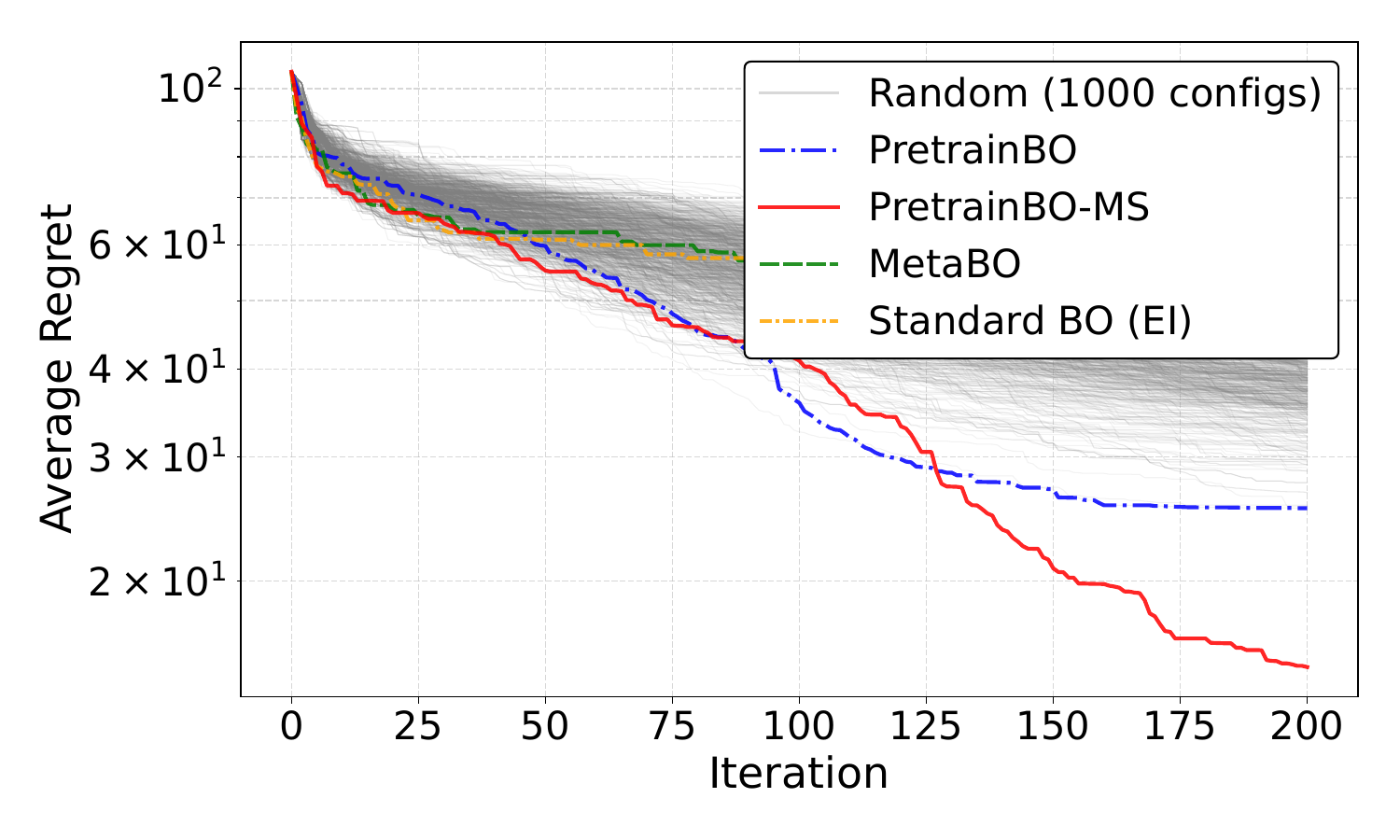}}
\subfloat[][$(d, \text{initial data}) = (20, \text{Levy})$]{
\ig{.3}{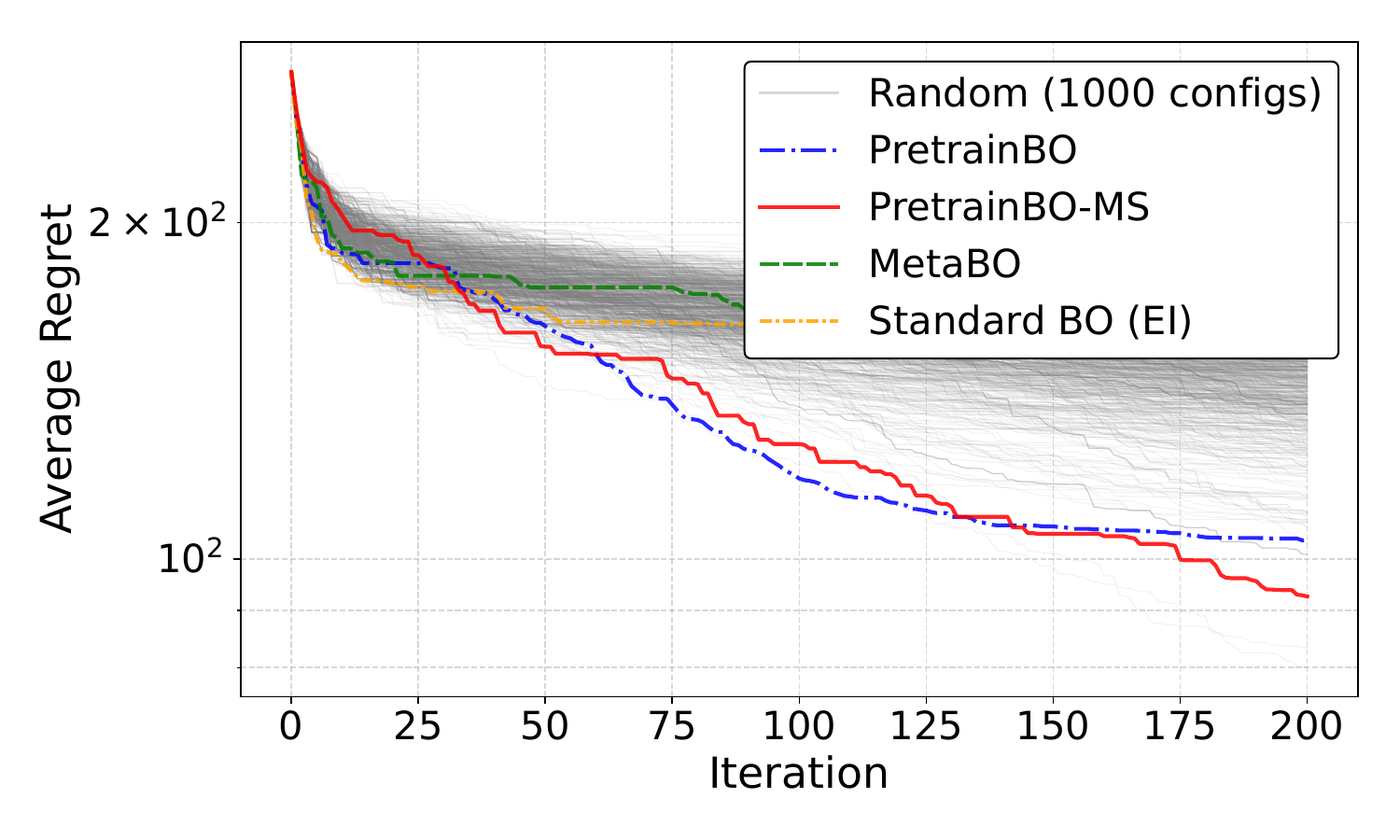}}

\caption{
Comparison of test function performance ($n_0 = 300$).
}
\label{fig:test-BO-trajectory-300}

\end{figure*}

\subsection{Results with Error Bars}
\label{app:error-bars}

Standard errors are shown in Figs.~\ref{fig:test-BO-trajectory-SE}, \ref{fig:test-BO-trajectory-50-SE}, and \ref{fig:test-BO-trajectory-300-SE}.
Note that each one of random 1000 configs has its standard error because the 10 runs were performed for each one of 1000 configs.

\begin{figure*}[t]
\subfloat[][$(d,\text{initial data}) = (5d,\text{sph})$]{
\ig{.30}{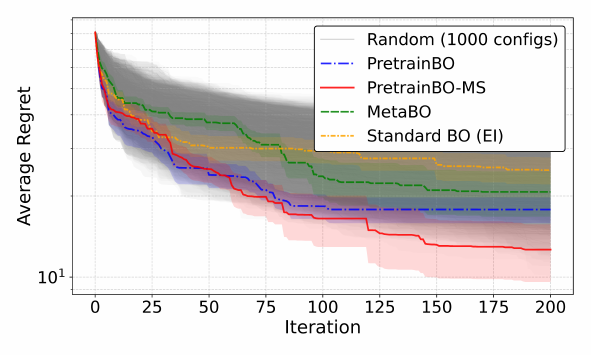}}
\subfloat[][$(d,\text{initial data}) = (10d,\text{sph})$]{
\ig{.30}{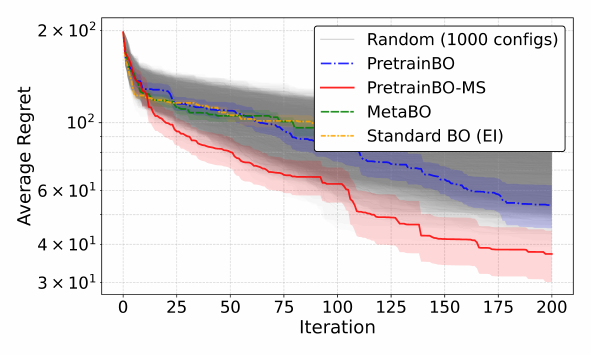}}
\subfloat[][$(d,\text{initial data}) = (20d,\text{sph})$]{
\ig{.30}{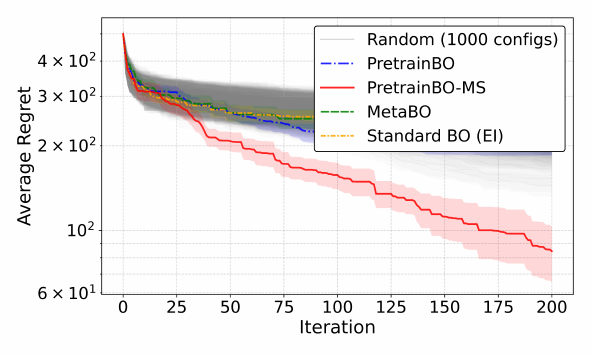}}

\subfloat[][$(d,\text{initial data}) = (5d,\text{ros})$]{
\ig{.30}{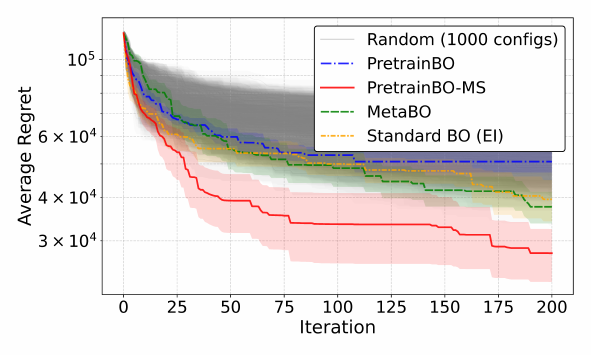}}
\subfloat[][$(d,\text{initial data}) = (10d,\text{ros})$]{
\ig{.30}{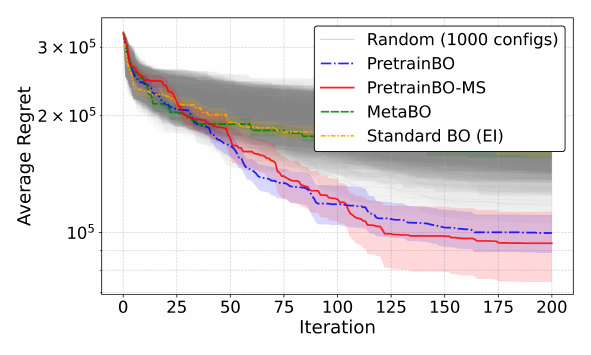}}
\subfloat[][$(d,\text{initial data}) = (20d,\text{ros})$]{
\ig{.30}{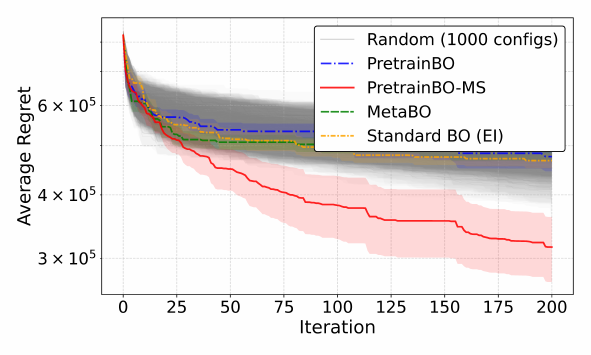}}

\subfloat[][$(d,\text{initial data}) = (5d,\text{lev})$]{
\ig{.30}{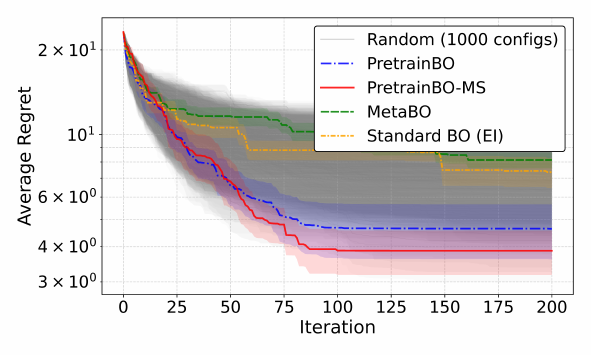}}
\subfloat[][$(d,\text{initial data}) = (10d,\text{lev})$]{
\ig{.30}{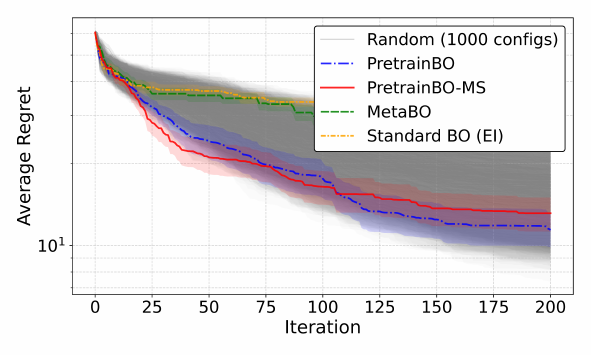}}
\subfloat[][$(d,\text{initial data}) = (20d,\text{lev})$]{
\ig{.30}{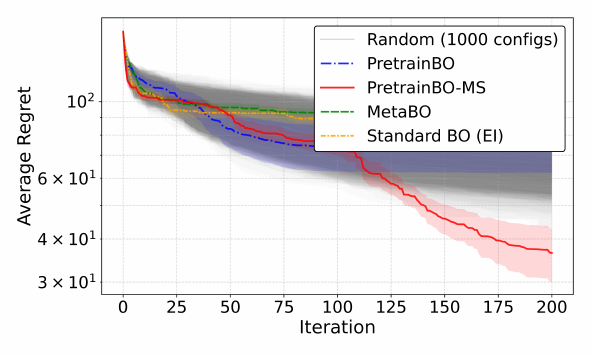}}

\caption{
Comparison of test function performance ($n_0 = 100$).
Standard error is added to Fig.~\ref{fig:test-BO-trajectory}.
}
\label{fig:test-BO-trajectory-SE}

\end{figure*}

\begin{figure*}[t]
\subfloat[][$(d,\text{initial data}) = (5d,\text{sph})$]{
\ig{.30}{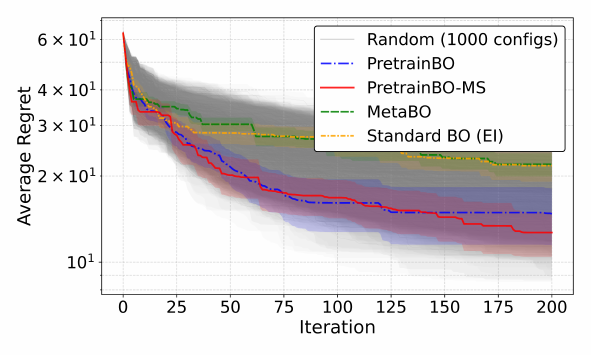}}
\subfloat[][$(d,\text{initial data}) = (10d,\text{sph})$]{
\ig{.30}{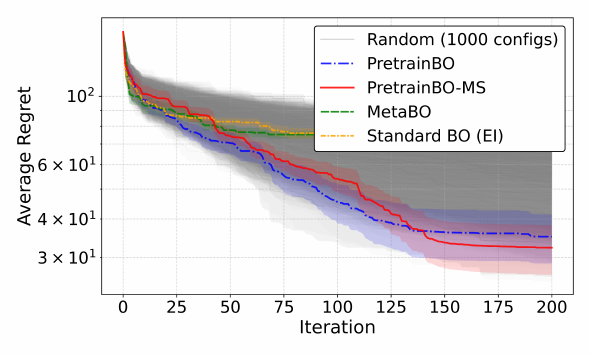}}
\subfloat[][$(d,\text{initial data}) = (20d,\text{sph})$]{
\ig{.30}{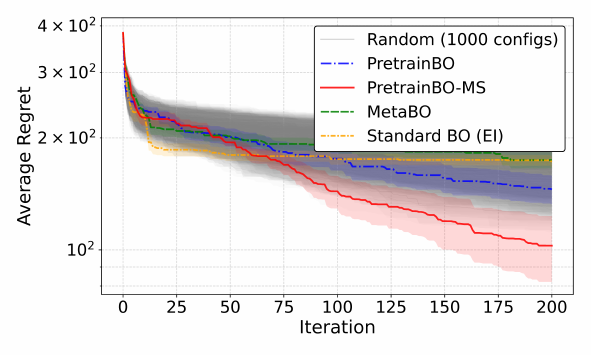}}

\subfloat[][$(d,\text{initial data}) = (5d,\text{ros})$]{
\ig{.30}{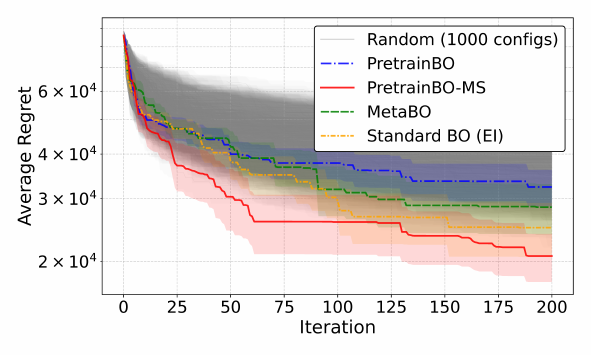}}
\subfloat[][$(d,\text{initial data}) = (10d,\text{ros})$]{
\ig{.30}{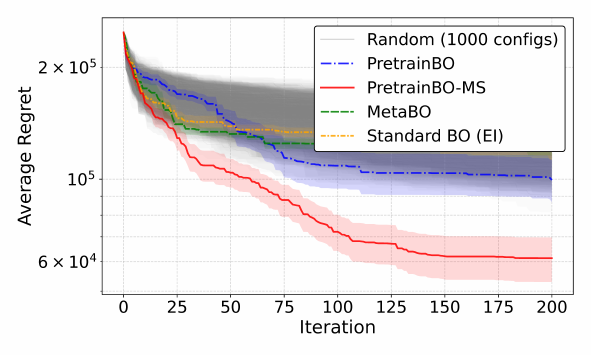}}
\subfloat[][$(d,\text{initial data}) = (20d,\text{ros})$]{
\ig{.30}{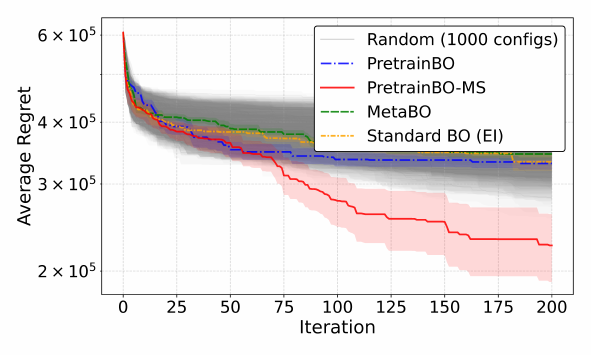}}

\subfloat[][$(d,\text{initial data}) = (5d,\text{lev})$]{
\ig{.30}{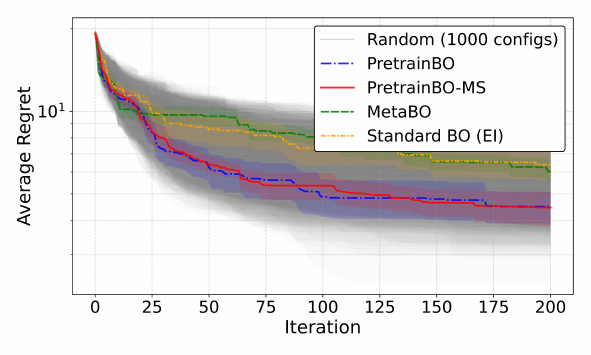}}
\subfloat[][$(d,\text{initial data}) = (10d,\text{lev})$]{
\ig{.30}{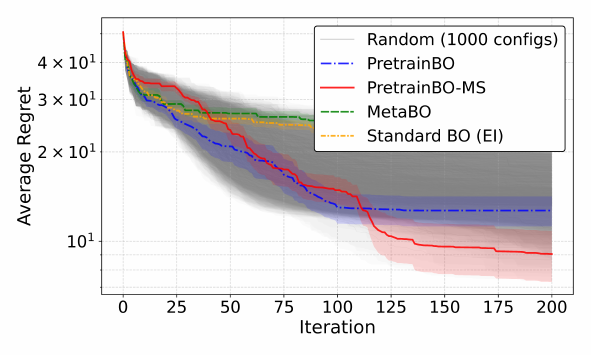}}
\subfloat[][$(d,\text{initial data}) = (20d,\text{lev})$]{
\ig{.30}{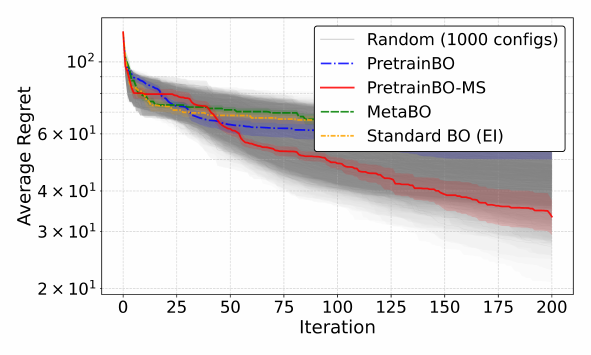}}

\caption{
Comparison of test function performance ($n_0 = 50$).
Standard error is added to Fig.~\ref{fig:test-BO-trajectory-50}.
}
\label{fig:test-BO-trajectory-50-SE}

\end{figure*}

\begin{figure*}[t]
\subfloat[][$(d,\text{initial data}) = (5d,\text{sph})$]{
\ig{.30}{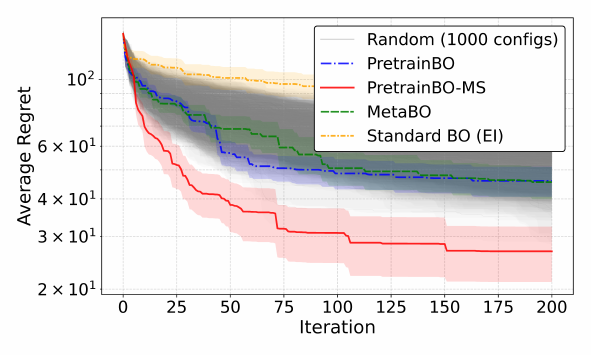}}
\subfloat[][$(d,\text{initial data}) = (10d,\text{sph})$]{
\ig{.30}{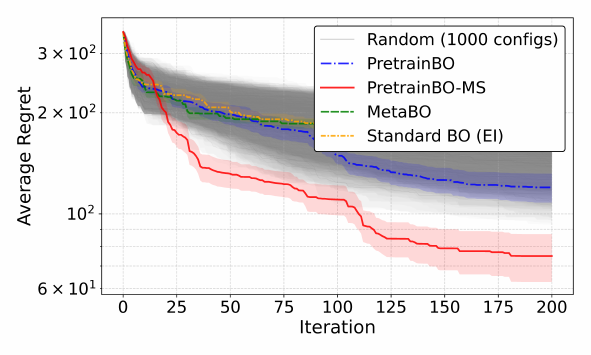}}
\subfloat[][$(d,\text{initial data}) = (20d,\text{sph})$]{
\ig{.30}{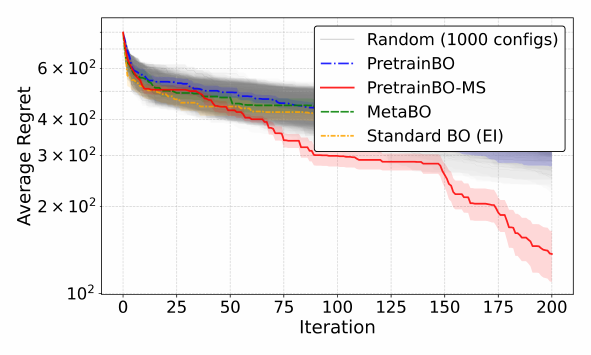}}

\subfloat[][$(d,\text{initial data}) = (5d,\text{ros})$]{
\ig{.30}{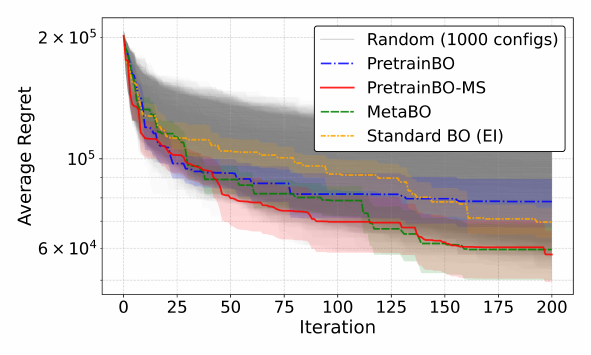}}
\subfloat[][$(d,\text{initial data}) = (10d,\text{ros})$]{
\ig{.30}{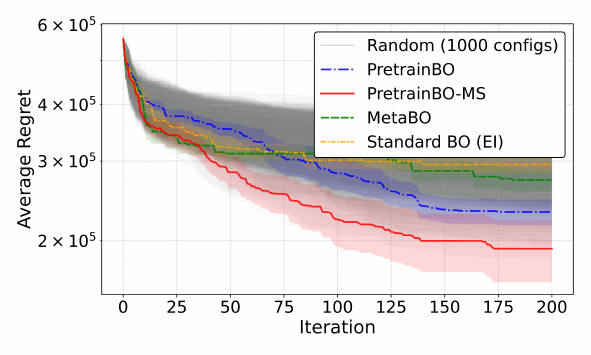}}
\subfloat[][$(d,\text{initial data}) = (20d,\text{ros})$]{
\ig{.30}{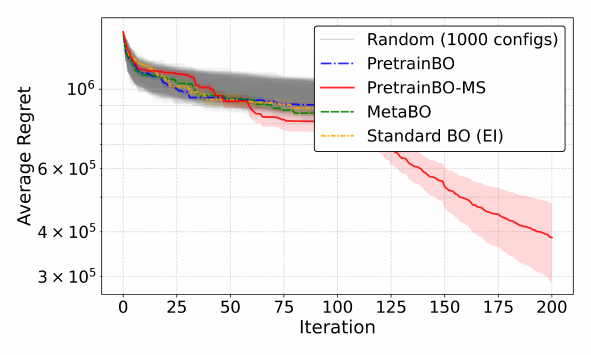}}

\subfloat[][$(d,\text{initial data}) = (5d,\text{lev})$]{
\ig{.30}{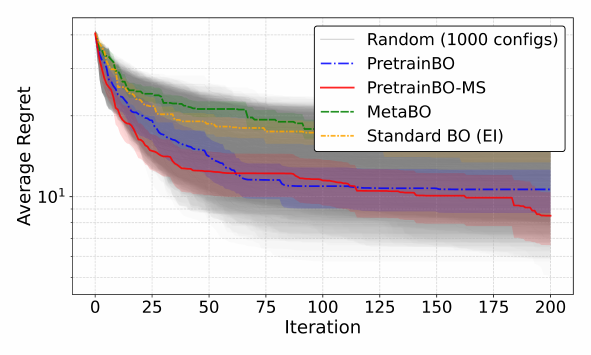}}
\subfloat[][$(d,\text{initial data}) = (10d,\text{lev})$]{
\ig{.30}{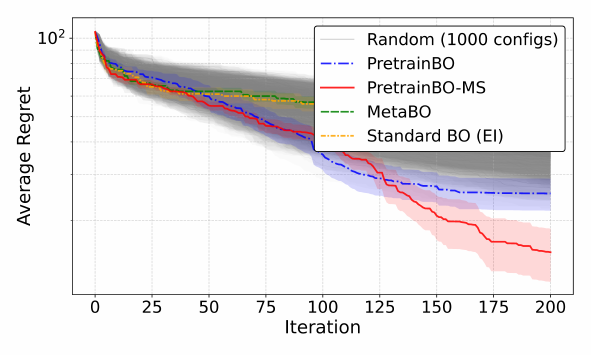}}
\subfloat[][$(d,\text{initial data}) = (20d,\text{lev})$]{
\ig{.30}{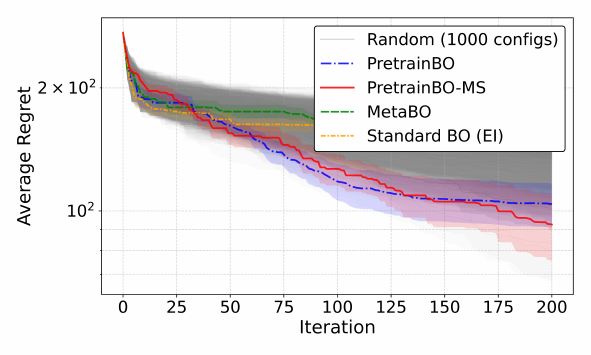}}

\caption{
Comparison of test function performance ($n_0 = 300$).
Standard error is added to Fig.~\ref{fig:test-BO-trajectory-300}.
}
\label{fig:test-BO-trajectory-300-SE}

\end{figure*}

\section{Proof of Proposition~\ref{prop:one-shot-error}}
\label{app:proof-one-shot-error}

In the proof, we use the following version of Dudley's entropy bound and Borell--TIS inequality, adapted to our notation from \citep{adler2007random}:
%
\begin{theo}[Dudley's entropy bound {\citep[adopted from][]{adler2007random}} ]
\label{thm:dudley}
Let 
$f = \{f(\*x): \*x \in \cX \}$ 
be a centered separable Gaussian process, and define its canonical pseudometric by
\[
    d(\*x,\*x')
    \coloneqq
    \left(
        \EE\left[(f(\*x)-f(\*x^\prm))^2\right]
    \right)^{1/2},
    \qquad \*x,\*x^\prm \in \cX.
\]
Suppose that $\cX$ is $d$-compact (compact with respect to the canonical pseudometric $d$).
Let
\[
    N(\cX,d,\epsilon)
\]
denote the minimum number of $d$-balls (balls under the metric $d$) of radius $\epsilon$ required to cover $\cX$.
Then there exists a universal constant $C>0$ such that
\[
    \EE\left[
        \sup_{\*x \in \cX} f(\*x)
    \right]
    \leq
    C
    \int_0^{\operatorname{diam}(\cX,d)}
    \sqrt{
        \log N(\cX,d,\epsilon)
    }\,
    d\epsilon,
\]
where
\[
    \operatorname{diam}(\cX,d)
    \coloneqq
    \sup_{\*x,\*x^\prm \in \cX} d(\*x^\prm,\*x^\prm).
\]
In particular, if the entropy integral on the right-hand side is finite, then 
\[
 \EE[\sup_{\*x \in \cX} f(\*x)] < \infty. 
\]
\end{theo}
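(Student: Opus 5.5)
We prove the bound first for finite index sets by a chaining argument with a uniform constant, and then pass to $\cX$ via separability. Write $D \coloneqq \operatorname{diam}(\cX,d)$. If the entropy integral is infinite there is nothing to prove, so assume it is finite.

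\textbf{Reduction to finite sets.} Since $f$ is separable, there is a countable set $\cX_0 \subset \cX$ with $\sup_{\*x \in \cX} f(\*x) = \sup_{\*x \in \cX_0} f(\*x)$ almost surely. Write $\cX_0$ as an increasing union of finite sets $F_1 \subset F_2 \subset \cdots$. Since $f$ is centered, $\EE[\sup_{\*x \in F_n} f(\*x)] \ge \EE[f(\*x_0)] = 0$ for any fixed $\*x_0 \in F_1$. The suprema over $F_n$ increase to the supremum over $\cX_0$, so monotone convergence (applied after subtracting the integrable $f(\*x_0)$) gives $\EE[\sup_{\cX} f] = \lim_n \EE[\sup_{F_n} f]$. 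It therefore suffices to show that for every finite $F \subset \cX$,
\[
\EE\Bigl[\sup_{\*x \in F} f(\*x)\Bigr] \le C \int_0^{D} \sqrt{\log N(\cX,d,\epsilon)}\, d\epsilon ,
\]
with $C$ independent of $F$.

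\textbf{Chaining on a finite set $F$.} Set $\epsilon_k = D 2^{-k}$ for $k \ge 0$. For each $k$, let $S_k \subset \cX$ be a set of centres of $N_k \coloneqq N(\cX,d,\epsilon_k)$ balls of radius $\epsilon_k$ covering $\cX$, and let $\pi_k(\*x)$ be a nearest centre in $S_k$. Take $S_0 = \{\*x_0\}$, which is possible since $N(\cX,d,D) = 1$. Because $F$ is finite, there is a level $K$ with $\epsilon_K$ below the smallest positive $d$-distance between points of $F$. Add one final level $\pi_{K+1} = \mathrm{id}$ on $F$; this costs at most $\log |F|$ in the entropy term, and we absorb it by choosing $K$ large and noting that points of $F$ at $d$-distance $0$ satisfy $f(\*x) = f(\*x')$ almost surely. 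Telescoping gives
\[
f(\*x) = f(\*x_0) + \sum_{k=1}^{K+1} \bigl( f(\pi_k \*x) - f(\pi_{k-1}\*x) \bigr).
\]
Take the supremum over $F$, move it inside the sum, and use $\EE f(\*x_0) = 0$. Each increment is a centered Gaussian with standard deviation at most $d(\pi_k\*x,\*x) + d(\*x,\pi_{k-1}\*x) \le 3\epsilon_k$. At level $k$ it ranges over at most $N_k N_{k-1} \le N_k^2$ distinct pairs. The maximal inequality $\EE \max_{i \le M} g_i \le \sigma\sqrt{2\log M}$ for centered Gaussians with variances at most $\sigma^2$ then gives
\[
\EE\Bigl[\sup_{F} f\Bigr] \le \sum_{k \ge 1} 3\epsilon_k \sqrt{4 \log N_k} .
\]

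\textbf{Comparison with the integral.} Since $\epsilon \mapsto N(\cX,d,\epsilon)$ is nonincreasing, $\epsilon_k \sqrt{\log N_k} = 2(\epsilon_k - \epsilon_{k+1})\sqrt{\log N_k} \le 2\int_{\epsilon_{k+1}}^{\epsilon_k} \sqrt{\log N(\cX,d,\epsilon)}\,d\epsilon$. Summing over $k$ yields the bound with $C = 12$ or similar, uniformly in $F$. The final claim, $\EE[\sup_{\cX} f] < \infty$, follows at once when the integral is finite.

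\textbf{Main obstacle.} The delicate step is the limit passage: ensuring the finite-$F$ chain terminates exactly, and handling the last link together with pseudometric points at distance $0$, without the constant depending on $F$. I would handle this as sketched above, by terminating the chain once $\epsilon_K$ is below the minimal separation in $F$ and identifying $d$-equivalent points almost surely. All the measurability issues are pushed into separability and monotone convergence.
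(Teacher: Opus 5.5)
Your argument is correct and is the standard dyadic chaining proof of Dudley's bound: separability and monotone convergence reduce to finite $F$, you chain through nets at scales $\epsilon_k=D2^{-k}$, apply the Gaussian maximal inequality to at most $N_k^2$ links of standard deviation at most $3\epsilon_k$, and compare the sum with the integral to get $C=12$. The paper gives no proof of its own but imports the theorem from Adler--Taylor, where it rests on this same chaining idea. The one step to tidy is that your displayed finite-$F$ bound should carry the last-link term $\epsilon_K\sqrt{2\log|F|}$, which vanishes as $K\to\infty$ for fixed $F$, so the minimal-separation and zero-distance remarks are not needed. You also rightly read $\operatorname{diam}(\cX,d)$ as $\sup_{\*x,\*x^\prm}d(\*x,\*x^\prm)$ rather than the statement's typo $d(\*x^\prm,\*x^\prm)$.
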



\begin{theo}[Borell--TIS inequality {\citep[adapted from][]{adler2007random}}]
\label{thm:borell-tis}
Let $f=\{f(x):x\in\mathcal{X}\}$ be a centered Gaussian process that is almost surely bounded on $\mathcal{X}$. Define
\[
    \sigma_{\cX}^2
    \coloneqq
    \sup_{\*x \in \cX}
    \operatorname{Var}(f(\*x)).
\]
Then
\[
    \EE\left[
        \sup_{x\in\mathcal{X}} f(\*x)
    \right]
    < \infty,
\]
and, for every $u>0$,
\[
    \mathbb{P}\left(
        \sup_{\*x \in \cX} f(\*x)
        >
        \EE\left[
            \sup_{x\in\mathcal{X}} f(\*x)
        \right]
        + u
    \right)
    \leq
    \exp\left(
        -\frac{u^2}{2\sigma_{\cX}^2}
    \right).
\]
\end{theo}

Let
\begin{align*}
 \cD_0
 =
 \{(x_j^{(0)},y_j^{(0)})\}_{j=1}^{n_0}
\end{align*}
be a fixed initial dataset and
\begin{align*}
 g(x) &\coloneqq f(x) - m(x),  \\
 g_0(x) &\coloneqq f(x) - \mu_0(x)
\end{align*}
be the `centered' processes.
Then,
\begin{align*}
 g
 & \sim
 \cG\cP(0, k),
 \\
 g_0 \mid \cD_0
 & \sim
 \cG\cP(0, \Sigma_0). 
\end{align*}

We first derive the finiteness of the second moment of $g_0$ given $\cD_0$.
\begin{lem}
 \label{lem:second-moment}
 Let
 \begin{align*}
  M_0 \coloneqq
  \sup_{\*x \in \cX} |g_0(\*x)|.
 \end{align*}
 Under the assumptions of Proposition~\ref{prop:one-shot-error}, 
 \begin{align}
  C_{\cD_0} =
  \EE[
  M_0^2
  \mid \cD_0
  ]
  < \infty.
  \label{eq:second-moment-M0}
 \end{align}
\end{lem}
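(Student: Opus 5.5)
The plan is to bound $\EE[\sup_{\*x} g_0(\*x)\mid\cD_0]$ with Dudley's entropy bound (Theorem~\ref{thm:dudley}) and then upgrade this first-moment bound to a second-moment bound through the Gaussian tail in the Borell--TIS inequality (Theorem~\ref{thm:borell-tis}). The absolute value in $M_0$ is handled by symmetry: $\sup_{\*x}|g_0(\*x)| \le \sup_{\*x} g_0(\*x) + \sup_{\*x}(-g_0(\*x))$, and $-g_0$ has the same law as $g_0$ given $\cD_0$. It therefore suffices to control the single supremum $S \coloneqq \sup_{\*x} g_0(\*x)$.

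\textbf{Step 1 (metric entropy).} The canonical pseudometric of the posterior process $g_0\mid\cD_0$ is $d_0(\*x,\*x') = \sqrt{\Sigma_0(\*x,\*x)+\Sigma_0(\*x',\*x')-2\Sigma_0(\*x,\*x')}$. Let $h(\*x) = f(\*x)-\EE[f(\*x)\mid\cD_0]$ be the posterior residual. Since $\cD_0$ is Gaussian-linear in $f$, the variance of the increment $h(\*x)-h(\*x')$ does not depend on the observed values, and it is at most the prior variance of the increment, because conditioning reduces variance. This gives $d_0 \le d$. Combined with \eq{eq:metric}, we obtain $d_0(\*x,\*x') \le L\|\*x-\*x'\|_2^\alpha$ for small separations. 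Compactness of $\cX\subset\RR^d$ then gives the covering bound $N(\cX,d_0,\epsilon)\le N(\cX,\|\cdot\|_2,(\epsilon/L)^{1/\alpha}) \le C'(L/\epsilon)^{d/\alpha}$ for small $\epsilon$. Hence $\sqrt{\log N}=O(\sqrt{\log(1/\epsilon)})$, which is integrable at $0$. The diameter $\operatorname{diam}(\cX,d_0)$ is finite because $\Sigma_0$ is continuous on the compact set $\cX$; continuity follows from continuity of $k$ and the finite-rank update. Continuity of $k$ together with the Hölder bound also gives $d_0$-compactness and a separable (continuous) version of the process. Theorem~\ref{thm:dudley} then yields $\EE[S\mid\cD_0]<\infty$, and in particular $g_0$ is a.s.\ bounded. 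The main obstacle is the inequality $d_0\le d$, which needs the posterior-variance-reduction argument stated cleanly.

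\textbf{Step 2 (second moment).} Put $\mu_S = \EE[S\mid\cD_0]$ and $\sigma_{\cX}^2=\sup_{\*x}\Sigma_0(\*x,\*x)<\infty$. Theorem~\ref{thm:borell-tis} gives $\mathbb P(S-\mu_S>u)\le e^{-u^2/(2\sigma_\cX^2)}$. Since $S \ge g_0(\*x_1)$ for any fixed $\*x_1$, the negative part of $S$ is dominated by the negative part of a Gaussian variable. Therefore
\begin{align*}
 \EE[S^2\mid\cD_0] \le 2\mu_S^2 + 2\int_0^\infty 2u\, e^{-u^2/(2\sigma_\cX^2)}\,du + 2\EE[g_0(\*x_1)^2\mid\cD_0] < \infty.
\end{align*}
Finally, $M_0^2 \le 2S^2 + 2(\sup_{\*x}(-g_0(\*x)))^2$, and the second term has the same law as $S^2$. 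This gives $C_{\cD_0}<\infty$.

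For the quantity $\EE[(\sup_{\*x}|f(\*x)|)^2\mid\cD_0]$ in Proposition~\ref{prop:one-shot-error}, note that $\mu_0$ is continuous on the compact set $\cX$, hence bounded. So the same finiteness follows from $\sup|f|\le \sup|\mu_0|+M_0$.
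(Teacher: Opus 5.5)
Your proposal is correct and follows essentially the same route as the paper: $d_0\le d$, the H\"older-based covering bound, Dudley's bound for $\EE[\sup_{\*x} g_0(\*x)\mid\cD_0]<\infty$, and then Borell--TIS tails integrated to a second moment. The differences are only in details: the paper gets $d_0\le d$ by expanding $\Sigma_0$ explicitly rather than by variance reduction, and it integrates the union-bounded tail of the nonnegative $M_0=\max\{S_0^+,S_0^-\}$ directly, which avoids your separate handling of the negative part of $S$. One harmless slip: your overview inequality $\sup_{\*x}|g_0(\*x)|\le\sup_{\*x}g_0(\*x)+\sup_{\*x}\{-g_0(\*x)\}$ fails when $g_0$ does not change sign (e.g.\ $g_0>0$ on $\cX$), but your final step only uses $M_0=\max\{S,\sup_{\*x}\{-g_0(\*x)\}\}\ge 0$, which correctly gives $M_0^2\le 2S^2+2(\sup_{\*x}\{-g_0(\*x)\})^2$.
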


\begin{proof}
From the definition of the canonical pseudo-metric $d$,
\begin{align*}
 d(\*x,\*x^\prm)^2
 =
 \EE[
 \{ g(\*x)-g(\*x^\prm) \}^2
 ]
 =
 k(\*x,\*x)+k(\*x^\prm,\*x^\prm)-2k(\*x,\*x^\prm).
\end{align*}
We further define the canonical pseudo-metric conditioned on $\cD_0$ as
\begin{align*}
 d_0(\*x,\*x^\prm)
 \coloneqq
 \left(
 \EE[
 \{f(\*x) - f(\*x^\prm)\}^2
 \mid \cD_0
 ]
 \right)^{1/2}. 
\end{align*}
Then, 
\begin{align}
 d_0(\*x,\*x^\prm)^2
 =
 \EE[
 \{ g_0(\*x)-g_0(\*x^\prm) \}^2
 \mid \cD_0
 ]
 = 
 \Sigma_0(\*x,\*x)
 +
 \Sigma_0(\*x^\prm,\*x^\prm)
 -
 2\Sigma_0(\*x,\*x^\prm)
 \label{eq:d_0}
\end{align}
Let
$k_{X_0}(\*x) \coloneqq [k(\*x_1^{(0)},\*x), \ldots, k(\*x_{n_0}^{(0)},\*x)]^\top$
and 
$\*v(\*x,\*x^\prm) \coloneqq k_{X_0}(\*x) - k_{X_0}(\*x^\prm)$. 
By expanding the GP posterior covariance $\Sigma_0$ in \eq{eq:d_0}, we have
\begin{align*}
 d_0(\*x,\*x^\prm)^2
 = &
 k(\*x,\*x) + \sigma_{\veps}^2 - k_{X_0}(\*x)^\top (K_0 + \sigma_{\veps}^2 I)^{-1} k_{X_0}(\*x)
 \\
 & + k(\*x^\prm,\*x^\prm) + \sigma_{\veps}^2 - k_{X_0}(\*x^\prm)^\top (K_0 + \sigma_{\veps}^2 I)^{-1} k_{X_0}(\*x^\prm)
 \\
 & 
 - 2 \left( k(\*x,\*x^\prm) + \sigma_{\veps}^2 - k_{X_0}(\*x)^\top (K_0 + \sigma_{\veps}^2 I)^{-1} k_{X_0}(\*x^\prm) \right)
 \\
 = & 
 d(\*x,\*x^\prm) - \*v(\*x,\*x^\prm) (K_0 + \sigma_\veps^2 I)^{-1} \*v(\*x,\*x^\prm).
\end{align*}
Since
$K_0+\sigma_\varepsilon^2I\succ0$,
the second term is positive, by which we see
\[
    d_0(\*x,\*x^\prm)\le d(\*x,\*x^\prm)
    \qquad
    \forall \*x,\*x^\prm \in \cX.
\]
As a result, from the assumption \eq{eq:metric}, there exists $r_0$ that satisfies 
\begin{align}
    d_0(\*x,\*x^\prm)
    \le
    L\|\*x - \*x^\prm \|_2^\alpha
    \qquad
    \text{if }
    \|\*x - \*x^\prm\|_2 \le r_0.
 \label{eq:ub-d_0}
\end{align}


For sufficiently small $\epsilon>0$, define
\begin{align} 
 r_\epsilon
 \coloneqq
 \left(\frac{\epsilon}{L}\right)^{1/\alpha}.
 \label{eq:r-eps}
\end{align}
In particular, if
$0 < \epsilon \le L r_0^\alpha$,
then $r_\epsilon \le r_0$. 
Whenever
$\|\*x - \*x^\prm \|_2 \le r_\epsilon$, 
\[
 d_0(\*x,\*x')
 \le
 L\|\*x - \*x^\prm \|_2^\alpha
 \le
 L r_\epsilon^\alpha
 =
 \epsilon.
\]
Therefore, every Euclidean $r_\epsilon$-cover of $\cX$
is also a $d_0$-$\epsilon$-cover, and thus
\begin{align}
    N(\cX, d_0, \epsilon)
    \le
    N(\cX, \|\cdot\|_2, r_\epsilon),
 \label{eq:euc-bound-d0-covering-number}
\end{align}
where
\[
    N(\cX, \rho, \epsilon)
\]
denote the minimum number of balls of radius $\epsilon$, defined by a metric $\rho$, that covers $\cX$.

%
Since $\cX \subset \RR^d$ is compact, it is contained in a Euclidean ball $B_2^d(R)$ of some finite radius $R$.
The standard volumetric covering-number bound for Euclidean balls
\citep{vershynin2018high} 
gives
$N(B_2^d(R),\|\cdot\|_2,r) \le \left(1+\frac{2R}{r}\right)^d$.
%
For sufficiently small $r > 0$,
\begin{align}
 N(\cX,\|\cdot\|_2,r)
 &\le
 N(B_2^d(R),\|\cdot\|_2,r)
 \le
 \left(\frac{3R}{r}\right)^d
 =
 A_{\mathcal X}r^{-d},
 \label{eq:upper-bound-covering-num}
\end{align}
where $A_{\cX}:=(3R)^d<\infty$.
%
Combining \eq{eq:euc-bound-d0-covering-number}, \eq{eq:upper-bound-covering-num} and \eq{eq:r-eps}, for all sufficiently small $\epsilon>0$,
\[
\begin{aligned}
    N(\mathcal X,d_0,\epsilon)
    &\le
    A_{\mathcal X}r_\epsilon^{-d}\\
    &=
    A_{\mathcal X}
    \left(
        \frac{L}{\epsilon}
    \right)^{d/\alpha}.
\end{aligned}
\]
This indicates that $(\cX, d_0)$ is totally bounded ($N(\cX,d_0,\epsilon) < \infty$ for $\forall \epsilon > 0$) \citep{wu2020infostats}.
Then,
\begin{align} 
    \log N(\mathcal X,d_0,\epsilon)
    \le
    \log A_{\mathcal X}
    +
    \frac{d}{\alpha}
    \log\left(\frac{L}{\epsilon}\right)
 \label{eq:upper-bound-log-covering-num}
\end{align}
for all sufficiently small $\epsilon > 0$.

Define
\[
    S_0^+
    \coloneqq
    \sup_{\*x \in \cX} g_0(\*x),
    \qquad
    S_0^-
    \coloneqq
    \sup_{\*x \in \cX} \{-g_0(\*x)\}.
\]
Since $g_0$ is the centered posterior, 
we see
$S^+_0 \stackrel{d}{=} S^-_0$ 
given $\cD_0$.
Let
\[
 m_0 \coloneqq 
    \EE\left[
        \left.
        S^+_0
        \right|
        \mathcal D_0
    \right]
    =
    \EE\left[
        \left.
        S^-_0
        \right|
        \mathcal D_0
    \right].
\]
%
According to Dudley's entropy theorem~\ref{thm:dudley}, 
for totally bounded $(\cX, d_0)$, we see 
\begin{align*}
m_0 \leq C J_0 
\end{align*}
where
\begin{align*}
 J_0
 & \coloneqq
 \int_0^{\operatorname{diam}(\cX,d_0)}
 \sqrt{\log N(\cX,d_0,\epsilon)}
 \,d\epsilon
\end{align*}
$C < \infty$ is a constant, 
and
$\operatorname{diam}(\cX,d_0) \coloneqq \sup_{\*x,\*x^\prm \in \cX} d_0(\*x,\*x^\prm)$, which is finite because of 
$d_0(\*x,\*x^\prm) \leq L \|\*x - \*x^\prm\|_2^\alpha$. 
%
The finiteness of $J_0$ is confirmed through \eq{eq:upper-bound-log-covering-num}.
From \eq{eq:upper-bound-log-covering-num}, we have
$
    \sqrt{
        \log N(\mathcal X,d_0,\epsilon)
    }
    =
    O\left(
        \sqrt{\log(1/\epsilon)}
    \right)
$ when $\epsilon \downarrow 0$. 
There exists $\epsilon_0 > 0$ by which any $\epsilon \leq \epsilon_0$ satisfies \eq{eq:upper-bound-log-covering-num}.
\begin{align*}
 J_0
 & =
 \int_0^{\epsilon_0}
 \sqrt{\log N(\cX,d_0,\epsilon)}
 \,d\epsilon 
 +
 \int_{\epsilon_0}^{\operatorname{diam}(\cX,d_0)}
 \sqrt{\log N(\cX,d_0,\epsilon)}
 \,d\epsilon 
 \\
 & \leq 
 \int_0^{\epsilon_0}
 \sqrt{\log N(\cX,d_0,\epsilon)}
 \,d\epsilon 
 +
 \underbrace{
 (\operatorname{diam}(\cX,d_0) - \epsilon_0)
 \sqrt{\log N(\cX,d_0,\epsilon_0)}
 }_{< \infty}
\end{align*}
The inequality of the second line is from the non-increasing property of the covering number with respect to $\epsilon$.
Since $\sqrt{\log N(\cX,d_0,\epsilon)} = O(\sqrt{\log(1/\epsilon)})$ for $\epsilon \leq \epsilon_0$, we only need to show the finiteness 
$\int_0^{\epsilon_0} \sqrt{\log(1/\epsilon)} \,d\epsilon$.
This can be seen from
$
    \int_0^{\epsilon_0}
    \sqrt{\log(1/\epsilon)}\,d\epsilon
    =
    \int_{\log(1/\epsilon_0)}^\infty
    \sqrt{u}\,e^{-u}\,du
    <\infty
$.
As a result, we see
\[
 m_0 < \infty.
\]

Since 
$m_0 < \infty$, 
we see
$S_0^+ < \infty$
and
$S_0^- < \infty$ 
almost surely (cannot be $\infty$ with a positive probability) for the given $\cD_0$. 
This means that $g_0$ conditioned on $\cD_0$ is almost surely bounded.
Since $g_0$ given $\cD_0$ is almost surely bounded, we can apply Borell--TIS theorem \ref{thm:borell-tis} to $g_0$ and to $- g_0$, and obtain
\begin{align} 
    \mathbb P\left(
        \left.
        S_0^+ > m_0 + u
        \right|
        \mathcal D_0
    \right)
    & \le
    \exp\left(
        -\frac{u^2}{2\sigma_0^2}
    \right),
    \qquad \text{ for } u \ge 0.
 \label{eq:Borell-TIS-positive}
 \\
    \mathbb P\left(
        \left.
        S_0^- > m_0 + u
        \right|
        \mathcal D_0
    \right)
    & \le
    \exp\left(
        -\frac{u^2}{2\sigma_0^2}
    \right),
    \qquad \text{ for } u \ge 0,
 \label{eq:Borell-TIS-negative}
 %
\end{align}
where
\[
    \sigma_0^2
    \coloneqq
    \sup_{\*x \in \cX} \Sigma_0(\*x,\*x).
\]
Note that $\sigma_0$ is finite because we have
$0 \le \Sigma_0(\*x,\*x) \le k(\*x,\*x)$
from the posterior covariance formula,
and
$\sup_{x \in \cX} k(\*x,\*x) < \infty$
since $k$ is continuous and $\cX$ is compact.

Let
\begin{align*}
 M_0
 & \coloneqq
    \sup_{\*x \in \cX}|g_0(\*x)|
    =
    \max\{S_0^+,S_0^-\}.
\end{align*}
For every $u \ge 0$,
\[
\begin{aligned}
    &\left\{
        M_0 > m_0 + u
    \right\}
    =
    \left\{
        S_0^+ > m_0 + u
    \right\}
    \cup
    \left\{
        S_0^- > m_0 + u
    \right\}.
\end{aligned}
\]
%
%
By the union bound together with \eq{eq:Borell-TIS-positive} and \eq{eq:Borell-TIS-negative}, we see
\begin{align} 
    \mathbb P\left(
        \left.
	M_0 > m_0 + u
        \right|
        \cD_0
    \right)
    \le
    2 \exp\left(
        -\frac{u^2}{2\sigma_0^2}
    \right),
    \qquad u \ge 0.
 \label{eq:Borell-TIS-union}
\end{align}

We now use this Gaussian tail bound to establish the finiteness of the
second moment of $M_0$. For any nonnegative random variable $Z$,
\[
    \mathbb E[Z^2]
    =
    2\int_0^\infty
        t\,\mathbb P(Z>t)\,dt.
\]
Applying this identity conditionally on $\mathcal D_0$ to $M_0$ gives
\[
    \mathbb E[
        M_0^2
        \mid\mathcal D_0
    ]
    =
    2\int_0^\infty
        t\,
        \mathbb P(
            M_0>t
            \mid\mathcal D_0
        )
        \,dt.
\]
Split the integral at $m_0$:
\[
\begin{aligned}
    \mathbb E[
        M_0^2
        \mid\mathcal D_0
    ]
    &=
    2\int_0^{m_0}
        t\,
        \mathbb P(
            M_0>t
            \mid\mathcal D_0
        )
        \,dt\\
    &\quad+
    2\int_{m_0}^{\infty}
        t\,
        \mathbb P(
            M_0>t
            \mid\mathcal D_0
        )
        \,dt.
\end{aligned}
\]
The first term is bounded by
\[
    2\int_0^{m_0}t\,dt
    =
    (m_0)^2.
\]
For the second term, write $t=m_0 + u$ and use \eq{eq:Borell-TIS-union}:
\[
\begin{aligned}
    &2\int_{m_0}^{\infty}
        t\,
        \mathbb P(
            M_0>t
            \mid\mathcal D_0
        )
        \,dt\\
    &\qquad\le
    4\int_0^\infty
        (m_0+u)
        \exp\left(
            -\frac{u^2}{2\sigma_0^2}
        \right)
        \,du.
\end{aligned}
\]
Both integrals on the right-hand side are finite, since
\[
    \int_0^\infty
        \exp\left(
            -\frac{u^2}{2\sigma_0^2}
        \right)
        \,du
    <\infty
\]
and
\[
    \int_0^\infty
        u
        \exp\left(
            -\frac{u^2}{2\sigma_0^2}
        \right)
        \,du
    =
    \sigma_0^2
    <\infty.
\]
Therefore
\begin{align*}
 \EE[
 M_0^2
 \mid \cD_0
 ]
 < \infty.
\end{align*}
\end{proof}

Based on lemma~\ref{lem:second-moment}, we derive the proof for Proposition~\ref{prop:one-shot-error}
\begin{proof}

From the definition of $g_0$, 
\begin{align}
 f(\*x) = \mu_0(\*x) + g_0(\*x). 
 \label{eq:de-centering-posterior}
\end{align}
%
Under our assumptions, the posterior mean function is continuous on the compact set $\cX$. 
Consequently,
\[
    \|\mu_0\|_\infty
    \coloneqq
    \sup_{x \in \cX} |\mu_0(\*x)|
    <
    \infty.
\]
By \eq{eq:de-centering-posterior},
\[
    \sup_{\*x \in \cX} |f(\*x)|
    \le
    \|\mu_0\|_\infty + M_0.
\]
Therefore, from $(a + b)^2 \leq 2 a^2 + 2 b^2$, 
\[
    \left(
        \sup_{\*x \in \cX} |f(\*x)|
    \right)^2
    \le
    2\|\mu_0\|_\infty^2
    +
    2 M_0^2.
\]
Taking conditional expectations and using \eq{eq:second-moment-M0} gives
\[
\begin{aligned}
    C_{\mathcal D_0}
    &:=
    \mathbb E\left[
        \left.
        \left(
            \sup_{x\in\mathcal X}|f(x)|
        \right)^2
        \right|
        \mathcal D_0
    \right]\\
    &\le
    2\|\mu_0\|_\infty^2
    +
    2\mathbb E[
        M_0^2
        \mid\mathcal D_0
    ]\\
    &<
    \infty.
\end{aligned}
\]




For every realization of the posterior sample path, future observation noises, and algorithmic randomization,
\[
\begin{aligned}
    0
    \le
    f^\star-f(\*x_t)
    &\le
    |f^\star|+|f(\*x_t)|\\
    &\le
    2\sup_{\*x \in \cX}|f(\*x)|.
\end{aligned}
\]
Note that this inequality is pathwise and is independent of how $\*x_t$ was selected.
Summing over $t=1,\ldots,T$ yields
\[
    0
    \le
    R_T
    \le
    2T\sup_{\*x \in \cX}|f(\*x)|.
\]
Hence
\[
    R_T^2
    \le
    4T^2
    \left(
        \sup_{\*x \in \cX}|f(\*x)|
    \right)^2.
\]
Taking conditional expectations and using \eq{eq:second-moment-M0},
\[
    \EE[
        R_T^2
        \mid \cD_0
    ]
    \le
    4 C_{\cD_0}T^2.
\]
Therefore, we see
\[
    \operatorname{Var}
    (R_T\mid\mathcal D_0)
    \le
    4C_{\mathcal D_0}T^2.
\]




Since the MC estimator
$\widehat{\operatorname{BR}}_{T}$ 
is unbiased and each $R_T$ is independent, 
\begin{align}
\begin{aligned}
    &\mathbb E\left[
        \left.
        \left(
            \widehat{\operatorname{BR}}_{T}(\*\theta)
            -
            \operatorname{BR}_T(\*\theta)
        \right)^2
        \right|
        \mathcal D_0
    \right]\\
    &\qquad=
    \operatorname{Var}\left(
        \left.
        \widehat{\operatorname{BR}}_{T}(\*\theta)
        \right|
        \mathcal D_0
    \right)\\
    &\qquad=
    \frac{1}{N}
    \operatorname{Var}(R_T\mid\mathcal D_0)\\
    &\qquad\le
    \frac{4C_{\mathcal D_0}T^2}{N}.
\end{aligned}
\label{eq:variance-BR}
\end{align}

(Conditional) Chebyshev's inequality and \eq{eq:variance-BR} derive
\[
 \mathbb P\left(
 \left.
 \left|
 \widehat{\operatorname{BR}}_{T}(\*\theta)
 -
 \operatorname{BR}_T(\*\theta)
 \right|
 > \eta
 \right|
 \cD_0
 \right)
 \le
 \frac{4C_{\cD_0}T^2}{N\eta^2}.
\]
Taking
$\eta = 2 T \sqrt{\frac{C_{\cD_0}}{N \delta}}$
gives
\[
    \mathbb P\left(
        \left.
        \left|
            \widehat{\operatorname{BR}}_{T}(\*\theta)
            -
            \operatorname{BR}_T(\*\theta)
        \right|
        \le
        2T\sqrt{\frac{C_{\mathcal D_0}}{N\delta}}
        \right|
        \mathcal D_0
    \right)
    \ge
    1-\delta
\]
for $\delta \in (0,1)$. 
Therefore, we obtain
\[
    \widehat{\operatorname{BR}}_{T}(\*\theta)
    -
    \operatorname{BR}_T(\*\theta)
    =
    O_p\left(\frac{T}{\sqrt N}\right)
\]
conditionally on $\mathcal D_0$.


\end{proof}

\section{Bounds of Canonical Pseudo-Metric}
\label{app:bounds-canonical-metric}

Suppose first that $k$ is the Gaussian kernel
\[
    k(\*x,\*x')
    =
    \sigma_f^2
    \exp\left(
        -\frac{\|\*x-\*x'\|_2^2}{2\ell^2}
    \right),
\]
where $\ell > 0$ is the length-scale parameter and $\sigma_f > 0$ is the scale parameter.
Writing $r=\|\*x - \*x'\|_2$, we obtain
\[
\begin{aligned}
    d(\*x,\*x')^2
    &=
    2\sigma_f^2
    \left(
        1-
        \exp\left\{-\frac{r^2}{2\ell^2}\right\}
    \right)\\
    &\le
    \frac{\sigma_f^2}{\ell^2}r^2,
\end{aligned}
\]
where $1-e^{-a} \le a$ for $a \ge 0$ was used. 
Hence
\[
    d(\*x,\*x')
    \le
    \frac{\sigma_f}{\ell}\|\*x - \*x'\|_2.
\]




We define the Mat\'ern kernel by
\[
    k_\nu(\mathbf x,\mathbf x')
    =
    \sigma_f^2
    \frac{2^{1-\nu}}{\Gamma(\nu)}
    \left(
        \frac{\sqrt{2\nu}}{\ell}
        \|\mathbf x-\mathbf x'\|_2
    \right)^\nu
    K_\nu\left(
        \frac{\sqrt{2\nu}}{\ell}
        \|\mathbf x-\mathbf x'\|_2
    \right),
\]
where $\nu>0$ is the smoothness parameter, $\ell>0$ is the length-scale parameter, $\sigma_f > 0$ is the scale parameter, $\Gamma(\cdot)$ denotes the Gamma function, and $K_\nu(\cdot)$ denotes the modified Bessel function of the second kind of order $\nu$.
%
For the canonical pseudo-metric of a Mat\'ern kernel, it is known that 
\[
    \gamma(\*h)
    =
    \frac12
    \mathbb E[
        \{f(\*x + \*h) - f(\*x)\}^2
    ]
    =
    \begin{cases}
        O(\|\*h\|^{2\nu}), & 0<\nu<1,\\
        O(\|\*h\|^2\log(1/\|\*h\|)), & \nu=1,\\
        O(\|\*h\|^2), & \nu>1,
    \end{cases}
    \qquad \|\*h\|\downarrow0.
\]
See \citep{stein1999interpolation} for detail.
%
Let $r = \|\*h\|_2$.
The case of $\nu = 1$ is bounded by
\[
 O(r^2 | \log(1/r) | ) 
\]
Since any positive power dominates a logarithmic factor as 
$r \downarrow 0$, 
\[
 r^{1-\alpha}
 \sqrt{|\log r|} \rightarrow 0
\]
for any $\alpha \in (0,1)$ and $r \downarrow 0$.
This indicates for sufficiently small $r$
\[
 r \sqrt{ |\log r | } \leq r^\alpha. 
\]
Since
\[
    d(\*x,\*x')
    =
    \sqrt{
    2\gamma(\*x - \*x'),
    }
\]
it follows that, for sufficiently small $\| \*x - \*x' \|_2$ and some $L<\infty$,
\[
    d(\*x, \*x')
    \le
    L \|\*x-\*x'\|_2^\alpha,
\]
where $\alpha=\nu$ for $0<\nu<1$, any $\alpha \in (0,1)$ for $\nu=1$,
and $\alpha=1$ for $\nu>1$.

\section{Proof of Corollary~\ref{coro:optimization-error}}
\label{app:proof-optimization-error}

Define
\[
    \Delta_N
    \coloneqq
    \max_{\*\theta\in\Theta}
    \left|
        \widehat{\mr{BR}}_{T}(\*\theta)-\mr{BR}_T(\*\theta)
    \right|.
\]
Since $\Theta$ is assumed to be a finite set, we can write $\Theta=\{\*\theta_1,\ldots,\*\theta_K\}$ for the number of candidate configurations $K > 0$. 
For any $\varepsilon>0$ and each $k=1,\ldots,K$, Proposition~\ref{prop:one-shot-error} implies that there exist constants $M_k<\infty$ and $N_k$ such that, for all $N\ge N_k$,
\[
    \Pr\!\left(
        \left|
            \widehat{\mr{BR}}_{T,N}(\*\theta_k)-\mr{BR}_T(\*\theta_k)
        \right|
        >
        M_k\frac{T}{\sqrt{N}}
    \right)
    <
    \frac{\varepsilon}{K}.
\]
Let $M=\max_{1\le k\le K}M_k$ and
$N_0=\max_{1\le k\le K}N_k$. Then, by the union bound, for all
$N\ge N_0$,
\[
\begin{aligned}
    \Pr\!\left(
        \Delta_N>M\frac{T}{\sqrt{N}}
    \right)
    &\le
    \sum_{k=1}^K
    \Pr\!\left(
        \left|
            \widehat{\mr{BR}}_{T,N}(\*\theta_k)-\mr{BR}_T(\*\theta_k)
        \right|
        >
        M\frac{T}{\sqrt{N}}
    \right) \\
    &<
    \varepsilon.
\end{aligned}
\]
Therefore, we obtain $\Delta_N=O_p(T/\sqrt{N})$.

By the definition of $\widehat{\*\theta}_N$,
\[
    \widehat{\mr{BR}}_{T}(\widehat{\*\theta}_N)
    \le
    \widehat{\mr{BR}}_{T}(\*\theta^\star).
\]
Therefore,
\begin{align*}
    \mr{BR}_T(\widehat{\*\theta}_N)-\mr{BR}_T(\*\theta^\star)
    &=
    \bigl[
        \mr{BR}_T(\widehat{\*\theta}_N)
        -\widehat{\mr{BR}}_{T,N}(\widehat{\*\theta}_N)
    \bigr] \\
    &\quad+
    \bigl[
        \widehat{\mr{BR}}_{T,N}(\widehat{\*\theta}_N)
        -\widehat{\mr{BR}}_{T,N}(\*\theta^\star)
    \bigr] \\
    &\quad+
    \bigl[
        \widehat{\mr{BR}}_{T,N}(\*\theta^\star)
        -\mr{BR}_T(\*\theta^\star)
    \bigr] \\
    &\le
    \left|
        \mr{BR}_T(\widehat{\*\theta}_N)
        -\widehat{\mr{BR}}_{T,N}(\widehat{\*\theta}_N)
    \right|
    +
    \left|
        \widehat{\mr{BR}}_{T,N}(\*\theta^\star)
        -\mr{BR}_T(\*\theta^\star)
    \right| \\
    &\le
    2\Delta_N.
\end{align*}
Since $\Delta_N=O_p(T/\sqrt{N})$, it follows that
\[
    \mr{BR}_T(\widehat{\*\theta}_N)-\mr{BR}_T(\*\theta^\star)
    =
    O_p\!\left(\frac{T}{\sqrt{N}}\right),
\]
which proves the result.
